\newcommand{\mw}[1]{{\color{cyan}[MW: #1]}}
\newcommand{\diff}{\mathrm{d}}
\newcommand{\pscore}{\sbb_{\parallel}}
\newcommand{\Xyt}{X_{t}^y}
\newcommand{\tXytb}{\tilde{X}_{t}^{t, \leftarrow}}
\newcommand{\ttXytb}{\tilde{X}_{t}^{t, \Leftarrow}}
\newcommand{\px}{x_{\parallel}}
\newcommand{\ox}{x_{\perp}}
\newcommand{\subangle}[2]{\angle({#1}, {#2})}
\newcommand{\incre}[2]{\texttt{AvgReward}^+({#1}, {#2})}
\newcommand{\Clip}{C_{\rm lip}}
\newcommand{\clip}{c_{\rm lip}}
\newcommand{\dtv}{\texttt{d}_{\rm TV}}
\newcommand{\dlabel}{\cD_{\rm label}}
\newcommand{\dunlabel}{\cD_{\rm unlabel}}
\newcommand{\subopt}{\texttt{SubOpt}}
\newcommand{\dshift}{\texttt{DistroShift}}
\newcommand{\bXb}{X^{\leftarrow}}
\title{Reward-Directed Conditional Diffusion: Provable Distribution Estimation and Reward Improvement}
\author{Hui Yuan}
\author{Kaixuan Huang}
\author{Chengzhuo Ni}
\author{Minshuo Chen}
\author{Mengdi Wang\thanks{Authors' emails are: {\texttt {\{huiyuan, kaixuanh, cn10, mc0750, mengdiw\}@princeton.edu}}.}}
\affil{Department of Electrical and Computer Engineering\\ Princeton University}
\begin{document}

\maketitle

\begin{abstract}
We explore the methodology and theory of reward-directed generation via conditional diffusion models. Directed generation aims to generate samples with desired properties as measured by a reward function, which has broad applications in generative AI, reinforcement learning, and computational biology. We consider the common learning scenario where the data set consists of unlabeled data along with a smaller set of data with noisy reward labels. Our approach leverages a learned reward function on the smaller data set as a pseudolabeler. From a theoretical standpoint, we show that this directed generator can effectively learn and sample from the reward-conditioned data distribution. Additionally, our model is capable of recovering the data's latent subspace representation. Moreover, we establish that the model generates a new population that moves closer to a user-specified target reward value, where the optimality gap aligns with the off-policy bandit regret in the feature subspace. The improvement in rewards obtained is influenced by the interplay between the strength of the reward signal, the distribution shift, and the cost of off-support extrapolation. 
We provide empirical results to validate our theory and highlight the relationship between the strength of extrapolation and the generated samples' quality.
\end{abstract}

\section{Introduction}\label{sec:intro}

Controlling the behavior of generative models towards desired properties is a major problem for deploying deep learning models for real-world usage. As large and powerful pre-trained generative models achieve steady improvements over the years, one increasingly important question is how to adopt generative models to fit the needs of a specific domain and to ensure the generation results satisfy certain constraints (e.g., safety, fairness, physical constraints) without sabotaging the power of the original pre-trained model \citep{ouyang2022training, li2022diffusion, zhang2023adding, schick2020self}.


In this paper, we focus on directing the generation of diffusion models \citep{ho2020denoising, song2020denoising}, a family of score-matching generative models that have demonstrated the state-of-the-art performances in various domains, such as image generation \citep{rombach2022high, ramesh2022hierarchical, balaji2022ediffi} and audio generation, with fascinating potentials in broader domains, including text modeling \citep{austin2021structured, li2022diffusion}, reinforcement learning \citep{janner2022diffuser, ajay2023is, pearce2023imitating, liang2023adaptdiffuser} and protein structure modeling \citep{LeeProtein}.
Diffusion models are trained to predict a clean version of the noised input, and generate data by sequentially removing noises and trying to find a cleaner version of the input.
The denoising network (a.k.a. score network) $s(x,t)$ approximates the score function $\nabla \log p_t(x)$  \citep{song2020sliced, song2020score}, and controls the behavior of diffusion models. People can incorporate  any control information $c$ as an additional input to $s(x,c, t)$ during the training and inference \citep{ramesh2022hierarchical, zhang2023adding}.

 
We abstract various control goals as a scalar reward $y$, measuring how well the generated instance satisfies our desired properties. In this way, the directed generation problem becomes finding plausible instances with higher rewards and can be tackled via reward-conditioned diffusion models. The subtlety of this problem lies in that the two goals potentially conflict with each other: diffusion models are learned to generate instances \textit{similar to} the training distribution, while maximizing the rewards of the generation drives the model to \textit{deviate from} the training distribution. In other words, the model needs to ``interpolate" and ``extrapolate" at the same time.  A higher value of $y$ provides a stronger signal that guides the diffusion model towards higher rewards, while the increasing distribution shift may hurt the generated samples' quality.  In the sequel, we provide theoretical guarantees for the reward-conditioned diffusion models, aiming to answer the following question:

{\it How to provably estimate the reward-conditioned distribution via diffusion? How to balance the reward signal and distribution-shift effect, and ensure reward improvement in generated samples?}

\textbf{Our Approach.} To answer both questions, we consider a semi-supervised learning setting, where we are given a small dataset $\cD_{\rm label}$ with annotated rewards and a massive unlabeled dataset $\cD_{\rm unlabel}$. We estimate the reward function using $\dlabel$ and then use the estimator for pseudo-labeling on $\dunlabel$. Then we train a reward-conditioned diffusion model using the pseudo-labeled data. Our approach is illustrated in Figure~\ref{fig:approach}.
In real-world applications, there are other ways to incorporate the knowledge from the massive dataset $\dunlabel$, e.g., finetuning from a pre-trained model \citep{ouyang2022training, zhang2023adding}. We focus on the pseudo-labeling approach, as it provides a cleaner formulation and exposes the error dependency on data size and distribution shift. The intuition behind and the message are applicable to other semi-supervised approaches; see experiments in Section~\ref{sec:experiment:2}.



From a theoretical standpoint,  we consider data point $x$ having a latent linear representation. Specifically, we assume $x = Az$ for some matrix $A$ with orthonormal columns and $z$ being a latent variable. The latent variable often has a smaller dimension, reflecting the fact that practical data sets often exhibit intrinsic low-dimensional structures \citep{gong2019intrinsic, tenenbaum2000global, pope2021intrinsic}. The representation matrix $A$ should be learned to promote sample efficiency and generation quality \citep{chen2023score}. 
Our theoretical analysis reveals an intricate interplay between reward guidance, distribution shift, and implicit representation learning; see Figure~\ref{fig:3-figure} for illustration.

\textbf{Contributions.} Our results are summarized as follows. 

\noindent {1).} We show that the reward-conditioned diffusion model implicitly learns the latent subspace representation of $x$. Consequently, the model provably generates high-fidelity data that stay close to the subspace (Theorem \ref{thm:fidelity}). 
\vspace{-0.2em}

\noindent {2).} Given a target reward value, we analyze the regret of reward-directed generation, measured by the difference between the target value and the average reward of the generated population. In the case of a linear reward model, we show that the regret mimics the off-policy regret of linear bandits with full knowledge of the subspace feature. In other words, the reward-conditioned generation can be viewed as a form of off-policy bandit learning in the latent feature space (Theorem \ref{thm:parametric}). 
\vspace{-0.2em}

\noindent {3).} We further extend our theory to nonparametric reward and distribution configurations where reward prediction and score matching are approximated by general function class, which covers the wildly adopted ReLU Neural Networks in real-world implementation (Section \ref{sec:nonpara_short} and Appendix \ref{sec:nonparametric}). 
\vspace{-0.2em}

\noindent {4).} We provide numerical experiments on both synthesized data and text-to-image generation to support our theory (Section~\ref{sec:experiment}).
\vspace{-0.2em}

To our best knowledge, our results present the first statistical theory for conditioned diffusion models and provably reward improvement guarantees for reward-directed generation.

\begin{figure}[htb!]
    \centering
    \includegraphics[width = 1\textwidth]{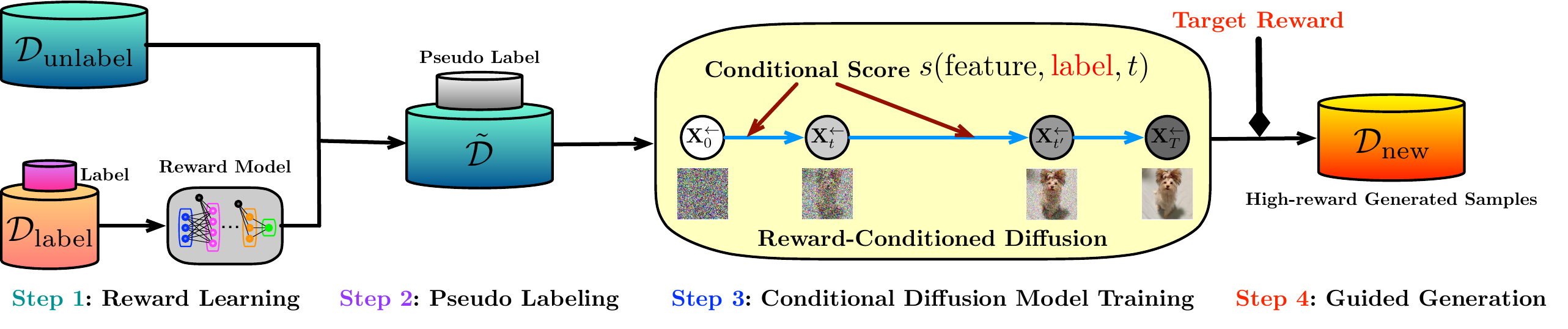} 
    \caption{\textbf{Overview of reward-directed generation via conditional diffusion model.} We estimate the reward function from the labeled dataset. Then we compute the estimated reward for each instance of the unlabeled dataset. Finally, we train a reward-conditioned diffusion model using the pseudo-labeled data. Using the reward-conditioned diffusion model, we are able to generate high-reward samples. }
\label{fig:approach}
\end{figure}

\begin{figure}[htb!]
    \begin{subfigure}[h]{1\textwidth}
    \centering
    \includegraphics[width = 1\textwidth]{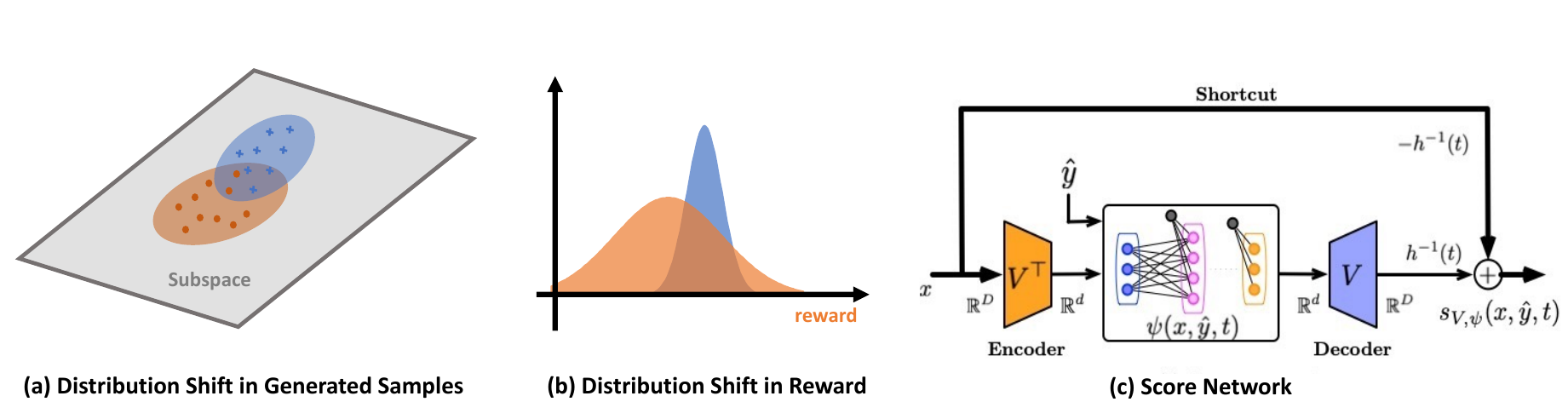}
    \end{subfigure}   
    \caption{\textbf{Illustrations of distribution shifts in samples, reward,  and encoder-decoder score network.} When performing reward-directed conditional diffusion, \textbf{(a)} the distribution of the generated data shifts, but still stays close to the feasible data support;  \textbf{(b)} the distribution of rewards for the next generation shifts and the mean reward improves. \textbf{(c)}. The score network for reward-directed conditioned diffusion adopts an Encoder-Decoder structure.}
\label{fig:3-figure}
\vspace{-6pt}
\end{figure}
\section{Related Work}
\label{sec:related}

\paragraph{Guided Diffusions.}
For image generations, guiding the backward diffusion process towards higher log probabilities predicted by a classifier (which can be viewed as the reward signal) leads to improved sample quality, where the classifier can either be separated trained, i.e., classifier-guided \citep{dhariwal2021diffusion} or implicitly specified by the conditioned diffusion models, i.e., classifier-free \citep{ho2022classifier}. Classifier-free guidance has become a standard technique in the state-of-the-art text-to-image diffusion models \citep{rombach2022high, ramesh2022hierarchical, balaji2022ediffi}. Other types of guidance are also explored \citep{nichol2021glide, graikos2022diffusion}. 

Similar ideas have been explored in sequence modelling problems. In offline reinforcement learning, Decision Diffuser~\citep{ajay2023is} is a diffusion model trained on offline trajectories and can be conditioned to generate new trajectories with high returns, satisfying certain safety constraints, or composing skills. For discrete generations, Diffusion LM~\citep{li2022diffusion} manages to train diffusion models on discrete text space with an additional embedding layer and a rounding step. The authors further show that gradients of any classifier can be incorporated to control and guide the text generation.

\vspace{-4pt}

\paragraph{Theory of Diffusion Models} A line of work studies diffusion models from a sampling perspective. When assuming access to a score function that can accurately approximate the ground truth score function in $L^\infty$ or $L^2$ norm, \cite{chen2022sampling, lee2023convergence} provide polynomial convergence guarantees of score-based diffusion models. ``Convergence of denoising diffusion models under the manifold hypothesis'' by Valentin De Bortoli further studies diffusion models under the manifold hypothesis. Recently, \cite{chen2023score} and \cite{oko2023diffusion} provide an end-to-end analysis of diffusion models. In particular, they develop score estimation and distribution estimation guarantees using the estimated score function. These results largely motivate our theory, whereas, we are the first to consider conditional score matching and statistical analysis of conditional diffusion models.


\vspace{-4pt}
\paragraph{Connection to Offline Bandit/RL} Our off-policy regret analysis of generated samples is related to offline bandit/RL theory \cite{munos2008finite, liu2018breaking, chen2019information, fan2020theoretical, jin2021pessimism, nguyen2021offline, brandfonbrener2021offline}. In particular, our theory extensively deals with distribution shift in the offline data set by class restricted divergence measures, which are commonly adopted in offline RL. Moreover, our regret bound of generated samples consists of an error term that coincides with off-policy linear bandits. However, our analysis goes far beyond the scope of bandit/RL.

\section{Reward-Directed Generation via Conditional Diffusion Models}
\label{sec:alg}
In this section, we develop a conditioned diffusion model-based method to generate high-fidelity samples with desired properties. In real-world applications such as image/text generation and protein design, one often has access to abundant unlabeled data, but relatively limited number of labeled data. This motivates us to consider a semi-supervised learning setting.

\noindent {\bf Notation}:
$P_{xy}$ denotes ground truth joint distribution of $x$ and its label $y$, $P_x$ is the marginal of $x$. Any piece of data in $\dlabel$ follows $P_{xy}$ and any data in $\dunlabel$ follows $P_x$. $P$ is used to denote a distribution and $p$ denotes its corresponding density. $P(x \mid y=a)$ and $P(x, y=a)$ are the conditionals of $P_{xy}$
Similarly, we also use notation $P_{x \hat{y}}$, $P(x \mid \hat{y}=a)$ for the joint and conditional of $(x, \hat{y})$, where $\hat y$ is predicted by the learnt reward model. Also, denote a generated distribution using diffusion by $\hat{P}$ (density $\hat{p}$) followed by the same argument in parentheses as the true distribution it approximates, e.g. $\hat{P}(x \mid y=a )$ is generated as an approximation of $P(x \mid y=a )$. 

\subsection{Problem Setup}
Suppose we are given an unlabeled data set $\dunlabel = \{x_j\}_{j=1}^{n_1}$ and a labeled data set $\dlabel = \{(x_i, y_i)\}_{i=1}^{n_2}$, where it is often the case that $n_1\gg n_2$.
Assume without loss of generality that $\dlabel$ and $\dunlabel$ are independent. In both datasets, suppose $x$ is sampled from an unknown population distribution $P_x$. In our subsequent analysis, we focus on the case where $P_x$ is supported on a latent subspace, meaning that the raw data $x$ admits a low-dimensional representation (see Assumption \ref{assumption:subspace}). We model $y$ as a noisy measurement of a reward function determined by $x$, given by 
$$y = f^*(x) + \xi \quad \text{for} \quad \xi \sim {\sf N}(0, \sigma^2) \quad \text{with} \quad 1> \sigma > 0.$$ 
A user can specify a target reward value, i.e., $y = a$. Then the objective of directed generation is to sample from the conditional distribution $P(x|y =a)$. Given $f^*, P_x$ or the low-dimensional support of $P_x$ are unknown, we need to learn these unknowns explicitly and implicitly through reward-conditioned diffusion.


\subsection{Meta Algorithm}
\begin{algorithm}[ht]
\caption{Reward-Conditioned Generation via Diffusion Model (RCGDM)}
\label{alg:cdm}
\begin{algorithmic}[1]
\STATE {\bf Input}: Datasets $\dunlabel$, $\dlabel$, target reward value $a$, early-stopping time $t_0$, noise level $\nu$.\\
(Note: in the following psuedo-code, $\phi_t(x)$ is the Gaussian density and $\eta$ is the step size of discrete backward SDE, see \S\ref{sec:cond_diff} for elaborations on conditional diffusion)
\STATE {\bf Reward Learning}: Estimate the reward function by 
\begin{align}\label{eq:reward_regression}
\hat{f} \in \argmin_{f \in \cF} \sum_{(x_i, y_i) \in \dlabel} \ell(f(x_i), y_i),
\end{align}
where $\ell$ is a loss and $\cF$ is a function class.
\STATE {\bf Pseudo labeling}: Use the learned function $\hat{f}$ to evaluate unlabeled data $\dunlabel$ and augment it with pseudo labeles: $\tilde{\cD} = \{(x_j, \hat{y}_j) = \hat{f}(x_j) + \xi_j \}_{j=1}^{n_1}$ for $\xi_j \overset{\text{i.i.d.}}{\sim} {\sf N}(0, \nu^2)$.
\STATE {\bf Conditional score matching}: \label{line_mtc} Minimize over $s \in \cS$ ($\cS$ constructed as \ref{equ:function_class}) on data set $\tilde{\cD}$ via 
\begin{align}
\label{equ:scr_mtc}
\hat{s} \in \argmin_{s \in \cS} \int_{t_0}^T \hat{\EE}_{(x, \hat{y}) \in \tilde{\cD}} \EE_{x^{\prime} \sim {\sf N}(\alpha(t)x, h(t)I_D)}\left[\norm{\nabla_{x^{\prime}} \log \phi_t(x^{\prime} | x) - s(x^{\prime}, \hat{y}, t)}_2^2\right] \diff t.
\end{align}
\STATE {\bf Conditioned generation}: Use the estimated score $\hat{s}(\cdot, a, \cdot)$ to sample from the backward SDE: \label{line_bcw}
\begin{align}
\label{eq:backward_discrete}
\diff \ttXytb = \left[\frac{1}{2} \tilde{X}_{k\eta}^{y, \Leftarrow} + \hat{s}(\tilde{X}_{k\eta}^{y, \Leftarrow}, a, T - k\eta) \right] \diff t + \diff \overline{W}_t \quad \text{for} \quad t \in [k\eta, (k+1)\eta].
\end{align}
\STATE {\bf Return}: Generated population $\hat{P}(\cdot | \hat{y} = a)$, learned subspace representation $V$ contained in $\hat{s}$.
\end{algorithmic}
\end{algorithm}

In order to generate novel samples with both high fidelity and high rewards, we propose Reward-Conditioned Generation via Diffusion Models (RCGDM); see Algorithm \ref{alg:cdm} for details. By using the labeled data $\dlabel$, we approximately estimate the reward function $f^*$ by regression, then we obtain an estimated reward function $\hat{f}$. 
We then use $\hat{f}$ to augment the unlabeled data $\dunlabel$ with ``pseudo labeling" and additive noise, i.e., $\tilde{\cD} = \{(x_j, \hat{y}_j = \hat{f}(x_j)+\xi_j)\}_{j=1}^{n_1}$ with $\xi_j \sim {\sf N}(0, \nu^2)$ of a small variance $\nu^2$. Here, we added noise $\xi_j$ merely for technical reasons in the proof. We denote the joint distribution of $(x, \hat{y})$ as $P_{x\hat{y}}$. Next, we train a conditional diffusion model using the augmented dataset $\tilde{D}$. If we specify a target value of the reward, for example letting $y=a$, we can generate conditioned samples from the distribution $\hat P(x| \hat y =a) $ by backward diffusion. 

\textbf{Alternative methods.} In Line \ref{line_mtc}, Algorithm~\ref{alg:cdm} trains the conditional diffusion model via conditional score matching. This approach is suitable when we have access to  the unlabeled dataset and need to train a brand-new diffusion model from scratch. Empirically, we can utilize the pre-trained diffusion model of the unlabeled data directly and incorporate the knowledge of the data distribution. The alternative methods include classifier-based guidance~\citep{dhariwal2021diffusion}, fine-tuning \citep{zhang2023adding}, and self-distillation \citep{song2023consistency}, all sharing a similar spirit with Algorithm \ref{alg:cdm}. 
We  focus on Algorithm \ref{alg:cdm} for theoretical cleanness.

\subsection{Training of Conditional Diffusion Model}
\label{sec:cond_diff}
In this section, we provide details about the training and sampling of conditioned diffusion in Algorithm \ref{alg:cdm} (Line \ref{line_mtc}: conditional score matching and Line \ref{line_bcw}: conditional generation). 
In Algorithm \ref{alg:cdm}, conditional diffusion model is learned with $\tilde{\cD} = \{(x_j, \hat{y}_j = \hat{f}(x_j)+\xi_j)\}_{j=1}^{n_1}$, where $(x, \hat{y}) \sim P_{x \hat{y}}$. For simplicity, till the end of this section we use $y$ instead of $\hat{y}$ to denote the condition variable. The diffusion model is to approximate the conditional probability $P(x \mid \hat{y})$.

\textbf{Conditional Score Matching.} 
The working flow of conditional diffusion models is nearly identical to that of unconditioned diffusion models reviewed in Appendix~\ref{sec:pre}. A major difference is we learn a conditional score $\nabla \log p_t(x | y)$ instead of the unconditional one. Here $p_t$ denotes the marginal density function at time $t$ of the following forward O-U process,
\begin{align}\label{eq:forward}
\diff X_t^y = -\frac{1}{2} g(t) X_t^y \diff t + \sqrt{g(t)} \diff W_t \quad \text{with} \quad X_0^y \sim P_0(x | y) ~\text{and}~ t \in (0, T],
\end{align}
where similarly $T$ is a terminal time, $(W_t)_{t \geq 0}$ is a Wiener process, and the initial distribution $P_0(x | y)$ is induced by the $(x ,\hat{y})$-pair distribution $P_{x \hat{y}}$. Note here the noise is only added on $x$ but not on $y$. Throughout the paper, we consider $g(t) = 1$ for simplicity. We denote by $P_t(x_t | y)$ the distribution of $\Xyt$ and let $p_t(x_t | y)$ be its density and $P_t(x_t, y)$ be the corresponding joint, shorthanded as $P_t$. A key step is to estimate the unknown $\nabla \log p_t(x_t | y)$ through denoising score matching \citep{song2020score}. A conceptual way is to minimize the following quadratic loss with $\cS$, a concept class.
\begin{equation}\label{eq1}
    \argmin_{s \in \cS} \int_{0}^T \EE_{(x_t, y) \sim P_t} \left[\norm{\nabla \log p_t(x_t | y) - s(x_t, y, t)}_2^2\right] \diff t,
\end{equation}
Unfortunately, the loss in \eqref{eq1} is intractable since $\nabla \log p_t(x_t | y)$ is unknown. Inspired by \citet{hyvarinen2005estimation} and \citet{vincent2011connection}, we choose a new objective \eqref{equ:scr_mtc} and show their equivalence in the following Proposition. The proof is provided in Appendix~\ref{pf:equivalent_score_matching}. 
\begin{proposition}[\textbf{Score Matching Objective for Implementation}]
\label{prop:equivalent_score_matching}
For any $t > 0$ and score estimator $s$, there exists a constant $C_t$ independent of $s$ such that 

$\quad \quad \quad \quad \EE_{(x_t, y) \sim P_t} \left[\norm{\nabla \log p_t(x_t | y) - s(x_t, y, t)}_2^2\right]$
\begin{align}\label{eq:equivalent_score_matching}
 =\EE_{(x, y) \sim P_{x \hat{y}}} \EE_{x^{\prime} \sim {\sf N}(\alpha(t)x, h(t)I_D)}\left[\norm{\nabla_{x^{\prime}} \log \phi_t(x^{\prime} | x) - s(x^{\prime}, y, t)}_2^2\right] + C_t,
\end{align}
where $\nabla_{x'} \log \phi_t(x' | x) = -\frac{x' - \alpha(t)x}{h(t)}$, where $\phi_t(x' | x)$ is the density of ${\sf N}(\alpha(t)x, h(t)I_D)$ with $\alpha(t) = \exp(- t/2)$ and $h(t) = 1 - \exp(-t)$.
\end{proposition}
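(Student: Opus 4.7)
The plan is to mirror the classical denoising score matching identity of \citet{vincent2011connection} in the conditional setting. I would expand both sides of \eqref{eq:equivalent_score_matching} as quadratic forms, match the coefficient of $s$ and the coefficient of $\|s\|_2^2$ separately, and collect all $s$-independent pieces into the constant $C_t$. The two non-trivial observations to establish are: (i) the $\|s\|_2^2$ marginals agree, and (ii) the cross terms between $s$ and the true/conditional score agree.

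For (i), the forward process \eqref{eq:forward} only adds noise to $x$ while leaving $y$ fixed, so the joint density factors as $p_t(x_t, y) = p(y)\,p_t(x_t \mid y)$ with $p_t(x_t \mid y) = \int \phi_t(x_t \mid x)\,p_0(x \mid y)\,\diff x$. Substituting this convolution identity into the RHS gives
\begin{align*}
\EE_{(x,y)\sim P_{x\hat y}} \EE_{x' \sim \phi_t(\cdot \mid x)} \bigl[\|s(x', y, t)\|_2^2\bigr]
&= \int p(y)\!\int\!\!\int p_0(x\mid y)\phi_t(x'\mid x)\,\|s(x',y,t)\|_2^2\,\diff x\,\diff x'\,\diff y \\
&= \int p(y)\!\int p_t(x' \mid y)\,\|s(x',y,t)\|_2^2\,\diff x'\,\diff y,
\end{align*}
which equals the corresponding term on the LHS.

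For (ii), I use the key identity $\nabla_{x_t} p_t(x_t \mid y) = \int \nabla_{x_t} \phi_t(x_t \mid x)\, p_0(x \mid y)\,\diff x$, obtained by differentiating under the integral in the convolution formula. Then the LHS cross term becomes
\begin{align*}
\EE_{(x_t,y) \sim P_t}\!\bigl[\langle \nabla \log p_t(x_t\mid y),\, s(x_t, y, t)\rangle\bigr]
&= \int\!\!\int p(y)\,\nabla p_t(x_t\mid y)^{\!\top} s(x_t,y,t)\,\diff x_t\,\diff y \\
&= \int\!\!\int\!\!\int p(y)\,p_0(x\mid y)\,\phi_t(x_t\mid x)\,\nabla_{x_t}\!\log \phi_t(x_t\mid x)^{\!\top} s(x_t,y,t)\,\diff x\,\diff x_t\,\diff y,
\end{align*}
using $\nabla \phi_t = \phi_t \nabla \log \phi_t$. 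Recognizing $p(y)p_0(x\mid y) = p_{x\hat y}(x,y)$, this is precisely the corresponding cross term on the RHS.

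Finally, I would define
\begin{equation*}
C_t \;:=\; \EE_{(x_t,y)\sim P_t}\bigl[\|\nabla \log p_t(x_t\mid y)\|_2^2\bigr] \;-\; \EE_{(x,y)\sim P_{x\hat y}}\EE_{x'\sim \phi_t(\cdot\mid x)}\bigl[\|\nabla_{x'} \log \phi_t(x'\mid x)\|_2^2\bigr],
\end{equation*}
which manifestly does not depend on $s$, completing the identity. The only mild obstacle is verifying that differentiation and integration can be interchanged in the conditional convolution so that $\nabla p_t(x_t\mid y) = \int \nabla_{x_t}\phi_t(x_t\mid x)\,p_0(x\mid y)\,\diff x$; this is standard because $\phi_t$ is Gaussian with $h(t)>0$ for $t>0$, making the integrand smooth in $x_t$ with dominating integrable derivatives, so Leibniz's rule applies. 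Everything else is bookkeeping in the spirit of Vincent's original argument, carried out while treating $y$ as a fixed side information throughout.
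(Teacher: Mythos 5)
Your proof is correct and arrives at the same identity, but via a different route for the cross term than the paper. The paper handles the cross term $\EE_{(x_t,y)\sim P_t}\bigl[\langle \nabla \log p_t(x_t\mid y),\, s\rangle\bigr]$ by applying integration by parts twice: first it converts $\int \langle \nabla p_t,\, s\rangle$ into $-\int p_t\,\operatorname{div}(s)$, then after inserting the convolution structure it integrates by parts again to recover $\int \langle \nabla_{x'}\phi_t,\, s\rangle$. You instead differentiate under the integral sign in the convolution $p_t(x_t\mid y)=\int \phi_t(x_t\mid x)\,p_0(x\mid y)\,\diff x$ to write $\nabla_{x_t} p_t(x_t\mid y)=\int \phi_t(x_t\mid x)\,\nabla_{x_t}\log\phi_t(x_t\mid x)\,p_0(x\mid y)\,\diff x$, and then recognize the resulting triple integral as the desired expectation — no divergence and no boundary-term considerations are needed. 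This is arguably the more economical argument: it replaces two applications of integration by parts (each implicitly requiring that $p_t\, s$ vanish at infinity) with a single Leibniz-rule interchange, which is easy to justify since $\phi_t$ is a Gaussian with variance $h(t)>0$. You also explicitly verify that the $\|s\|_2^2$ marginals agree via the same convolution identity, whereas the paper absorbs that step into the expansion and focuses only on the inner-product term; both are bookkeeping, but spelling it out as you did makes the definition of $C_t$ transparent.
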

Equation~\eqref{eq:equivalent_score_matching} allows an efficient implementation, since $P_{x \hat{y}}$ can be approximated by the empirical data distribution in $\tilde{\mathcal{D}}$ and $x'$ is easy to sample. Integrating \eqref{eq:equivalent_score_matching} over time $t$ leads to a practical conditional score matching object
\begin{align}\label{eq:score_matching_practical}
\argmin_{s \in \cS} \int_{t_0}^T \hat{\EE}_{(x, y) \sim P_{x \hat{y}}} \EE_{x^{\prime} \sim {\sf N}(\alpha(t)x, h(t)I_D)}\left[\norm{\nabla_{x^{\prime}} \log \phi_t(x^{\prime} | x) - s(x^{\prime}, y, t)}_2^2\right] \diff t,
\end{align}
where $t_0 > 0$ is an early-stopping time to stabilize the training \citep{song2020improved, vahdat2021score} and $\hat{\EE}$ denotes the empirical distribution.

Constructing a function class $\cS$ adaptive to data structure is beneficial for learning the conditional score. In the same spirit of \cite{chen2023score}, we propose the score network architecture (see Figure \ref{fig:3-figure}(c) for an illustration): 
{\small
\begin{align}
\label{equ:function_class}
\cS = \bigg\{\sbb_{V, \psi}(x, y, t) = \frac{1}{h(t)} (V \cdot \psi (V^\top x, y, t) - x) & :~ V \in \RR^{D \times d}, ~\psi \in \Psi:\RR^{d+1} \times [t_0, T] \to \RR^d~\bigg\},
\end{align}}
with $V$ being any $D \times d$ matirx with orthonormal columns and $\Phi$ a customizable function class. This design has a linear encoder-decoder structure, catering for the latent subspace structure in data. Also $-\frac{1}{h(t)} x$ is includes as a shortcut connection.

\textbf{Conditioned Generation.}
Sampling from the model is realized by running a discretized backward process with step size $\eta > 0$ described as follows:
\begin{align}
\tag{\ref{eq:backward_discrete} revisited}
\diff \ttXytb = \left[\frac{1}{2} \tilde{X}_{k\eta}^{y, \Leftarrow} + \hat{s}(\tilde{X}_{k\eta}^{y, \Leftarrow}, y, T - k\eta) \right] \diff t + \diff \overline{W}_t \quad \text{for} \quad t \in [k\eta, (k+1)\eta].
\end{align}
initialized with $\ttXytb \sim {\sf N}(0, I_D)$ and $\overline{W}_t$ is a reversed Wiener process. Note that in \eqref{eq:backward_discrete}, the unknown conditional score $\nabla p_t(x | y)$ is substituted by $\hat{s}(x, y, t)$. 

\section{Main Theory}\label{sec:linear}
In this section, we analyze the conditional generation process specified by Algorithm \ref{alg:cdm}. We will focus on the scenario where samples $x$ admit a low-dimensional subspace representation, stated as the following assumption.
\begin{assumption}
\label{assumption:subspace}
Data sampling distribution $P_x$ is supported on a low-dimensional linear subspace, i.e., 
$x = Az$ for an unknown $A \in \RR^{D \times d}$ with orthonormal columns and $z \in \RR^d$ is a latent variable. 
\end{assumption}
Note that our setup covers the full-dimensional setting as a special case when $d = D$. Yet the case of $d < D$ is much more interesting, as practical datasets are rich in intrinsic geometric structures \citep{gong2019intrinsic, pope2021intrinsic, tenenbaum2000global}. 
Furthermore, the representation matrix $A$ may encode critical constraints on the generated data. For example, in protein design, the generated samples need to be similar to natural proteins and abide rules of biology, otherwise they easily fail to stay stable, leading to substantial reward decay. 
In those applications, off-support data may be risky and suffer from a large degradation of rewards, which we model using a function $h$ as follows.

\begin{assumption}
\label{assumption:linear_reward}
The ground truth reward $f^*(x) = g^*(\px) + h^*(\ox)$, where $g^*(\px) = (\theta^*)^\top \px$ where $\theta^* = A \beta^*$ for some $\beta^* \in \RR^d$ and $\norm{\theta^*}_2 = \norm{\beta^*}_2 = 1$ and $h^*(\ox)$ is non-decreasing in terms of $\norm{\ox}_2$ with $h^*(0) = 0$.
\end{assumption}
Assumption \ref{assumption:linear_reward} adopts a simple linear reward model for ease of presentation. In this case, 
we estimate $\theta^*$ by ridge regression, and \eqref{eq:reward_regression} in Algorithm~\ref{alg:cdm} becomes
$
\hat{\theta} = \argmin_{\theta}  \sum_{i=1}^{n_2} (\theta^\top x_i - y_i)^2 + \lambda \norm{\theta}_2^2
$
for a positive coefficient $\lambda$. 
Later in Section 3.3 and Appendix~\ref{sec:nonparametric}, we extend our results beyond linear models to deep ReLU networks.

\subsection{Conditional DM Learns Subspace Representation}

Recall that Algorithm \ref{alg:cdm} has two outputs: generated population $\hat{P}(\cdot | \hat{y} = a)$ and learned representation matrix $V$.  Use notation $\hat{P}_a:= \hat{P}(\cdot | \hat{y} = a)$ (generated distribution) and $P_a:= P(\cdot | \hat{y} = a)$ (target distribution) for better clarity in result presentation. To assess the quality of subspace learning, we utilize two metrics defined as
\begin{align}
\subangle{V}{A} = \norm{VV^\top - AA^\top}_{\rm F}^2 \text{\quad and \quad} \EE_{x \sim \hat{P}_a} [\|\ox\|_2].
\end{align}
$\subangle{V}{A}$ is defined for matrices $V, A$, where $A$ is the matrix encoding the ground truth subspace. Clearly, $\subangle{V}{A}$ measures the difference in the column span of $V$ and $A$, which is also known as the subspace angle. $\EE_{x \sim \hat{P}_a} [\|\ox\|]$ is defined as the expected $l_2$ distance between $x$ and the true subspace.  Theorem \ref{thm:fidelity} provides guarantees on this two metrics under following assumptions, proof and Interpretation of Theorem~\ref{thm:fidelity} are deferred to Appendix~\ref{pf:fidelity}.

To ease the presentation, we consider a Gaussian design on $x$, i.e. the latent $z$ is Gaussian as stated in Assumption \ref{assumption:gaussian_design}. Since our guarantee on $\subangle{V}{A}$ holds under milder assumption than Gaussian, we also list the Assumption \ref{asmp:tail}.
\begin{assumption}
\label{asmp:tail}
The latent variable $z$ follows distribution $P_{z}$ with density $p_z$, such that there exists constants $B, C_1, C_2$ verifying $p_{z}(z) \leq (2\pi)^{-(d+1)/2} C_1 \exp\left(-C_2 \norm{z}_2^2 / 2\right)$ whenever $\norm{z}_2 > B$. And ground truth score is realizable: $\nabla \log p_t(x \mid \hat y) \in \cS$.
\end{assumption}

\begin{assumption}\label{assumption:gaussian_design}
Further assume $z \sim {\sf N}(0, \Sigma)$ with its covariance matrix $\Sigma$ satisfying $\lambda_{\min}I_d \preceq \Sigma \preceq \lambda_{\max}I_d$ for $0 < \lambda_{\min} \leq \lambda_{\max} \leq 1$. 
\end{assumption}
\begin{theorem}[\textbf{Subspace Fidelity of Generated Data}] 
\label{thm:fidelity}
Under Assumption \ref{assumption:subspace}, if Assumption \ref{asmp:tail} holds with $c_0 I_d \preceq \EE_{z \sim P_z} \left[ z z^{\top}\right]$, then with high probability on data,
\begin{equation}
\label{eq:subspave_covering}
    \subangle{V}{A} = \tilde{\cO} \left( \frac{1}{c_0} \sqrt{\frac{\cN(\cS, 1/n_1) D} {n_1}} \right)
\end{equation}
with $\cN(\cS, 1/n_1)$ being the log covering number of function class $\cS$ as in \eqref{equ:function_class}. When Assumption \ref{assumption:gaussian_design} holds, $\cN(\cS, 1/n_1) = \cO((d^2 + Dd)\log (D d n_1))$  and thus $\subangle{V}{A} = \tilde{\cO} ( \frac{1}{\lambda_{\min}} \sqrt{\frac{(Dd^2 + D^2d)} {n_1}})$. Further under Assumption \ref{assumption:linear_reward}, it holds that
\begin{equation}
\label{eq:off_norm}
    \EE_{x \sim \hat{P}_a} [\|\ox\|_2] = \cO \left( \sqrt{t_0D} + \sqrt{\subangle{V}{A}}  \cdot \sqrt{\frac{a^2}{\|\beta^*\|_{\Sigma}} + d} \right),
\end{equation}
where $\beta^*$ is groundtruth parameter of linear model.
\end{theorem}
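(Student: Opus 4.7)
The plan is to prove the two assertions in sequence, starting with the subspace-angle bound and then transferring it to the samples produced by the backward SDE.

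First I would establish a population-level score-matching error bound for $\hat s$. Using the equivalent objective in Proposition~\ref{prop:equivalent_score_matching}, standard empirical-process arguments (symmetrization, chaining with the covering number $\mathcal N(\cS, 1/n_1)$, plus truncation of the heavy tails allowed by Assumption~\ref{asmp:tail}) give an ERM guarantee of the form
\begin{equation*}
\int_{t_0}^{T} \EE_{(x_t,\hat y)\sim P_t}\bigl\|\hat s(x_t,\hat y,t)-\nabla\log p_t(x_t\mid \hat y)\bigr\|_2^2\,\diff t \;=\;\tilde{\cO}\!\left(\sqrt{\cN(\cS,1/n_1)\,D/n_1}\right),
\end{equation*}
where realizability ($\nabla\log p_t\in\cS$) removes the approximation term. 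For the Gaussian-design case the network $\cS$ is parameterized by $V\in\RR^{D\times d}$ (giving $Dd$ effective parameters) and a linear-Gaussian $\psi$ that is determined by $d^2$ entries, so the usual log-covering bound yields $\cN(\cS,1/n_1)=\cO((d^2+Dd)\log(Ddn_1))$.

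Next I would translate the score-matching error into the subspace-angle bound \eqref{eq:subspave_covering}. Because $P_x$ is supported on $\mathrm{col}(A)$, the orthogonal component of the true score satisfies $(I-AA^\top)\nabla\log p_t(x\mid \hat y) = -(I-AA^\top)x/h(t)$, while by construction $(I-VV^\top)\hat s(x,\hat y,t)=-(I-VV^\top)x/h(t)$. Subtracting and squaring,
\begin{equation*}
\|\hat s-\nabla\log p_t\|_2^2 \;\ge\; \tfrac{1}{h(t)^2}\bigl\|(VV^\top-AA^\top)x\bigr\|_2^2.
\end{equation*}
Taking expectation over $x=Az$ and using $\EE[zz^\top]\succeq c_0 I_d$ gives $\EE\|(VV^\top-AA^\top)x\|_2^2\ge c_0\|(VV^\top-AA^\top)A\|_F^2$, and since $V,A$ have orthonormal columns one has $\|VV^\top-AA^\top\|_F^2\le 2\|(I-VV^\top)A\|_F^2$ up to constants. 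Picking a single not-too-small $t$ in the integral (so $h(t)$ is bounded below) and rearranging produces \eqref{eq:subspave_covering} with the $1/c_0$ prefactor.

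For \eqref{eq:off_norm} I would decompose $x=\px+\ox$ on a sample $x\sim\hat P_a$ generated by \eqref{eq:backward_discrete} and bound $\EE\|\ox\|_2$ by three contributions. (i) The early-stopping contribution: instead of running to $t=0$ the backward process stops at $t_0$, so the marginal still carries a Gaussian blur of variance $\cO(t_0)$ per coordinate, which after projecting onto the $D-d$ orthogonal directions yields $\sqrt{t_0 D}$. (ii) A misalignment contribution: at each step the learned drift projects towards $\mathrm{col}(V)$ rather than $\mathrm{col}(A)$, so the accumulated drift that leaks into the orthogonal direction is controlled by $\|(I-AA^\top)V\|\lesssim\sqrt{\subangle{V}{A}}$ times the typical magnitude of the in-subspace component $\|\px\|_2\asymp\|z\|_2$. (iii) To bound $\EE_{\hat P_a}\|z\|_2$ under conditioning on $\hat y=a$, I would use Assumptions \ref{assumption:linear_reward} and \ref{assumption:gaussian_design}: under Gaussian $z$ the conditional law of $z$ given the linear $y=\beta^{*\top}z+\xi=a$ has mean of order $a/\|\beta^*\|_\Sigma$ and covariance of order $I_d$, so $\EE\|z\|_2\le\sqrt{a^2/\|\beta^*\|_\Sigma+d}$. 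Combining (i)–(iii) yields \eqref{eq:off_norm}.

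The main obstacle will be step (ii): rigorously propagating the pointwise misalignment $\subangle{V}{A}$ through the \emph{discretized} backward SDE while tracking only the orthogonal-to-$A$ component. This requires a Girsanov-style or synchronous coupling argument between the process driven by $\hat s$ and an idealized process driven by the true conditional score, together with a Grönwall-type control that prevents the $\sqrt{\subangle{V}{A}}$ error from blowing up with the time horizon $T$; the fact that $-x/h(t)$ is built into $\hat s$ as a shortcut is what keeps the orthogonal dynamics contractive and ultimately gives a bound linear (up to $\sqrt{\cdot}$) in the subspace angle rather than exponential in $T$.
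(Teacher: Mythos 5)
Your high-level plan --- an ERM bound for conditional score matching, a translation of that error into the subspace angle, and a decomposition of $\|\ox\|_2$ into an early-stopping blur plus a misalignment term --- tracks the paper's outline, and your covering-number count in the Gaussian case is the same $(Dd+d^2)$ heuristic the paper uses. However, two steps do not go through as written. The inequality $\|\hat s-\nabla\log p_t\|_2^2 \ge \tfrac{1}{h(t)^2}\|(VV^\top-AA^\top)x\|_2^2$ does not follow from the two identities you quote: one projects $\nabla\log p_t$ onto the orthogonal complement of $\mathrm{col}(A)$ and the other projects $\hat s$ onto the orthogonal complement of $\mathrm{col}(V)$, which are different projections and cannot be subtracted. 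Projecting the difference onto $(I-VV^\top)$ instead gives $(I-VV^\top)(\hat s - \nabla\log p_t) = -\tfrac{1}{h(t)}(I-VV^\top)A\,u(A^\top x,\hat y,t)$, which involves the unknown mapping $u$ rather than $(VV^\top-AA^\top)x$. Converting this into an angle bound requires controlling the (co)variance of $u$; this is exactly what Lemma~3 of Chen et al.\ (2023) (which the paper invokes directly in \S\ref{pf:fidelity}) provides, and which your argument does not reproduce. Your version also drops the $t_0$-bookkeeping that the paper relies on: the paper writes $\subangle{V}{A} = \tilde{\cO}(t_0\,\epsilon_{diff}^2/c_0)$ with the $1/t_0$ inside $\epsilon_{diff}^2$ cancelling exactly, a cancellation that does not appear in your sketch.

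Second, your step (ii) --- propagating the misalignment through the discretized backward SDE by a Girsanov or synchronous-coupling argument, which you flag as the main obstacle --- is unnecessary and remains unfinished. The paper sidesteps the SDE analysis entirely: Lemma~\ref{lmm:diff_results} (itself a consequence of Theorem~3 of Chen et al.\ 2023) already gives that the output distribution $\hat P_a$ satisfies $(I-VV^\top)x \sim {\sf N}(0,\Lambda)$ with $\Lambda \prec ct_0 I_D$. With that in hand one simply writes $\|(I-AA^\top)x\|_2 \le \|(I-VV^\top)x\|_2 + \|(VV^\top-AA^\top)x\|_2$, bounds the first term in expectation by $\sqrt{ct_0D}$ from the Gaussian law, bounds the second by $\|VV^\top-AA^\top\|_{\rm F}\|x\|_2$, and then applies a second-moment bound on $\|x\|_2$ under $\hat P_a$ (Lemma~\ref{lmm:exp_x_norm}) to produce the $\sqrt{a^2/\|\beta^*\|_\Sigma + d}$ factor. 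Your contributions (i) and (iii) match the resulting terms, but without the exact Gaussian description of the orthogonal component, step (ii) in your proposal is not closed.
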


\subsection{Provable Reward Improvement via Conditional Generation}

Let $y^*$ be a target reward value and $P$ be a generated distribution. Define the suboptimality of $P$ as
$$\subopt(P; y^*) = y^*-  \EE_{x \sim P}[f^*(x)],$$
which measures the gap between the expected reward of $x \sim P$ and the target value $y^*$. In the language of bandit learning, this gap can also be viewed as a form of {\it off-policy regret}.
Given a target value $y^* = a$, we want to derive guarantees for $\subopt(\hat{P}_a; y^* = a)$, recall $\hat{P}_a: = \hat{P}(\cdot | \hat{y} = a)$ denotes the generated distribution. In Theorem \ref{thm:parametric}, we show $\subopt(\hat{P}_a; y^* = a)$ comprises of three components: off-policy bandit regret which comes from the estimation error of $\hat f$, on-support and off-support errors coming from approximating conditional distributions with diffusion.

\begin{theorem}[\textbf{Off-policy Regret of Generated Samples}]\label{thm:parametric}
Suppose Assumption~\ref{assumption:subspace}, \ref{assumption:linear_reward} and \ref{assumption:gaussian_design} hold. We choose $\lambda = 1$,
$t_0 = \left((Dd^2 + D^2d) / n_1\right)^{1/6}$ and $\nu = 1/\sqrt{D}$. With high probability, running Algorithm~\ref{alg:cdm} with a target reward value $a$ gives rise to 
\begin{align}
\nonumber& \subopt(\hat{P}_a ; y^* = a) \\
\label{equ:subopt}& \hspace{0.2in} \leq\underbrace{\sqrt{\operatorname{Tr}(\hat{\Sigma}_{\lambda}^{-1}\Sigma_{P_a})} \cdot \cO \left(\sqrt{\frac{d \log n_2}{n_2}} \right)}_{\cE_1:\textrm{off-policy bandit regret}} + \underbrace{\left| \EE_{P_a} [g^*(\px)] - \EE_{\hat{P}_a} [g^*(\px)]\right|}_{\cE_2:\textrm{on-support diffusion error}} + \underbrace{\EE_{\hat{P}_a} [h^*(\ox)]}_{\cE_3:\textrm{off-support diffusion error}},
\end{align}
where $\hat{\Sigma}_{\lambda}:= \frac{1}{n_2}(X^{\top} X+\lambda I)$ where $X$ is the stack matrix of $\dlabel$ and $\Sigma_{P_a} = \EE_{P_a} [xx^{\top}]$. 
\end{theorem}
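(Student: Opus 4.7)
The plan is to upper-bound the suboptimality by a triangle decomposition that inserts $\EE_{P_a}[g^*(\px)]$ as a bridge and uses the reward structure $f^* = g^* + h^*$:
\[
\subopt(\hat{P}_a; a) = \underbrace{\bigl(a - \EE_{P_a}[g^*(\px)]\bigr)}_{(\mathrm{A})} + \underbrace{\bigl(\EE_{P_a}[g^*(\px)] - \EE_{\hat{P}_a}[g^*(\px)]\bigr)}_{(\mathrm{B})} - \underbrace{\EE_{\hat{P}_a}[h^*(\ox)]}_{(\mathrm{C})}.
\]
Term (B) is trivially upper-bounded in absolute value by $\cE_2$. Since $h^*$ is non-decreasing with $h^*(0) = 0$, we have $h^*(\ox) \geq 0$, so $-(\mathrm{C}) \leq \EE_{\hat{P}_a}[h^*(\ox)] = \cE_3$. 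The remaining task is to identify term (A) with the bandit-regret contribution $\cE_1$.

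For term (A), under Assumption~\ref{assumption:subspace} the true distribution $P_a$ is supported on the subspace, so $\px = x$ and $g^*(\px) = (\theta^*)^\top x$ almost surely. Viewing $P_a$ as the joint law of $(x, \xi)$ conditioned on $\hat\theta^\top x + \xi = a$, I obtain
\[
(\mathrm{A}) = \EE_{P_a}[(\hat\theta^\top x + \xi) - (\theta^*)^\top x] = \EE_{P_a}[(\hat\theta - \theta^*)^\top x] + \EE_{P_a}[\xi].
\]
Under the Gaussian design (Assumption~\ref{assumption:gaussian_design}) the posterior correction $\EE_{P_a}[\xi]$ admits a closed-form Gaussian formula of order $O(\nu^2 |a|)$, which with the choice $\nu = 1/\sqrt{D}$ is absorbed into $\cE_1$. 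For the leading piece, Cauchy--Schwarz in the $\hat{\Sigma}_\lambda$ geometry yields
\[
\bigl|\EE_{P_a}[(\hat\theta - \theta^*)^\top x]\bigr| \leq \|\hat\theta - \theta^*\|_{\hat{\Sigma}_\lambda} \cdot \sqrt{\operatorname{Tr}\bigl(\hat{\Sigma}_\lambda^{-1} \Sigma_{P_a}\bigr)}.
\]
A standard self-normalized concentration bound for ridge regression (with $\lambda = 1$, subgaussian label noise $\xi \sim \mathsf{N}(0, \sigma^2)$, and $\|\theta^*\| = \|\beta^*\| = 1$) delivers $\|\hat\theta - \theta^*\|_{\hat{\Sigma}_\lambda} = O(\sqrt{d \log(n_2)/n_2})$ with high probability, completing the identification with $\cE_1$.

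The main obstacle is controlling the posterior correction $\EE_{P_a}[\xi]$ rigorously: while $\xi$ and $x$ are independent a priori, conditioning on $\hat y = a$ couples them, and the correction depends on the unknown variance of $\hat\theta^\top x$ under $P_x$. I would navigate this via the Gaussian design, which makes the marginal of $\hat\theta^\top x$ Gaussian with a tractable variance, so the correction becomes strictly higher-order given the choice $\nu = 1/\sqrt{D}$. A secondary concern is that the ridge estimator $\hat\theta \in \RR^D$ is unconstrained while $\theta^* = A\beta^*$ lies in a $d$-dimensional subspace; however, the $\hat{\Sigma}_\lambda$-norm inherits the data-driven geometry, so the self-normalized bound depends on the effective subspace dimension $d$ rather than the ambient $D$, matching the $\sqrt{d\log n_2/n_2}$ rate advertised in $\cE_1$.
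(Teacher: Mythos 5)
Your decomposition and argument match the paper's essentially verbatim: the paper also splits $\subopt(\hat{P}_a; a)$ into $\cE_1 + \cE_2 + \cE_3$ by inserting the target distribution $P_a$ as a bridge, identifies $\cE_1$ with $\EE_{P_a}\bigl[|(\hat{\theta}-\theta^*)^\top x|\bigr]$ via the constraint $\hat{f}(x)\approx a$ under $P_a$, and then applies Cauchy--Schwarz in the $\hat{\Sigma}_\lambda$-geometry together with the self-normalized concentration bound (exploiting $X=ZA^\top$ to obtain $\sqrt{d\log n_2}$ rather than $\sqrt{D\log n_2}$) exactly as you describe. One small caveat: your claim that the posterior correction $\EE_{P_a}[\xi]=\nu^2 a/(\hat{\beta}^\top\Sigma\hat{\beta}+\nu^2)$ is ``absorbed into $\cE_1$'' is not rigorous as stated, since $\cE_1$ decays with $n_2$ while this correction does not; however, the paper's own proof silently drops the same term by asserting $\EE_{P_a}[\hat{f}(x)]=a$, so your explicit flagging of the issue is, if anything, an improvement in bookkeeping rather than a departure from the paper's route.
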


\paragraph{Implications and Discussions:} 

\noindent {1).} Equation \eqref{equ:subopt} decomposes the suboptimality gap into two separate parts of error: error from reward learning ($\cE_1$) and error coming from diffusion ($\cE_2$ and $\cE_3$).

\noindent {2).} $\cE_1$ depending on $d$ shows diffusion model learns a low-dimensional representation of $x$, reducing $D$ to smaller latent dimension $d$. It can be seen from $\operatorname{Tr}( \hat{\Sigma}_{\lambda}^{-1} \Sigma_{p_{q}}) \leq \cO \left(\frac{ a^2}{ \|{\beta}^* \|_{\Sigma}} + d \right)$ when $n_2 = \Omega(\frac{1}{\lambda_{\min}})$. 

\noindent {3).} If we ignore the diffusion errors, the suboptimatliy gap resembles the standard regret of off-policy bandit learning in $d$-dimensional feature subspace \citep[Section 3.2]{jin2021pessimism}, \citep{nguyen2021offline, brandfonbrener2021offline}. 

\noindent {4).} It is also worth mentioning that $\cE_2$ and $\cE_3$ depend on $t_0$ and that by taking $t_0 =\left((Dd^2 + D^2d) / n_1\right)^{1/6}$ one gets a good trade-off in $\cE_2$.

\noindent {5).} On-support diffusion error entangles with distribution shift in complicated ways. We show
\begin{align*}
\cE_2 = \left({\tt DistroShift}(a) \cdot \left(d^2D + D^2 d\right)^{1/6} {n_1}^{-1/6} \cdot a\right),
\end{align*}
where ${\tt DistroShift}(a)$ quantifies the distribution shift depending on different reward values. In the special case of the latent covariance matrix $\Sigma$ is known, we can quantify the distribution shift as ${\tt DistroShift}(a) = \cO(a \vee d)$.
We observe an interesting phase shift. When $a < d$, the training data have a sufficient coverage with respect to the generated distribution $\hat{P}_a$. Therefore, the on-support diffusion error has a lenient linear dependence on $a$. However, when $a > d$, the data coverage is very poor and $\cE_2$ becomes quadratic in $a$, which quickly amplifies. 

\noindent {6).} When generated samples deviate away from the latent space, the reward may substantially degrade as determined by the nature of $h$.

To the authors' best knowledge, this is a first theoretical attempt to understand reward improvement of conditional diffusion. These results imply a potential connection between diffusion theory and off-policy bandit learning, which is interesting for more future research. See proofs in Appendix \ref{pf:parametric}.



\section{Extension to Nonparametric Function Class}
\label{sec:nonpara_short}

Our theoretical analysis, in its full generality, extends to using general nonparametric function approximation for both the reward and score functions. To keep our paper succinct, we refer to Appendix \ref{sec:nonparametric} and Theorem \ref{thm:nonparametric} for details of our nonparametric theory for reward-conditioned generation. Informally, the regret of generated samples is bounded by
\begin{align*}
\subopt(\hat{P}_a; y^* = a) = \tilde{\cO}\left(\dshift(a) \cdot \left(n_2^{-\frac{\alpha}{2\alpha+d}} + n_1^{-\frac{2}{3(d+6)}}\right)\right) + \EE_{\hat{P}_a}[h^*(x_{\perp})]
\end{align*}
with high probability. Additionally, the nonparamtric generators is able to estimate the representation matrix $A$ up to an error of $\subangle{V}{A} = \tilde{\cO}(n_1^{-\frac{2}{d+6}})$. Here the score is assumed to be Lipschitz continuous and $\alpha$ is the smoothness parameter of the reward function, and $\dshift(a)$ is a class-restricted distribution shift measure. Our results on nonparametric function approximation covers the use of deep ReLU networks as special cases. 


\section{Numerical Experiments}\label{sec:experiment}
\subsection{Simulation}
We first perform the numerical simulation of Algorithm \ref{alg:cdm} following the setup in Assumption \ref{assumption:subspace}, \ref{assumption:linear_reward} and \ref{assumption:gaussian_design}. We choose $d=16, D=64, g^\star(x):=5\Vert x\Vert_2^2$, and generate $\beta^\star$ by uniformly sampling from the unit sphere. The latent variable $z$ is generated from $\sf{N}(0, I_d)$, which is then used to construct $x=Az$ with some randomly generated orthonormal matrix $A$. We use the $1$-dimensional version of the UNet \citep{ronneberger2015u} to approximate the score function. More details are deferred to Appendix \ref{appendix:exp}. 

Figure \ref{fig:simu_curves} shows the average reward of the generated samples under different target reward values. We also plot the distribution shift and off-support deviation in terms of the $2$-norm distance from the support. For small target reward values, the generation average reward almost scales linearly with the target value, which is consistent with the theory as the distribution shift remains small for these target values. The generation reward begins to decrease as we further increase the target reward value, and the reason is two fold. Firstly, the off-support deviation of the generated samples becomes large in this case, which prevents the generation reward from further going up. Secondly, the distribution shift increases rapidly as we further increase the target value, making the theoretical guarantee no longer valid. In Figure \ref{fig:simu_dists}, we show the distribution of the rewards in the generated samples. As we increase the target reward values, the generation rewards become less concentrated and are shifted to the left of the target value, which is also due to the distribution shift and off-support deviation. 
\begin{figure}[htb!]
    \centering
    \includegraphics[width=0.3\linewidth]{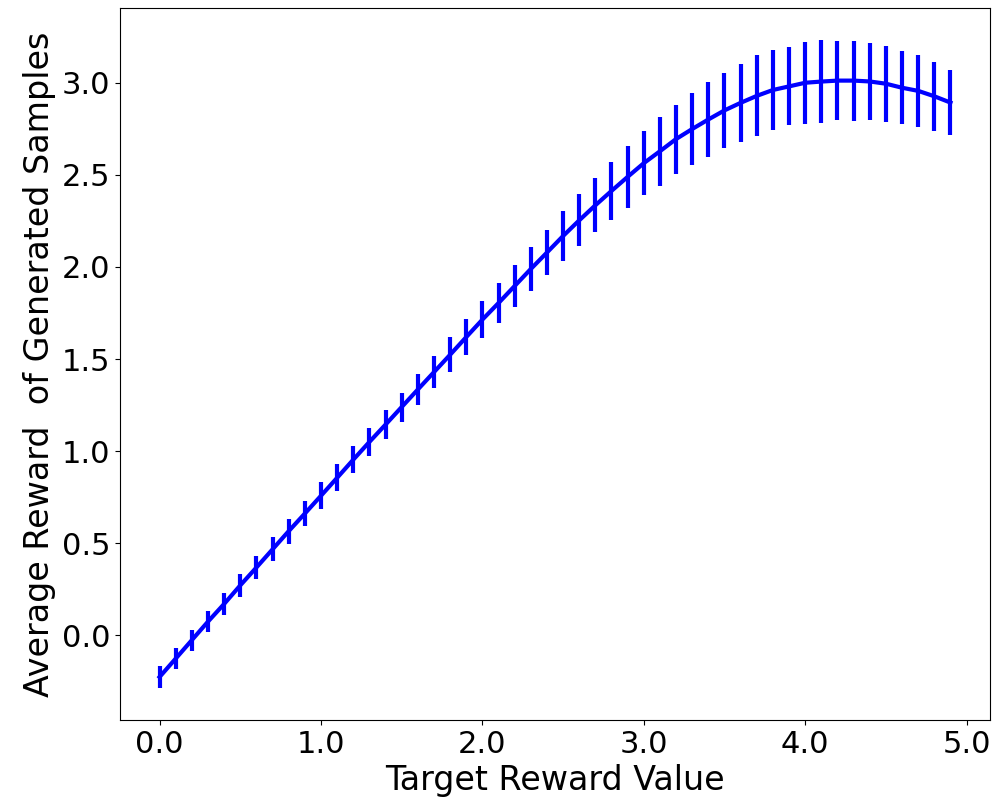}\hspace{5pt}
    \includegraphics[width=0.3\linewidth]{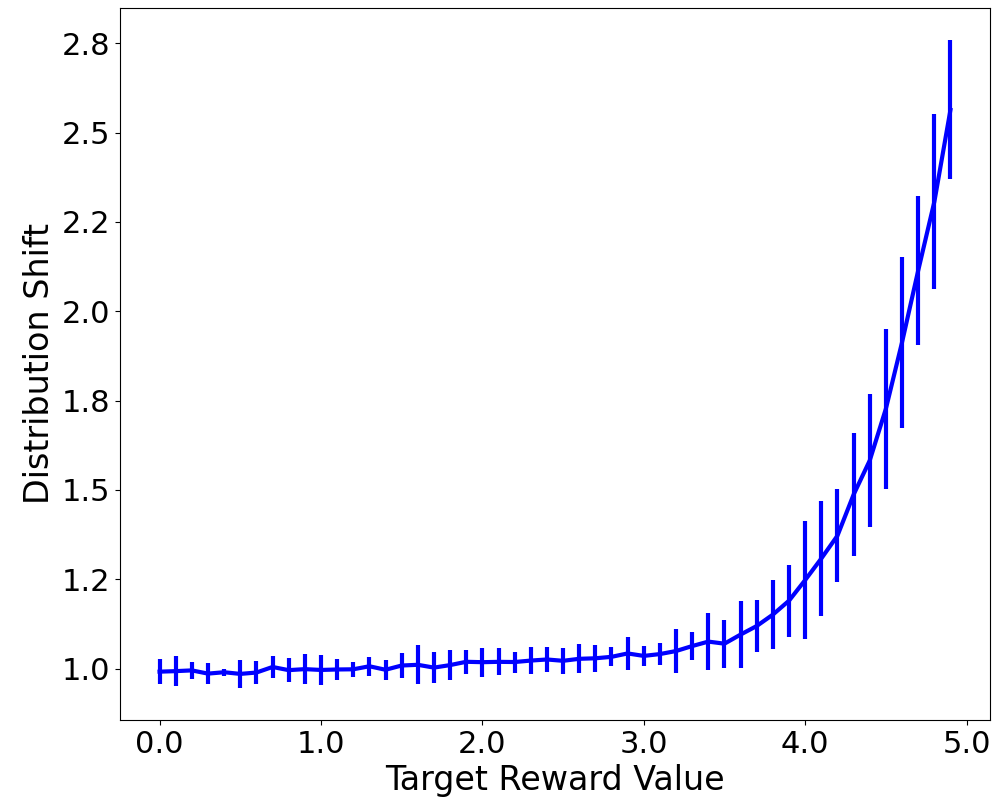}\hspace{5pt}
    \includegraphics[width=0.3\linewidth]{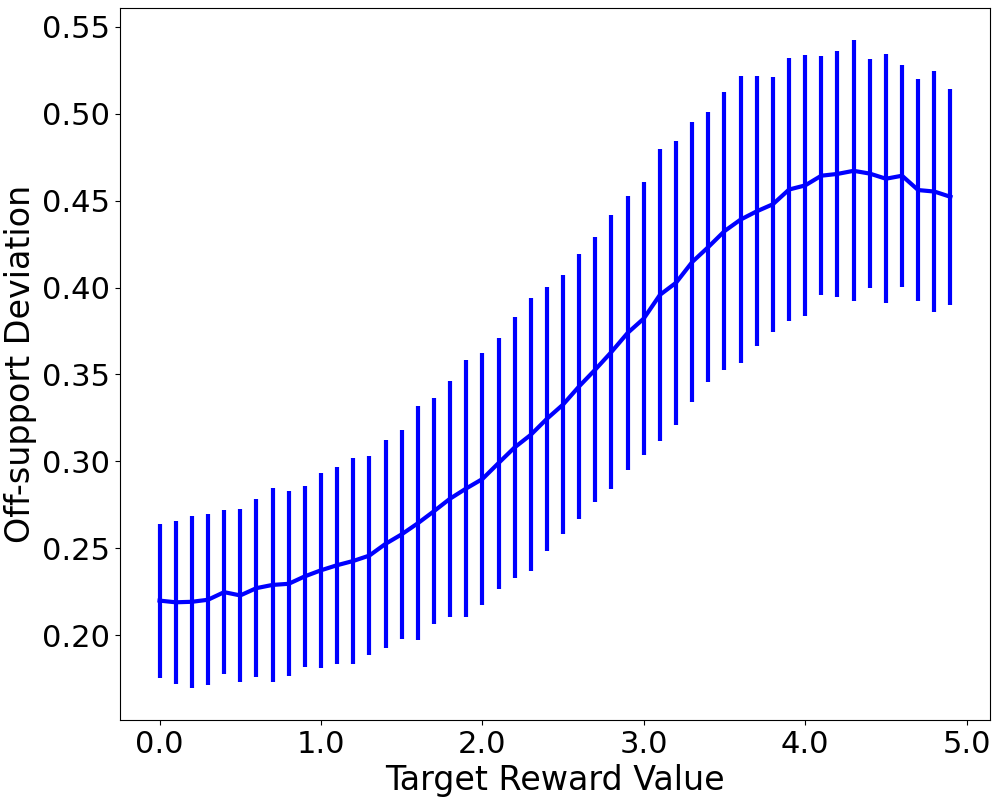}\\
    \caption{\textbf{Quality of generated samples as target reward value increases.}
    Left: Average reward of the generation; Middle: Distribution shift; Right: Off-support deviation. The errorbar is computed by $2$ times the standard deviation over $5$ runs.}
    \label{fig:simu_curves}
    \vspace{-12pt}
\end{figure}
\begin{figure}[htb!]
    \centering
    \includegraphics[width=0.24\linewidth]{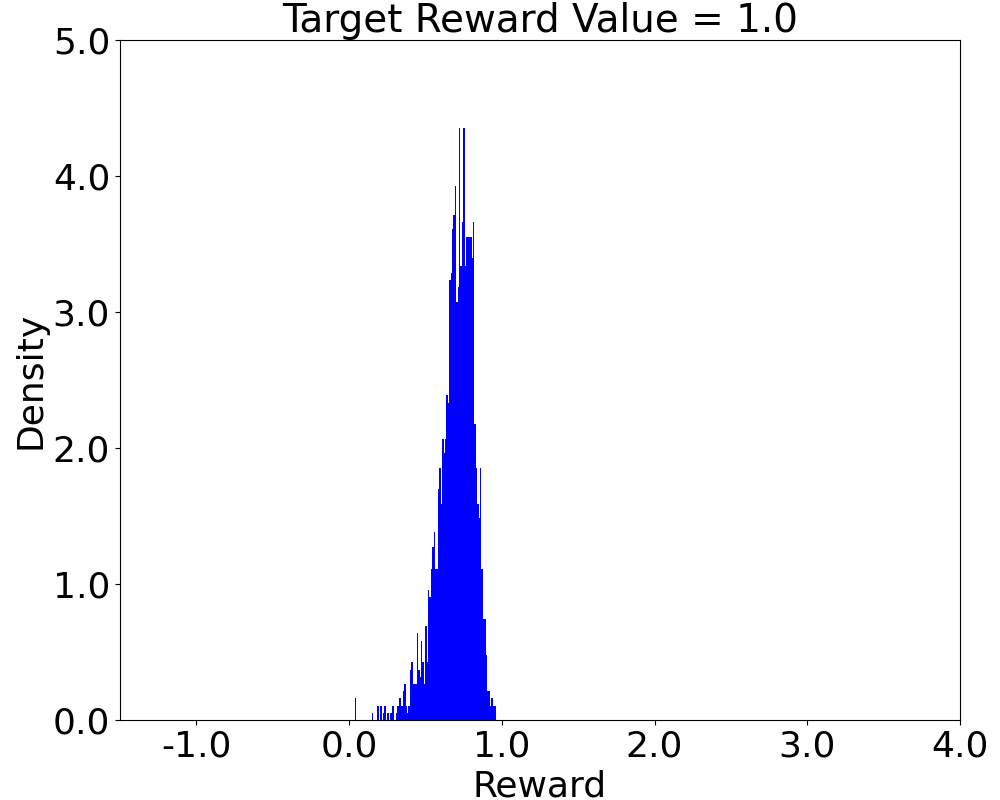}
    \includegraphics[width=0.24\linewidth]{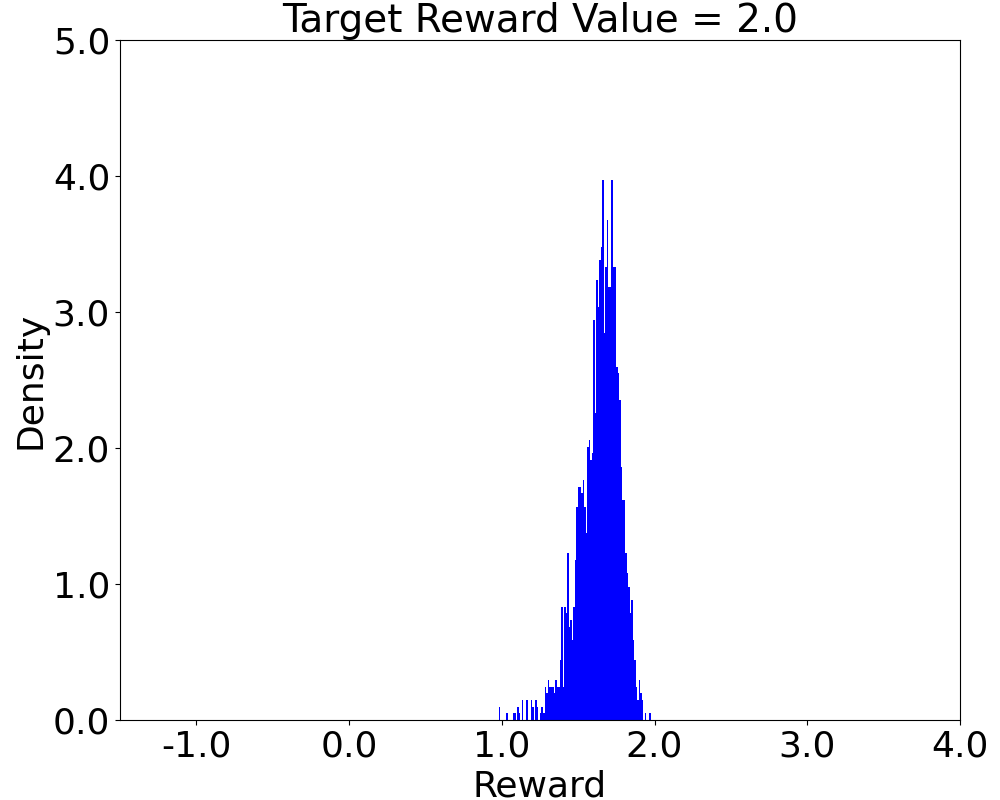}
    \includegraphics[width=0.24\linewidth]{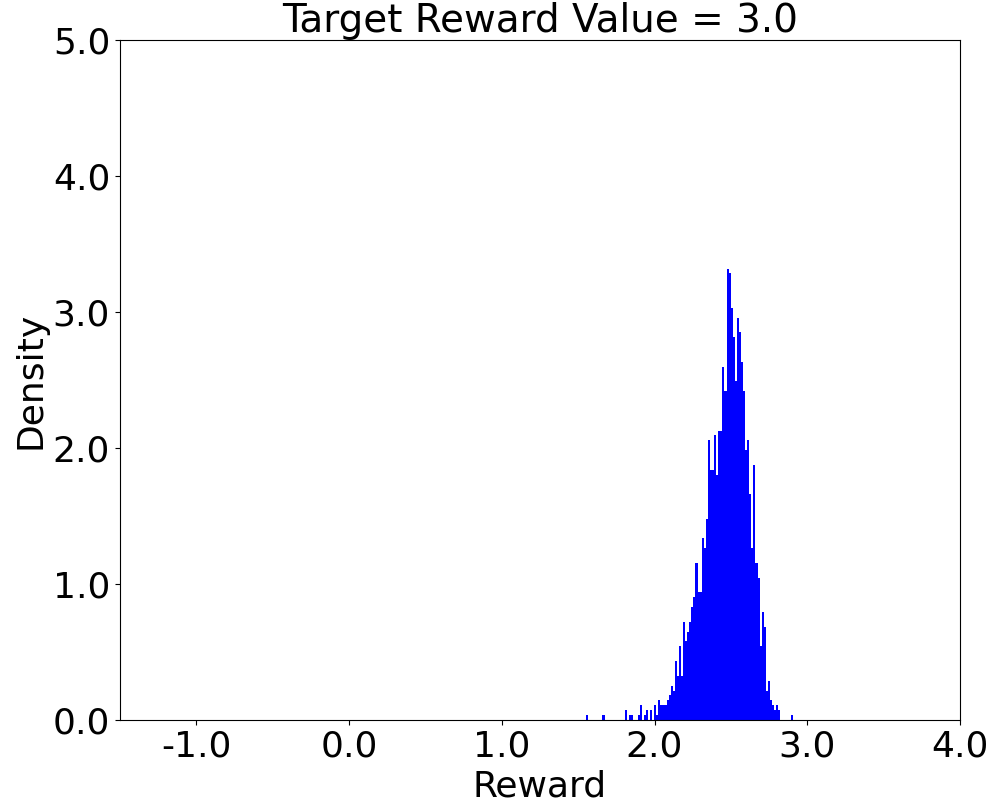}
    \includegraphics[width=0.24\linewidth]{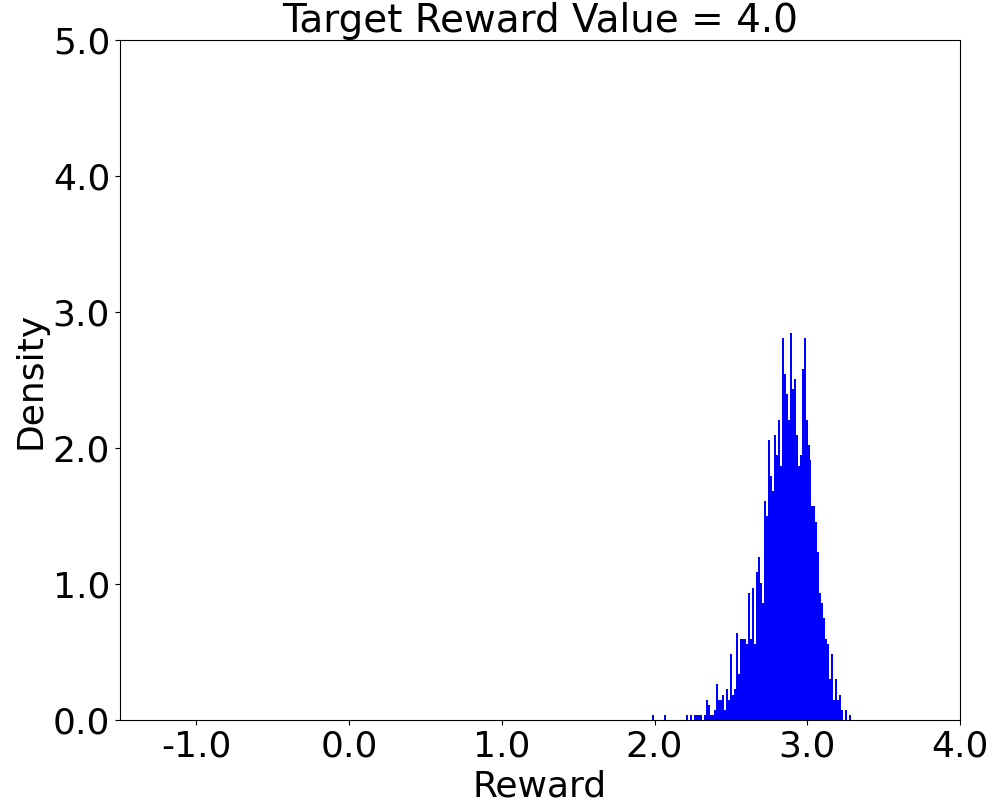}
    \caption{\textbf{Shifting reward distribution of the generated population.}}
    \label{fig:simu_dists}
    \vspace{-12pt}
\end{figure}

\subsection{Directed Text-to-Image Generation}\label{sec:experiment:2}

Next, we empirically verify our theory through directed text-to-image generation. Instead of training a diffusion model from scratch, we use Stable Diffusion v1.5 \citep{rombach2022high}, pre-trained on LAION dataset \citep{schuhmann2022laion}. Stable Diffusion operates on the latent space of its Variational Auto-Encoder and can incorporate text conditions. We show that by training a reward model we can further guide the Stable Diffusion model to generate images of desired properties. 

\textbf{Ground-truth Reward Model.} We start from an ImageNet~\citep{deng2009imagenet} pre-trained ResNet-18~\citep{he2016deep} model and replace the final prediction layer with a randomly initialized linear layer of scalar outputs. Then we use this model as the ground-truth reward model. To investigate the meaning of this randomly-generated reward model, we generate random samples and manually inspect the images with high rewards and low rewards. The ground-truth reward model seems to favor colorful and vivid natural scenes against monochrome and dull images; see Appendix~\ref{appendix:exp} for sample images.  
%
%

%
%
%

\textbf{Labelled Dataset.} We use the ground-truth reward model to compute a scalar output for each instance in the CIFAR-10~\citep{krizhevsky2009learning}  training dataset and perturb the output by adding a Gaussian noise from $\mathcal{N}(0,0.01)$. We use the images and the corresponding outputs as the training dataset. 


\textbf{Reward-network Training.}
To avoid adding additional input to the diffusion model and tuning the new parameters, we introduce a new network $\mu_\theta$ and approximate $p_t(y|x_t)$ by ${\sf N}(\mu_{\theta}(x_t), \sigma^2)$. For simplicity, we set $\sigma^2$ as a tunable hyperparameter. We share network parameters for different noise levels $t$, so our $\mu_\theta$ has no additional input of $t$. We train $\mu_\theta$ by minimizing the expected KL divergence between $p_t(y|x_t)$ and ${\sf N}(\mu_\theta(x_t), \sigma^2)$:
\[
   \EE_{t} \EE_{x_t}  \Big[ \mathrm{KL}(p_t(y|x_t) \mid {\sf N}(\mu_\theta(x_t), \sigma^2))  \Big]
    =   \EE_{t} \EE_{(x_t, y) \sim p_t}  \frac{\|y- \mu_\theta(x_t)\|_2^2}{2\sigma^2} + \mathrm{Constant}.
\]
Equivalently, we train the reward model $\mu_\theta$ to predict the noisy reward $y$ from the noisy inputs $x_t$. Also, notice that the minimizers of the objective do not depend on the choice of $\sigma^2$.

\textbf{Reward-network-based Directed Diffusion.} To perform reward-directed conditional diffusion, observe that $\nabla_x \log p_t(x|y) = \nabla_x \log p_t(x) + \nabla_x \log p_t(y|x)$, and $ p_t(y|x) \propto \exp \Big ( -\frac{\| y-\mu_\theta(x)\|_2^2 }{2\sigma^2} \Big)$.
Therefore, 
\[
\nabla_x \log p_t(y|x) = -1/\sigma^2  \cdot  \nabla_x \Big[ \frac12 \| y-\mu_\theta(x)\|_2^2 \Big].
\] 
In our implementation, we compute the gradient by back-propagation through $\mu_\theta$ and incorporate this gradient guidance into each denoising step of the DDIM sampler \citep{song2020denoising} following \citep{dhariwal2021diffusion} (equation (14)).
We see that $1/\sigma^2$ corresponds to the weights of the gradient with respect to unconditioned score. In the sequel, we refer to $1/\sigma^2$ as the ``guidance level'', and $y$ as the ``target value''.

\textbf{Quantitative Results.}  We vary $1/\sigma^2$ in $\{25, 50, 100, 200, 400\}$ and $y$ in $\{1,2,4,8,16\}$. For each combination, we generate 100 images with the text prompt ``A nice photo'' and calculate the mean and the standard variation of the predicted rewards and the ground-truth rewards. 
%
The results are plotted in  Figure~\ref{fig:exp}. 
From the plot, we see similar effects of increasing the target value $y$ at different guidance levels $1/\sigma^2$. A larger target value  puts more weight on the guidance signals $\nabla_x \mu_\theta(x)$, which successfully drives the generated images towards higher predicted rewards, but suffers more from the distribution-shift effects between the training distribution and the reward-conditioned distribution, which renders larger gaps between the predicted rewards and the ground-truth rewards. To optimally choose a target value, we must trade off between the two counteractive effects. 


\begin{figure}[htb!]
    \centering
    \begin{subfigure}[h]{0.8\textwidth}
        \includegraphics[width = 1\textwidth]{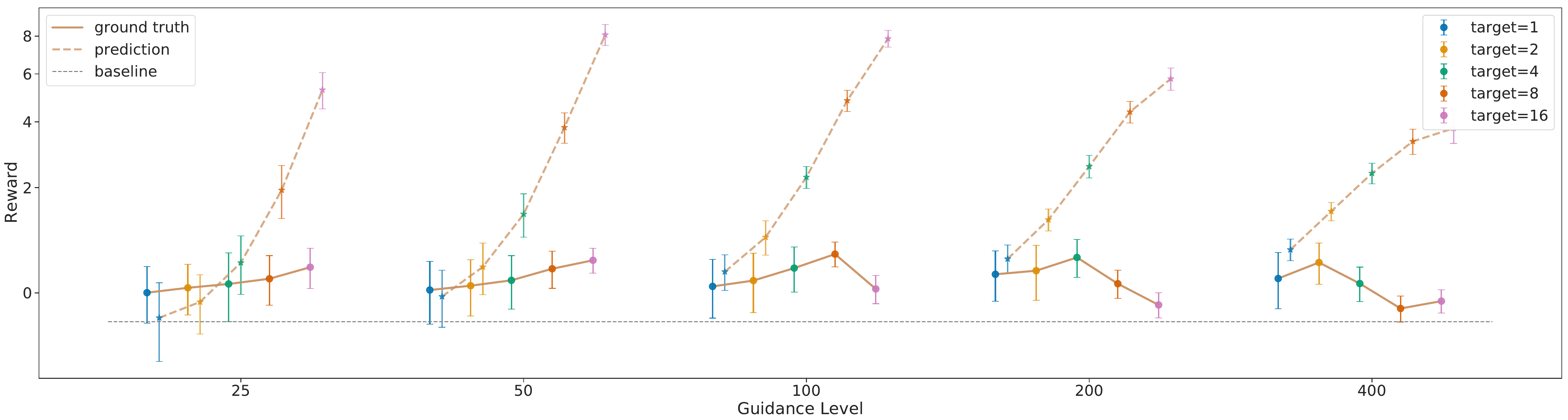}
    \end{subfigure} 
    \caption{\textbf{The predicted rewards and the ground-truth rewards of the generated images}.  At each guidance level, increasing the target $y$ successfully directs the generation towards higher predicted rewards, but also increases the error induced by the distribution shift. The reported baseline is the expected ground-truth reward for undirected generations.}
\label{fig:exp}
\end{figure}

\textbf{Qualitative Results.} To qualitatively test the effects of the reward conditioning, we generate a set of images with increasing target values $y$ under different text prompts and investigate the visual properties of the produced images. 
We isolate the effect of reward conditioning by fixing all the randomness during the generation processes, so the generated images have similar semantic layouts. 
After hyper-parameter tuning, we find that setting $1/\sigma^2=100$ and $y \in \{2,4,6,8,10\}$ achieves good results across different text prompts and random seeds. We pick out typical examples and summarized the results in Figure~\ref{fig:demo}, which demonstrates that as we increase the target value, the generated images become more colorful at the expense of degradations of the image qualities. 

\begin{figure}[htb!]
    \centering
    \begin{subfigure}[h]{0.15\textwidth}
        \includegraphics[width = \textwidth]{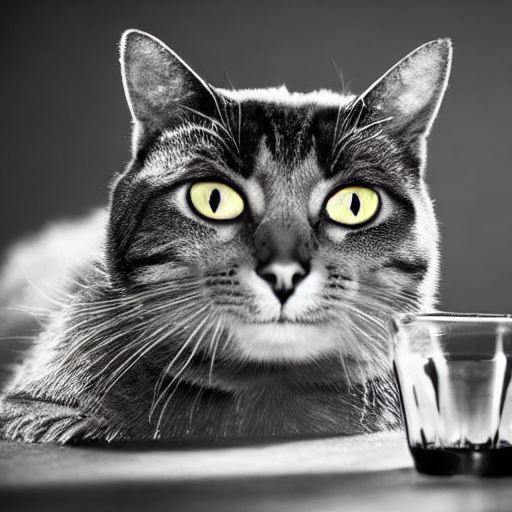}
    \end{subfigure}
    \begin{subfigure}[h]{0.15\textwidth}
        \includegraphics[width = \textwidth]{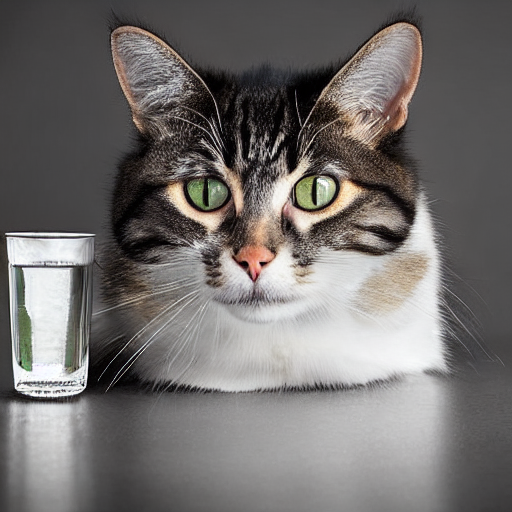}
    \end{subfigure} 
    \begin{subfigure}[h]{0.15\textwidth}
        \includegraphics[width = \textwidth]{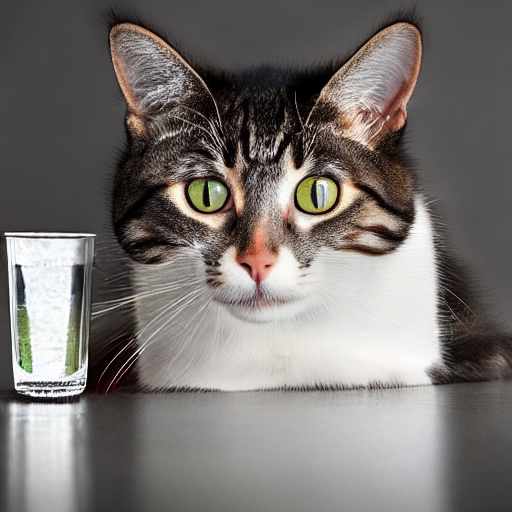}
    \end{subfigure}
    \begin{subfigure}[h]{0.15\textwidth}
        \includegraphics[width = \textwidth]{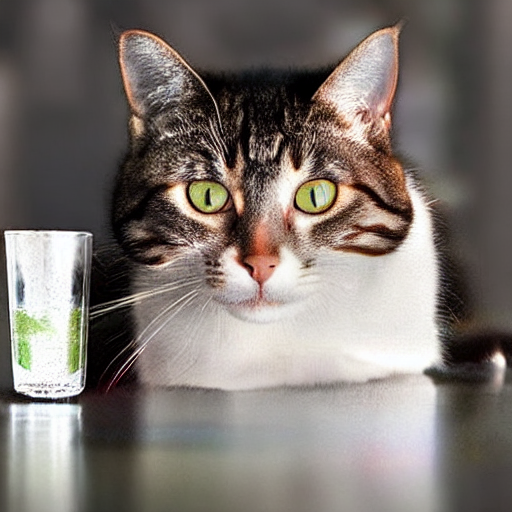}
    \end{subfigure}
    \begin{subfigure}[h]{0.15\textwidth}
        \includegraphics[width = \textwidth]{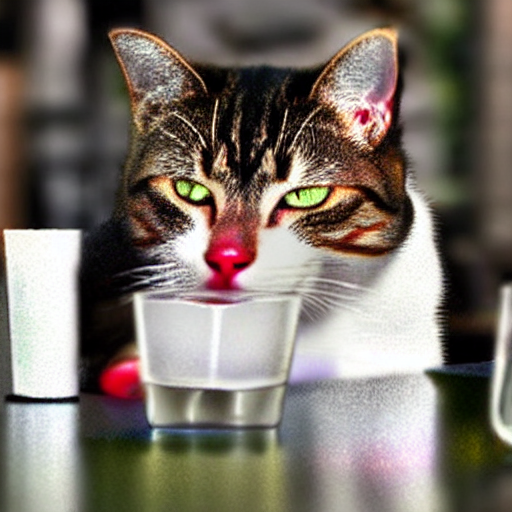}
    \end{subfigure}
    \begin{subfigure}[h]{0.15\textwidth}
        \includegraphics[width = \textwidth]{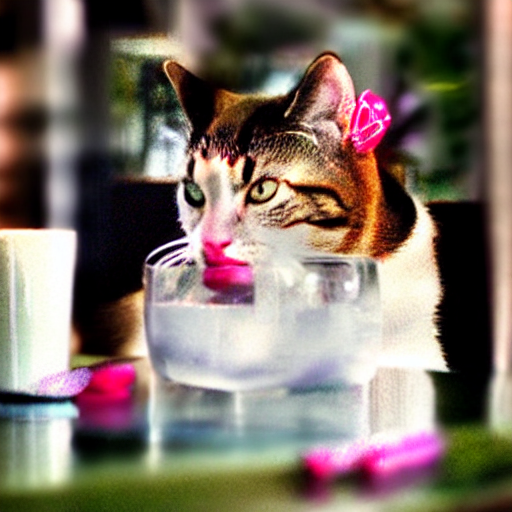}
    \end{subfigure}

    \vfill 

        \begin{subfigure}[h]{0.15\textwidth}
        \includegraphics[width = \textwidth]{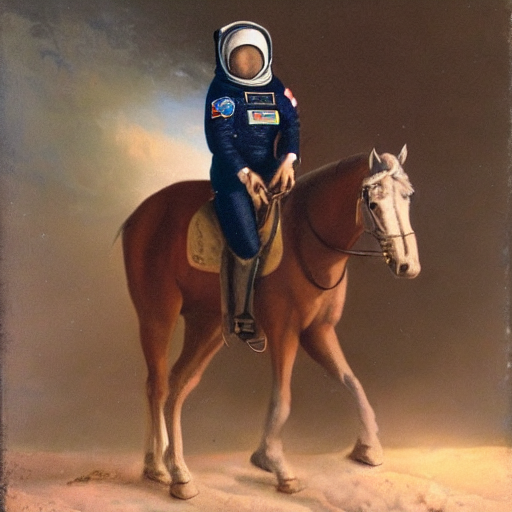}
    \end{subfigure}
    \begin{subfigure}[h]{0.15\textwidth}
        \includegraphics[width = \textwidth]{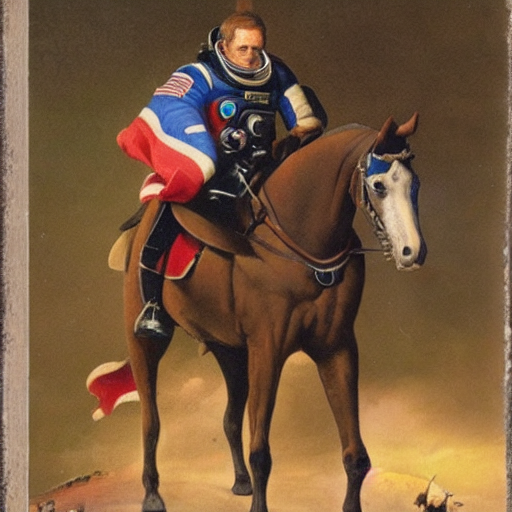}
    \end{subfigure} 
    \begin{subfigure}[h]{0.15\textwidth}
        \includegraphics[width = \textwidth]{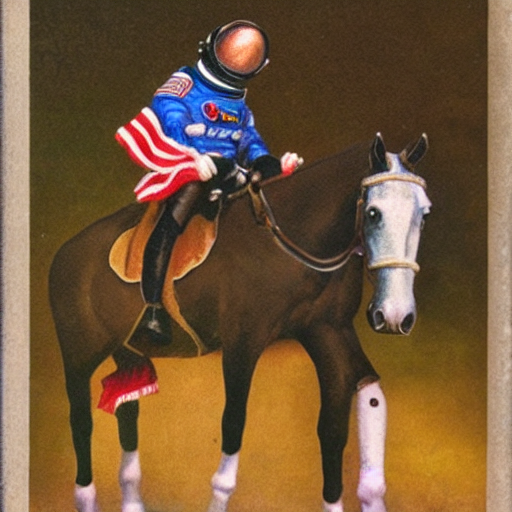}
    \end{subfigure}
    \begin{subfigure}[h]{0.15\textwidth}
        \includegraphics[width = \textwidth]{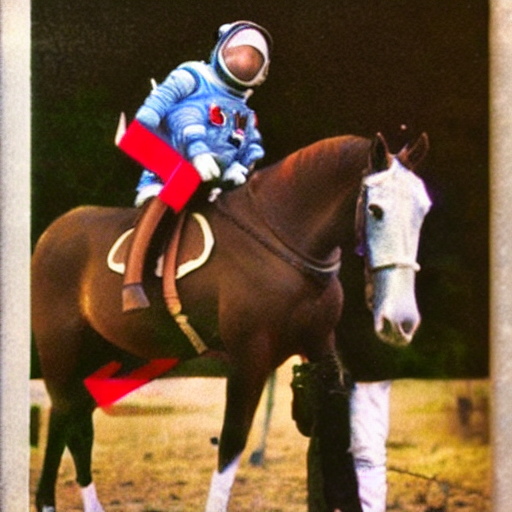}
    \end{subfigure}
    \begin{subfigure}[h]{0.15\textwidth}
        \includegraphics[width = \textwidth]{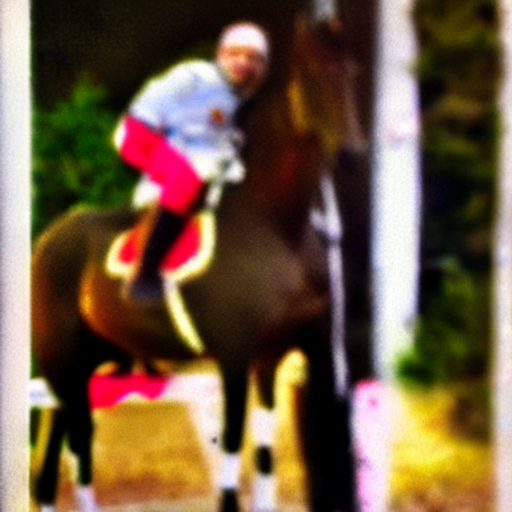}
    \end{subfigure}
    \begin{subfigure}[h]{0.15\textwidth}
        \includegraphics[width = \textwidth]{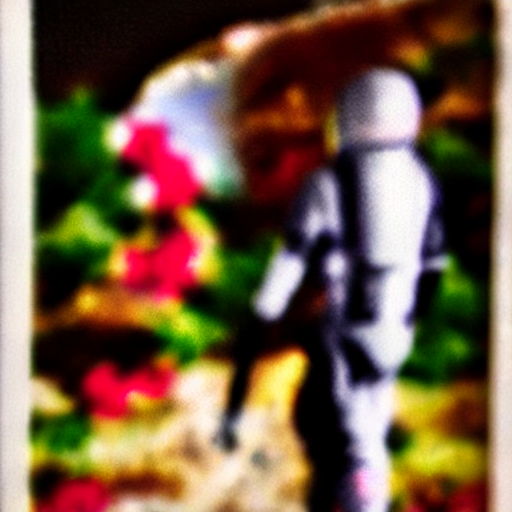}
    \end{subfigure}
     \vspace*{3mm}
    \includegraphics[width = 0.95\textwidth]{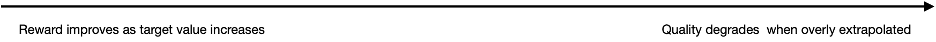}
    \caption{\textbf{The effects of the reward-directed diffusion.} Increasing the target value directs the images to be more colorful and vivid at the cost of degradation of the image qualities. Leftmost: without reward conditioning. Second-to-Last: target value $y = 2, 4, 6, 8, 10$. The guidance level $1/\sigma^2$ is fixed to $100$. The text prompts are "A cat with a glass of water.", "An astronaut on the horseback".}
\label{fig:demo}
\end{figure}

\vspace{-2pt}
\section{Conclusion}
\vspace{-2pt}
In the paper, we study the problem of generating high-reward and high-quality samples using reward-directed conditional diffusion models, focusing on the semi-supervised setting where massive unlabeled data and limited labeled data are given. We provide theoretical results for subspace recovery and reward improvement, demonstrating the trade-off between the strength of the reward target and the distribution shift. Numerical results support our theory well.

\clearpage
\bibliographystyle{plainnat}
\bibliography{ref}

\appendix

\newpage
\paragraph{Index of Appendices.}
\begin{itemize}
    
    \item \S \ref{sec:pre} \quad Preliminaries on Diffusion Models

    \item \S \ref{pf_sec:2} \quad 
    Omitted Proof in Section~\ref{sec:alg}
        \begin{itemize}
            \item \S \ref{pf:equivalent_score_matching} \quad Proof of Proposition~\ref{prop:equivalent_score_matching}
        \end{itemize}
    
    \item \S \ref{pf_sec:3} \quad Omitted Proofs in Section~\ref{sec:linear}
        \begin{itemize}
            \item \S \ref{pf:parametric_score} \quad Parametric Conditional Score Matching Error 
            \item \S  \ref{pf:fidelity} \quad Proof of Theorem~\ref{thm:fidelity}
            \item \S  \ref{pf:parametric} \quad Proof of Theorem~\ref{thm:parametric}
        \end{itemize}

    \item \S \ref{spt_lemmas} \quad Supporting Lemmas and Proofs for \S \ref{pf_sec:3}

    \item \S \ref{sec:nonparametric} \quad Extensions to Nonparametric Function Class

    \item \S \ref{pf:nonparametric_all} \quad Omitted Proofs in \S \ref{sec:nonparametric}
    
    \item \S \ref{appendix:exp} \quad Additional Experimental Results
\end{itemize}

\section{Preliminaries on Diffusion Models}\label{sec:pre}

We first provide a brief review of diffusion models and its training/sampling procedure. 
We consider diffusion in continuous time \citep{kingma2018glow, song2020score}, where diffusion is described as forward and backward SDEs.

\textbf{Forward SDE and Score Matching.} In the forward process, noise is added to original data progressively as an Ornstein-Ulhenbeck process for instance:
\begin{align}\label{eq:forward_sde}
\diff X_t = -\frac{1}{2} g(t) X_t \diff t + \sqrt{g(t)} \diff W_t ~~~ \text{for} ~~ g(t) > 0,
\end{align}
where initial $X_0 \sim P_{\rm data}$ and $(W_t)_{t\geq 0}$ is a standard Wiener process, and $g(t)$ is a nondecreasing weighting function.
In practice, the forward process \eqref{eq:forward_sde} terminates at a sufficiently large $T > 0$ such that the corrupted $X_T$ is close to the standard Gaussian ${\sf N}(\boldsymbol{0}, I_D)$. To enable data generation in future, the score $\nabla \log p_t(\cdot)$ at $t$ is the key to learn, here $p_t$ denotes the marginal density of $X_t$. We often use an estimated score function $\hat{s}(\cdot, t)$ trained by minimizing a score matching loss. 

\textbf{Backward SDE for Generation.}
Diffusion models generate samples through a backward SDE \eqref{eq:backward} reversing the time in \eqref{eq:forward_sde} \citep{anderson1982reverse, haussmann1986time}, i.e.,
\begin{align}
\label{eq:backward}
\diff \bXb_t & = \left[\frac{1}{2}g(T-t)\bXb_t + g(T-t) \nabla \log p_{T-t}(\bXb_t)\right] \diff t + \sqrt{g(T-t)} \diff \overline{W}_t,
\end{align}
where $\overline{W}_t$ is a reversed Wiener process. In practice, the backward process is initialized with ${\sf N}(0, I_D)$ and the unknown conditional score $\nabla \log p_t(\cdot)$ is replaced by an estimated counterpart $\hat{s}(\cdot, t)$.

\section{Omitted Proof in Section~\ref{sec:alg}}
\label{pf_sec:2}

\subsection{Proof of Proposition~\ref{prop:equivalent_score_matching}}\label{pf:equivalent_score_matching}
\begin{proof}
For any $t \geq 0$, it hold that $\nabla_{x_t} \log p_t(x_t \mid y) = \nabla_{x_t} \log p_t(x_t, y)$ since the gradient is taken w.r.t. $x_t$ only. Then plugging in this equation and expanding the norm square on the LHS gives
\begin{align*}
\EE_{(x_t, y) \sim P_t} \left[\norm{\nabla_{x_t} \log p_t(x_t, y) - s(x_t, y, t)}_2^2\right] & = \EE_{(x_t, y) \sim P_t} \big[\norm{s(x_t, y, t)}_2^2 \\
& \quad - 2 \langle \nabla_{x_t} \log p_t(x_t, y), s(x_t, y, t)\rangle \big] + C.
\end{align*}
Then it suffices to prove 
\begin{equation*}
    \EE_{(x_t, y) \sim P_t} \left[ \langle \nabla_{x_t} \log p_t(x_t, y), s(x_t, y, t)\rangle \right] = \EE_{(x,y) \sim P_{x \hat{y}}} \EE_{x^{\prime} \sim {\sf N}(\alpha(t)x, h(t)I)} \left[\langle \nabla_{x^{\prime}}\phi_t(x^{\prime} \mid x), s(x^{\prime}, y, t)\rangle\right]
\end{equation*}
Using integration by parts to rewrite the inner product we have
\begin{align*}
    \EE_{(x_t, y) \sim P_t} \left[\langle \nabla_{x_t} \log p_t(x_t, y), s(x_t, y, t)\rangle \right] &= \int p_t(x_t,y) \langle \nabla_{x_t} \log p_t(x_t, y), s(x_t, y, t)\rangle dx_t dy\\
    &= \int \langle \nabla_{x_t} p_t(x_t, y), s(x_t, y, t)\rangle dx_t dy\\
    &= -\int p_t(x_t,y) \operatorname{div}(s(x_t, y, t)) dx_t dy,
\end{align*}
where denote by $\phi_t(x' | x)$ the density of ${\sf N}(\alpha(t)x, h(t)I_D)$ with $\alpha(t) = \exp(- t/2)$ and $h(t) = 1 - \exp(-t)$, then
\begin{align*}
    -\int p_t(x_t,y) \operatorname{div}(s(x_t, y, t)) dx_t dy &= -\EE_{(x,y) \sim P_{x\hat{y}}} \int   \phi_t(x^{\prime} \mid x) \operatorname{div}(s(x^{\prime}, y, t)) dx^{\prime}\\
    &= \EE_{(x,y) \sim P_{x\hat{y}}} \int   \langle \nabla_{x^{\prime}}\phi_t(x^{\prime} \mid x), s(x^{\prime}, y, t)\rangle  dx^{\prime}\\
    &= \EE_{(x,y) \sim P_{x \hat{y}}} \EE_{x^{\prime} \sim {\sf N}(\alpha(t)x, h(t)I)} \left[\langle \nabla_{x^{\prime}}\phi_t(x^{\prime} \mid x), s(x^{\prime}, y, t)\rangle \right].
\end{align*}
\end{proof}

\section{Omitted Proofs in Section~\ref{sec:linear}}
\label{pf_sec:3}

\paragraph{Additional Notations:} We follow the notations in the main paper along with some additional ones. Use $P_t^{LD}(z)$ to denote the low-dimensional distribution on $z$ corrupted by diffusion noise. Formally, $p_t^{LD}(z) =  \int  \phi_t(z'|z)p_z(z) \diff z$ with $\phi_t( \cdot | z)$ being the density of ${\sf N}(\alpha(t)z, h(t)I_d)$. $P^{LD}_{t_0} (z \mid \hat{f} (Az) = a)$ the corresponding conditional distribution on $\hat{f}(Az)= a$ at $t_0$, with shorthand as $P^{LD}_{t_0}(a)$. Also give $P_{z}(z \mid \hat{f} (Az) = a)$ a shorthand as $P^{LD}(a)$. In our theorems, $\cO$ hides constant factors and higher order terms in $n_1^{-1}$ and $n_2^{-1}$ and , $\tilde \cO$ further hides logarithmic terms and can also hide factors in $d$.

\subsection{Parametric Conditional Score Matching Error }\label{pf:parametric_score}

Theorems presented in Section~\ref{sec:linear} are established upon the conditional score estimation error, which has been studied in \cite{chen2023score} for general distributions, but in Lemma~\ref{lmm:scr_mtc_err} we provide a new one specific to our setting where the true score is linear in input $(x_t, \hat{y})$ due to the Gaussian design. Despite the linearity of score in Gaussian case, we emphasize matching score in \eqref{equ:scr_mtc} is not simply linear regression as $\cS$ consists of an encoder-decoder structure for estimating matrix $A$ to reduce dimension (see \S\ref{sec:score_matching_err} for $\cS$ construction and more proof details). 

In the following lemma, we first present a general result for the case where the true score is within $\cS$, which is  constructed as a parametric function class. Then the score matching error is bounded in terms of $\cN(\cS, 1/n_1)$, the $\log$ covering number of $\cS$, recall $n_1$ is the size of $\cD_{\rm unlabel}$. Instantiating this general result, we derive score matching error for Gaussian case by upper bounding $\cN(\cS, 1/n_1)$ in this special case.

\begin{lemma}
\label{lmm:scr_mtc_err}
    Under Assumption \ref{assumption:subspace}, if $\nabla \log p_t(x \mid y) \in \cS$, where 
    \begin{align}
        \tag{\eqref{equ:function_class} revisited}
    \cS = \bigg\{\sbb_{V, \psi}(x, y, t) = \frac{1}{h(t)} (V \cdot \psi (V^\top x, y, t) - x) & :~ V \in \RR^{D \times d}, ~\psi \in \Psi:\RR^{d+1} \times [t_0, T] \to \RR^d~\bigg\},
    \end{align}
    with $\Psi$ parametric. Then for $\delta \geq 0$, with probability $1-\delta$, the square score matching error is bounded by $\epsilon^2_{diff} = \cO \left(\frac{1}{t_0}\sqrt{ \frac{\cN(\cS, 1/n_1) (d^2 \vee D) \log \frac{1}{\delta}} {n_1} } \right)$, i.e.,
\begin{equation}
\label{equ:score_error}
    \frac{1}{T - t_0} \int_{t_0}^T \EE_{(x_t, y) \sim P_t} \left[\norm{\nabla \log p_t(x_t | y) - \hat{s}(x_t, y, t)}_2^2\right] \diff t \leq \epsilon_{diff}^2,
\end{equation}
recall $P_t$ comes from $P_{x\hat{y}}$ by noising $x$ at $t$ in the forward process. Under Assumption \ref{assumption:gaussian_design} and given $\hat f (x) = \hat{\theta}^{\top} x$ and $\hat{y} = \hat{f}(x)+\xi, \xi \sim {\sf N}(0, \nu^2)$, the score function $\nabla \log p_t(x \mid \hat y)$ to approximate is linear in $x$ and $\hat y$. When approximated by $S$ with $\Psi$ linear,
$\cN(\cS, 1/n_1) = \cO((d^2 + Dd)\log (D d n_1))$.
\begin{proof}
    Proof is in $\S$\ref{sec:score_matching_err}.
\end{proof}
\end{lemma}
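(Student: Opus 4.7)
The plan is to follow the standard template for score matching generalization, but adapted to handle the unboundedness of the denoising target and the encoder-decoder parameterization in \eqref{equ:function_class}. First, by Proposition~\ref{prop:equivalent_score_matching}, minimizing the objective \eqref{equ:scr_mtc} is equivalent up to an $s$-independent constant to ERM of the denoising loss $\ell_t(s; x, y, x') = \|\nabla_{x'}\log\phi_t(x'|x) - s(x', y, t)\|_2^2$ against $\tilde{\cD}$ paired with fresh $x' \sim {\sf N}(\alpha(t)x, h(t)I_D)$. Because $\nabla \log p_t(\cdot|\cdot) \in \cS$ by hypothesis, the excess empirical risk of $\hat{s}$ over the true score is nonpositive, so a one-sided uniform deviation bound over $\cS$ upper-bounds the population weighted squared score error that appears on the LHS of \eqref{equ:score_error}.

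The crux of Step 1 is a uniform concentration inequality for $\{\EE - \hat{\EE}\}\ell_t(s; \cdot)$ over $s \in \cS$ and $t \in [t_0, T]$. I would first truncate: since $x' - \alpha(t)x \sim {\sf N}(0, h(t)I_D)$, an event $\cE_R = \{\|x' - \alpha(t)x\|_2 \leq R\sqrt{h(t)}\}$ with $R = \tilde \Theta(\sqrt{D})$ holds with probability $1 - 1/\mathrm{poly}(n_1)$ uniformly in $t$. On $\cE_R$ the single-sample loss is bounded by $\cO(R^2/h(t)) = \cO((d^2 \vee D)/t_0)$, using $h(t) \gtrsim t_0$ for $t \geq t_0$ and that $V^\top x$ restricts to a $d$-dimensional Gaussian so that $\psi$'s contribution scales in $d$. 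Then a Bernstein argument on an $(1/n_1)$-cover of $\cS$ (of $\log$-size $\cN(\cS, 1/n_1)$), combined with a time grid of resolution $1/n_1$ and the Lipschitzness of $\alpha(t), h(t)$ on $[t_0, T]$, yields a uniform deviation of order $\tfrac{1}{t_0}\sqrt{\cN(\cS,1/n_1)(d^2\vee D)\log(1/\delta)/n_1}$. Averaging over $t \in [t_0, T]$ and using realizability gives $\epsilon_{diff}^2$.

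For the Gaussian instantiation, I would verify realizability by direct computation: under Assumption~\ref{assumption:gaussian_design} together with the linear reward proxy $\hat y = \hat\theta^\top x + \xi$, the joint vector $(x_t, \hat y)$ is Gaussian, so Tweedie's formula gives $\nabla\log p_t(x_t|\hat y) = h(t)^{-1}(\EE[\alpha(t)x|x_t,\hat y] - x_t)$, and $\EE[\alpha(t)x|x_t, \hat y]$ is an affine function of $(x_t, \hat y)$ lying in $\mathrm{col}(A)$. Setting $V = A$ and taking $\psi$ to be the affine map of $(V^\top x_t, \hat y)$ recovers this score inside $\cS$. To bound $\cN(\cS, 1/n_1)$ with linear $\Psi$, I would parameter-count: $V$ has at most $Dd$ free entries (Stiefel), and an affine $\psi: \mathbb R^{d+1} \to \mathbb R^d$ over the time interval, once the smooth $t$-dependence through $\alpha(t), h(t)$ is absorbed, contributes $\cO(d^2)$ parameters; each parameter lies in a bounded range so a standard volumetric argument gives $\cN(\cS, 1/n_1) = \cO((Dd + d^2)\log(Dd n_1))$.

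The main obstacle will be controlling the $1/t_0$ factor: because the denoising target $\nabla_{x'}\log \phi_t(x'|x)$ blows up like $1/\sqrt{h(t)}$ near $t = 0$, the single-sample loss is unbounded and has variance scaling with $1/t_0^2$, which is what forces both the truncation step and the $1/t_0$ prefactor out front of the square root rather than inside. Balancing the truncation radius $R$ against the deviation bound, and making sure the Bernstein second-moment term does not dominate the covering contribution, is the delicate part. A secondary subtlety is ensuring the time-uniform bound: I would exploit $\alpha, h \in C^1([t_0,T])$ to show $\ell_t$ is $\mathrm{poly}(1/t_0)$-Lipschitz in $t$, so that a polynomial-size grid plus a union bound upgrades the per-$t$ deviation to a supremum over $t$ at the stated rate.
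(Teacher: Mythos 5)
Your overall strategy matches the paper's: use Proposition~\ref{prop:equivalent_score_matching} to pass to the tractable denoising objective, exploit realizability $\nabla\log p_t(\cdot\mid\cdot)\in\cS$ for an oracle inequality, then control the population-minus-empirical gap uniformly over $\cS$ via truncation plus a covering-number argument. Two tactical choices differ and are worth flagging. First, the paper does not discretize time: it treats $\ell(x,y;s)=\frac{1}{T-t_0}\int_{t_0}^T\EE_{x'\mid x}\|\cdot\|_2^2\,\diff t$ as a single (already time-averaged) loss per clean sample, so the concentration is purely over the $n_1$ i.i.d.\ data points and no time grid or union bound over $t$ is needed. Second, the paper truncates on the clean inputs $\{\|x\|_2\le R,\ |y|\le R\}$ with $R=\cO(\sqrt{d\log d+\log K+\log(n_1/\delta)})$ and observes that on this region the score network output is bounded by $K=\cO(\sqrt{d})$, so the truncated per-sample loss is $\cO\big((K^2+R^2)/(t_0(T-t_0))\big)=\tilde\cO(d/t_0)$. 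This, fed through a Rademacher-complexity/Dudley argument, yields the stated $\tfrac{1}{t_0}\sqrt{(d^2\vee D)\cN(\cS,1/n_1)\log(1/\delta)/n_1}$ directly.

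Your truncation is on the noise $x'-\alpha(t)x$ with radius $R\sqrt{h(t)}$ and $R=\tilde\Theta(\sqrt D)$, which makes the per-sample loss $\cO(D/t_0)$. Plugging that into a Hoeffding-on-cover bound would give $\tfrac{D}{t_0}\sqrt{\cN/n_1}=\tfrac{1}{t_0}\sqrt{D^2\cN/n_1}$, which is worse by a factor $\sqrt{D/(d^2\vee D)}$ (a genuine $\sqrt D$ gap when $D\ge d^2$). You correctly anticipate that a Bernstein/self-bounding variance step is needed to repair this, but that step is exactly the delicate part and is not carried out; it is not automatic, because the truncated empirical risk of the true score is not zero (it equals the conditional variance of the denoising target, which is $\Theta(D/h(t))$), so the usual ``localize variance by excess risk'' trick does not close the loop without additional work. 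If you instead truncate on $(x,y)$ and bound the network output $K=\cO(\sqrt d)$ as the paper does, you sidestep this issue. Your Tweedie-formula derivation of realizability and your parameter-count for $\cN(\cS,1/n_1)$ are fine and equivalent to the paper's explicit Gaussian integration plus separate coverings of $V$ and $\Sigma^{-1}$.
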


To provide fidelity and reward guarantees of $\hat{P}_a$: the generated distribution of $x$ given condition $\hat{y} = a$, we will need the following lemma. It provides a subspace recovery guarantee between $V$(score matching output) and $A$(ground truth), as well as a distance measure between distributions $P_a$ and $\hat{P}_a$, given score matching error $\epsilon_{diff}$.

Note $P_a$ and $\hat{P}_a$ are over $x$, which admits an underlying low-dimensional structure $x = Az$. Thus we measure distance between $P_a$ and $\hat{P}_a$ by defining 
\begin{definition}
\label{def:tv}
    $TV(\hat{P}_a):= \dtv \left(P^{LD}_{t_0} (z \mid \hat{f} (Az) = a), (U^{\top} V^{\top})_{\#}\hat{P}_a \right)$ with notations:
    \begin{itemize}
    \item $\dtv(\cdot, \cdot)$ is the TV distance between two distribution.
    \item $f_{\sharp} P$ denotes a push-forward measure, i.e., for any measurable $\Omega$, $(f_{\sharp}P)(\Omega) = P(f^{-1}(\Omega))$
    \item $(V^{\top})_{\#}\hat{P}_a$ pushes generated $\hat{P}_a$ forward to the low dimensional subspace using learned subspace matrix $V$. $U$ is an orthonormal matrix of dimension $d$.
    \item $P^{LD}_{t_0} (z \mid \hat{f} (Az) = a)$ is close to $(A^{\top})_{\#}P_a$, with $t_0$ taking account for the early stopping in backward process.
\end{itemize}
\end{definition}

We note that there is a distribution shift between the training and the generated data, which has a profound impact on the generative performance. We quantify the influence of distribution shift by the following class restricted divergence measure.

\begin{definition}
\label{def:dst_sft}
Distribution shift between two arbitrary distributions $P_1$ and $P_2$ restricted under function class $\cL$ is defined as 
\begin{align*}
\textstyle \cT(P_1, P_2; \cL) = \sup_{l \in \cL} \EE_{x \sim P_1}[l(x)] / \EE_{x \sim P_2}[l(x)] \quad \text{with arbitrary two distributions~} P_1, P_2.
\end{align*}
\end{definition}

Definition \ref{def:dst_sft} is well perceived in bandit and RL literature \citep{munos2008finite, liu2018breaking, chen2019information, fan2020theoretical}. 

\begin{lemma} 
\label{lmm:diff_results}
Given the square score matching error \eqref{equ:score_error} upper bounded by $\epsilon_{diff}^2$,
and when $P_z$ satisfying Assumption \ref{asmp:tail} with $c_0 I_d \preceq \EE_{z \sim P_z} \left[ z z^{\top}\right]$, it guarantees on for $x \sim \hat{P}_a$ and $\subangle{V}{A}:= \norm{VV^\top - AA^\top}^2_{\rm F}$ that
    \begin{align}
        \label{equ:orth_dtb} &(I_D - VV^{\top}) x \sim {\sf N}(0, \Lambda), \quad \Lambda \prec c t_0 I_D,\\
        \label{equ:subspace_rcv}&\subangle{V}{A} = \tilde{\cO}\left(\frac{t_0 }{c_0} \cdot \epsilon^2_{diff} \right).
    \end{align}
In addition,
    \begin{equation}
    \label{equ:tv_Pq}
        TV(\hat{P}_a) = \tilde{\cO}\left(\sqrt{\frac{\cT(P(x, \hat{y} = a), P_{x\hat{y}}; \bar{\cS})}{c_0}} \cdot \epsilon_{diff} \right).
    \end{equation}
\end{lemma}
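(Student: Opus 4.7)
The lemma bundles three separate conclusions, each of which I would prove by exploiting the encoder-decoder structure \eqref{equ:function_class} together with the score matching bound $\epsilon_{diff}^2$. My overall strategy is to first use the structural form of $\hat s$ to reduce the backward SDE on the orthogonal subspace to a closed-form linear Gaussian SDE, then to decompose the score error into a ``subspace-alignment'' summand and a ``residual-fit'' summand, and finally to invoke Girsanov together with the class-restricted distribution shift to convert the $L^2$ score error into a TV bound on the latent conditional.

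For \eqref{equ:orth_dtb}, I would project the continuous backward SDE with $g(t)=1$ onto $I_D - VV^\top$. Because $(I_D - VV^\top) V\psi(V^\top x, y, t) = 0$, the estimated score collapses in this direction to $-(I_D - VV^\top) x / h(T-t)$, so $Y_t := (I_D - VV^\top)\tilde X^{y,\Leftarrow}_t$ satisfies a linear time-inhomogeneous SDE driven by $(I_D - VV^\top) d\bar W_t$. The initial condition is the projection of ${\sf N}(0,I_D)$, hence Gaussian, and variation of constants keeps $Y_t$ Gaussian throughout. A direct change of variables $u = T-s$ turns $\int_0^{T-t_0}\bigl[\tfrac12 - 1/h(T-s)\bigr]\,ds$ into $\tfrac12(T - t_0) - \ln\bigl((e^T-1)/(e^{t_0}-1)\bigr)$, and plugging this into the closed-form covariance shows the accumulated variance at early stopping is dominated by $h(t_0) = 1 - e^{-t_0} \le c t_0$, yielding $\Lambda \prec c t_0 I_D$.

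For \eqref{equ:subspace_rcv}, I would write the true conditional score as $\nabla \log p_t(x \mid y) = (\alpha(t)\,A\mu_A(x,y,t) - x)/h(t)$ with $\mu_A(x,y,t) = \mathbb{E}[z \mid X_t = x, y]$, which is valid under Assumption \ref{assumption:subspace}. Subtracting $\hat s$ and using the orthogonal decomposition $\mathbb R^D = \mathrm{col}(V) \oplus \mathrm{col}(V)^\perp$ gives
\[
h(t)^2\,\|\nabla \log p_t - \hat s\|_2^2 = \|(I_D - VV^\top)\,\alpha(t)\,A\mu_A\|_2^2 + \|V^\top A\,\alpha(t)\mu_A - \psi\|_2^2.
\]
The first summand isolates the subspace misalignment, and the identity $\subangle{V}{A} = 2\|(I_D - VV^\top)A\|_{\rm F}^2$ lets me read off the angle from a bound on that summand. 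Next, I would establish $\mathbb E_{x,y}[\mu_A \mu_A^\top] \succeq c_0'\, I_d$ after averaging over the forward noise by applying the tower property together with $\mathbb E[zz^\top] \succeq c_0 I_d$, handling the light-tail portion via Assumption \ref{asmp:tail}. Integrating the score matching inequality over $t \in [t_0, T]$ and extracting the $t_0$ from the $\alpha(t)^2/h(t)^2$ factor at small $t$ produces $\subangle{V}{A} = \tilde \cO(t_0 \epsilon_{diff}^2/c_0)$.

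For \eqref{equ:tv_Pq}, I would work in the learned $d$-dimensional coordinates by pushing the backward trajectory through $U^\top V^\top$ and comparing to the low-dimensional conditional $P^{LD}_{t_0}(z \mid \hat f(Az) = a)$. Girsanov's theorem applied to the reduced SDE gives $\mathrm{KL}$ between target and generated measures bounded by the time-integrated squared score error evaluated under $P_t(\cdot \mid \hat y = a)$; Pinsker then upgrades this to TV. Since \eqref{equ:score_error} is averaged under $P_{x\hat y}$ rather than under the slice $P(x, \hat y = a)$, I would absorb the ratio of densities into the class-restricted $\cT(P(x, \hat y = a), P_{x\hat y}; \bar \cS)$ of Definition \ref{def:dst_sft}, taking $\bar \cS$ to contain the squared score-residual functions. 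The factor $1/c_0$ under the square root emerges when translating the low-dimensional covariance lower bound $\mathbb E[zz^\top] \succeq c_0 I_d$ into the early-stopping normalization needed to bound the initialization and discretization terms in Girsanov.

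The main obstacle is \eqref{equ:subspace_rcv}: turning an $L^2$ score error into a subspace angle bound requires the signal-to-noise trade-off to be handled carefully, since small $t$ contributes the most to the score loss (where the $1/h(t)^2$ factor blows up) but also provides the weakest averaging of $z$ against the noise. The $t_0$ factor must arise from a careful Fubini-style integration rather than a loose sup over $t$, and the lower bound on $\mathbb E_{x,y}[\mu_A\mu_A^\top]$ must be established uniformly enough in $t$ that it does not collapse near the early stopping boundary.
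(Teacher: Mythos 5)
Your proposal reconstructs from scratch what the paper disposes of by citation: all three assertions are obtained by invoking \cite[Theorem~3]{chen2023score} (items 1 and 3 for \eqref{equ:orth_dtb} and \eqref{equ:subspace_rcv}; item 2 together with the class-restricted distribution shift for \eqref{equ:tv_Pq}), so the paper's proof is essentially a pointer. Your sketch tracks the internal mechanism of that cited theorem. Projecting the backward SDE onto $I_D - VV^\top$ and observing that the encoder--decoder score collapses to $-x/h(T-t)$ in the orthogonal direction is the correct way to obtain a linear Gaussian SDE and the covariance bound $\Lambda \prec c t_0 I_D$. For \eqref{equ:tv_Pq}, the Girsanov--Pinsker route with the density ratio absorbed into $\cT(P(x,\hat y = a), P_{x\hat y};\bar\cS)$ is the same step the paper takes before handing off to the citation.

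There is, however, a real gap in your argument for \eqref{equ:subspace_rcv}. The orthogonal decomposition of the score residual is correct, and so is the identity $\subangle{V}{A} = 2\norm{(I_D - VV^\top)A}_{\rm F}^2$. But the tower property does \emph{not} give the lower bound $\EE_{x,y}[\mu_A\mu_A^\top] \succeq c_0' I_d$. By the law of total covariance, $\EE[\mu_A\mu_A^\top] = \EE[zz^\top] - \EE[\operatorname{Cov}(z \mid X_t, \hat y)] \preceq \EE[zz^\top]$, so the inequality runs in the opposite direction unless you additionally control $\EE[\operatorname{Cov}(z \mid X_t, \hat y)]$ from above. For $t$ close to $t_0$ this is possible because $X_t$ nearly determines $z = A^\top X_t$, giving $\operatorname{Cov}(z\mid X_t,\hat y) = \cO(h(t))$, and under Assumption~\ref{asmp:tail} one can control the tails. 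Your stated concern is also aimed in the wrong direction: the lower bound does not deteriorate near $t_0$ but near $t = T$, where $\operatorname{Cov}(z\mid X_t) \to \operatorname{Cov}(z)$ and the signal in $\mu_A$ vanishes. What saves the argument is that $\int_{t_0}^T \alpha^2(t)/h^2(t)\,\diff t$ has its mass concentrated near $t_0$; the correct move is to restrict the time integral to a short initial window $[t_0, c]$ where the lower bound on $\EE[\mu_A\mu_A^\top]$ holds, and the $1/t_0$ divergence of the integrand on that window produces the $t_0$ factor in \eqref{equ:subspace_rcv}. Without this repair, the step bounding the misalignment summand below by a constant multiple of $\subangle{V}{A}$ is unjustified.
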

with $\bar{\cS} = \left\{\frac{1}{T - t_0} \int_{t_0}^T \EE_{x_t \mid x} \norm{\nabla \log p_t(x_t \mid y) - s(x_t,  y, t)}_2^2 \diff t : s \in \cS\right\}$. $TV(\hat{P}_a)$ and $\cT(P(x, \hat{y} = a), P_{x\hat{y}}; \bar{\cS})$ are defined in Definition~\ref{def:tv} and~\ref{def:dst_sft}.
\begin{proof}
    Proof is in $\S$\ref{pf:diff_results}.
\end{proof}

\subsection{Proof of Theorem~\ref{thm:fidelity}}\label{pf:fidelity}

\begin{proof}
\textbf{Proof of $\subangle{V}{A}$.} By Lemma 3 of \cite{chen2023score}, we have
\begin{equation*}
    \subangle{V}{A} = {\cO}\left(\frac{t_0 }{c_0} \cdot \epsilon^2_{diff} \right)
\end{equation*}
when the latent $z$ satisfying Assumption \ref{asmp:tail} and $c_0 I_d \preceq \EE_{z \sim P_z} \left[ z z^{\top}\right]$. Therefore, by \eqref{equ:score_error}, we have with high probability that
\begin{equation*}
    \subangle{V}{A} = \tilde{\cO} \left( \frac{1}{c_0} \sqrt{\frac{\cN(\cS, 1/n_1) (D\vee d^2)} {n_1}} \right).
\end{equation*}
When Assumption \ref{assumption:gaussian_design} holds, plugging in $c_0 = \lambda_{\min}$ and $\cN(\cS, 1/n_1) = \cO((d^2 + Dd)\log (D d n_1))$, it gives
\begin{equation*}
    \subangle{V}{A} = \tilde{\cO} \left( \frac{1}{\lambda_{\min}} \sqrt{\frac{(D \vee d^2)d^2 + (D \vee d^2)Dd} {n_1}} \right),
\end{equation*}
where $\tilde \cO$ hides logarithmic terms. When $D > d^2$, which is often the case in practical applications, we have
\begin{align*}
\subangle{V}{A} = \tilde{\cO}\left(\frac{1}{\lambda_{\min}} \sqrt{\frac{Dd^2 + D^2d} {n_1}} \right).
\end{align*}

\textbf{Proof of $\EE_{x \sim \hat{P}_a} [\|\ox\|_2]$.} 
By the definition of $\ox$ that $\ox = (I_D - AA^{\top})x$,
\begin{equation*}
        \EE_{x \sim \hat{P}_a} [\|x^{\perp}\|_2] = \EE_{x \sim \hat{P}_a} [\|(I_D - AA^{\top})x\|_2] \leq \sqrt{\EE_{x \sim \hat{P}_a} [\|(I_D - AA^{\top})x\|_2^2]}.
    \end{equation*}   
Score matching returns $V$ as an approximation of $A$, then
\begin{align*}
    \|(I_D - AA^{\top})x\|_2 &\leq \|(I_D - VV^{\top})x\|_2 + \|(VV^{\top} - AA^{\top})x\|_2,\\
    \EE_{x \sim \hat{P}_a} [\|(I_D - AA^{\top})x\|^2_2] &\leq 2 \EE_{x \sim \hat{P}_a} [\|(I_D - VV^{\top})x\|^2_2] + 2 \EE_{x \sim \hat{P}_a} [\|(VV^{\top} - AA^{\top})x\|^2_2],
\end{align*}
where by \eqref{equ:orth_dtb} in Lemma \ref{lmm:diff_results} we have
\begin{equation*}
    (I_D - VV^{\top})x \sim \sf N(0, \Lambda), \quad \Lambda \prec c t_0 I
\end{equation*}
for some constant $c \geq 0$. Thus 
\begin{equation}
\label{equ:bkw_Gaussian_l2}
    \EE_{x \sim \hat{P}_a} \left[ \|(I_D - VV^{\top})x\|^2_2 \right] = \operatorname{Tr}(\Lambda) \leq c t_0 D.
\end{equation}
On the other hand,
\begin{equation*}
    \|(VV^{\top} - AA^{\top})x\|^2_2 \leq \|VV^{\top} - AA^{\top}\|^2_{op} \|x\|^2_2 \leq \|VV^{\top} - AA^{\top}\|^2_{F} \|x\|^2_2,
\end{equation*}
where $\|VV^{\top} - AA^{\top}\|^2_{F}$ has an upper bound as in \eqref{equ:subspace_rcv} and $\EE_{x \sim \hat{P}_a} \left[ \|x\|^2_2 \right] )$ is bounded in Lemma \ref{lmm:exp_x_norm} by
\begin{equation*}
    \EE_{x \sim \hat{P}_a} \left[ \|x\|^2_2 \right] = \cO \left( ct_0D + M(a) \cdot  (1+ TV(\hat{P}_a) \right).
\end{equation*}
with $M(a) = O \left(  \frac{a^2}{\| {\beta}^* \|_{\Sigma}} + d \right)$.

Therefore, to combine things together, we have
\begin{align*}
     \EE_{x \sim \hat{P}_a} [\|x^{\perp}\|_2] &\leq \sqrt{2 \EE_{x \sim \hat{P}_a} [\|(I_D - VV^{\top})x\|^2_2] + 2 \EE_{x \sim \hat{P}_a} [\|(VV^{\top} - AA^{\top})x\|^2_2]} \\
    &\leq c^{\prime} \sqrt{t_0 D} + 2 \sqrt{\subangle{V}{A}}\cdot \sqrt{\EE_{x \sim \hat{P}_a} \left[ \|x\|^2_2 \right]}\\
    &= \cO \left(\sqrt{t_0 D} + \sqrt{\subangle{V}{A}}\cdot \sqrt{M(a)}\right) .
\end{align*}
$\cO$ hides multiplicative constant and $\sqrt{\subangle{V}{A} t_0D}$, $\sqrt{\subangle{V}{A} M(a) TV(\hat{P}_a)}$, which are terms with higher power of $n_1^{-1}$ than the leading term.
\end{proof}

\paragraph{Remark of Theorem \ref{thm:fidelity}.} 
1. Guarantee \eqref{eq:subspave_covering} applies to general distributions with light tail as assumed in Assumption \ref{asmp:tail}.

2. Guarantee \eqref{eq:off_norm} guarantees high fidelity of generated data in terms of staying in the subspace when we have access to a large unlabeled dataset.

3. Guarantee \eqref{eq:off_norm} shows that $\EE_{x \sim \hat{P}_a} [\|\ox\|_2]$ scales up when $t_0$ goes up, which aligns with the dynamic in backward process that samples are concentrating to the learned subspace as $t_0$ goes to $0$. Taking $t_0 \to 0$, $\EE_{x \sim \hat{P}_a} [\|\ox\|]$ has the decay in $O(n_1^{-\frac{1}{4}})$. However, taking $t_0 \to 0$ is not ideal for the sake of high reward of $x$, we take the best trade-off of $t_0$ later in Theorem \ref{thm:parametric}.

\subsection{Proof of Theorem~\ref{thm:parametric}}\label{pf:parametric}
Proof of Theorem~\ref{thm:parametric} and that of some results in "Implications and Discussions" following  the theorem in main paper are provided in this section. This section breaks down into three parts: \textbf{ Suboptimality Decomposition}, \textbf{Bounding $\cE_1$ Relating to Offline Bandits}, \textbf{Bounding $\cE_2$ and the Distribution Shift in Diffusion}.

\subsubsection{$\subopt(\hat{P}_a ; y^* = a)$ Decomposition}
\label{sec:dcp}
\begin{proof}
    Recall notations $\hat{P}_a:= \hat{P}(\cdot | \hat{y} = a)$ (generated distribution) and $P_a:= P(\cdot | \hat{y} = a)$ (target distribution) and $f^*(x) = g^*(\px) + h^*(\ox)$.
$\EE_{x \sim \hat{P}_a} [f^{\star}(x)]$ can be decomposed into 3 terms:
\begin{align*}
    \EE_{x \sim \hat{P}_a} [f^{\star}(x)] \geq&  \EE_{x \sim P_a} [f^*(x)] - \left|\EE_{x \sim \hat{P}_a} [f^*(x)] - \EE_{x \sim P_a} [f^*(x)]\right|\\
    \geq & \EE_{x \sim P_a} [\hat{f}(x)] -  \EE_{x \sim P_a} \left[ \left|\hat{f}(x) - f^*(x)\right| \right] - \left|\EE_{x \sim \hat{P}_a} [f^*(x)] - \EE_{x \sim P_a} [f^*(x)]\right|\\
    \geq& \EE_{x \sim P_a} [\hat{f}(x)] - \underbrace{\EE_{x \sim P_a} \left[ \left|\hat{f}(x) - g^*(x)\right| \right]}_{\cE_1}\\
    &- \underbrace{\left| \EE_{x \sim P_a} [g^*(\px)] - \EE_{x \sim \hat{P}_a} [g^*(\px)]\right|}_{\cE_2} -  \underbrace{\EE_{x \sim \hat{P}_a} [h^*(\ox)]}_{\cE_3} ,
\end{align*}
where $\EE_{x \sim P_a} [\hat{f}(x)] = \EE_{a \sim q} [a]$ and we use $x = \px$, $f^*(x) = g^*(x)$ when $x \sim P_a$.
Therefore
\begin{align*}
    \subopt(\hat{P}_a ; y^* = a) &= a - \EE_{x \sim \hat{P}_a} [f^{\star}(x)] \\
    &\leq \underbrace{\EE_{x \sim P_a} \left[ \left|(\hat{\theta} - \theta^*)^{\top} x\right| \right]}_{\cE_1} + \underbrace{\left| \EE_{x \sim P_a} [g^*(\px)] - \EE_{x \sim \hat{P}_a} [g^*(\px)]\right|}_{\cE_2} \\
    & \quad +  \underbrace{\EE_{x \sim \hat{P}_a} [h^*(\ox)]}_{\cE_3}.
\end{align*}
\end{proof}

$\cE_1$ comes from regression: prediction/generalization error onto $P_a$, which is independent from any error of distribution estimation that occurs in diffusion. $\cE_2$ and $\cE_3$ do not measure regression-predicted $\hat{f}$, thus they are independent from the prediction error in $\hat{f}$ for pseudo-labeling. $\cE_2$ measures the disparity between $\hat{P}_a$ and $P_a$ on the subspace support and $\cE_3$ measures the off-subspace component in generated $\hat{P}_a$.

\subsubsection{Bounding $\cE_1$ Relating to Offline Bandits}
For all $x_i \in \cD_{\rm label}, y_i = f^*(x_i) + \epsilon_i = g(x_i) + \epsilon_i$. Thus, trained on $\cD_{\rm label}$ the prediction model $\hat{f}$ is essentially approximating $g$. By estimating $\theta^*$ with ridge regression on $\cD_{\rm label}$, we have $\hat f (x)= \hat{\theta}^{\top} x$ with
\begin{equation}
     \hat{\theta} = \left(X^{\top} X+\lambda I\right)^{-1} X^{\top}\left(X \theta^*+\eta\right),
\end{equation}
where $X^{\top} = (x_1, \cdots, x_i, \cdots, x_{n_2})$ and $\eta = (\epsilon_1, \cdots, \epsilon_i, \cdots, \epsilon_{n_2})$.

\begin{lemma}
\label{lmm:reg_err}
Under Assumption \ref{assumption:subspace} and \ref{assumption:linear_reward} and given $\epsilon_i \sim \sf N (0, \sigma^2)$, define $V_{\lambda}:= X^{\top} X+\lambda I$, $\hat{\Sigma}_{\lambda}:= \frac{1}{n_2} V_{\lambda} $ and $\Sigma_{P_a}:= \EE_{x \sim P_a} x x^{\top}$ the covariance matrix (uncentered) of $P_a$, and take $\lambda = 1$, then with high probability
\begin{equation}
    \cE_1 \leq \sqrt{\operatorname{Tr}( \hat{\Sigma}_{\lambda}^{-1} \Sigma_{P_a})} \cdot \frac{\cO \left( \sqrt{d \log n_2 } \right)}{\sqrt{n_2}}.
\end{equation}
\end{lemma}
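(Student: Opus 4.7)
The plan is to decompose $\cE_1$ via a weighted Cauchy--Schwarz inequality in the $V_\lambda$-norm, which is the standard trick for analyzing ridge regression in the off-policy bandit literature. Concretely, write
\begin{equation*}
\hat\theta - \theta^{*} \;=\; V_\lambda^{-1} X^\top \eta \;-\; \lambda\, V_\lambda^{-1} \theta^{*},
\end{equation*}
using the closed form of the ridge estimator. Then for any $x$,
\begin{equation*}
\bigl|(\hat\theta-\theta^{*})^\top x\bigr| \;\le\; \|\hat\theta-\theta^{*}\|_{V_\lambda}\,\|x\|_{V_\lambda^{-1}},
\end{equation*}
so by Jensen,
\begin{equation*}
\cE_1 \;\le\; \|\hat\theta-\theta^{*}\|_{V_\lambda}\cdot \sqrt{\EE_{x\sim P_a}\bigl[\|x\|_{V_\lambda^{-1}}^{2}\bigr]} \;=\; \|\hat\theta-\theta^{*}\|_{V_\lambda}\cdot \sqrt{\operatorname{Tr}(V_\lambda^{-1}\Sigma_{P_a})}.
\end{equation*}
Using $V_\lambda^{-1} = \tfrac{1}{n_2}\hat\Sigma_\lambda^{-1}$ already produces the $\sqrt{\operatorname{Tr}(\hat\Sigma_\lambda^{-1}\Sigma_{P_a})}/\sqrt{n_2}$ factor in the claim.

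Next, I would bound the deviation $\|\hat\theta-\theta^{*}\|_{V_\lambda}$ by splitting it into a stochastic piece $\|X^\top \eta\|_{V_\lambda^{-1}}$ and a bias piece $\lambda\|V_\lambda^{-1}\theta^{*}\|_{V_\lambda} \le \sqrt{\lambda}\|\theta^{*}\|_2 = \sqrt{\lambda}$. For the stochastic piece, I would invoke the self-normalized martingale tail inequality for sub-Gaussian noise (Abbasi-Yadkori--style), which yields
\begin{equation*}
\|X^\top\eta\|_{V_\lambda^{-1}}^{2} \;\le\; 2\sigma^{2}\log\!\Bigl(\tfrac{\det(V_\lambda)^{1/2}\det(\lambda I)^{-1/2}}{\delta}\Bigr)
\end{equation*}
with probability $1-\delta$. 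The key observation here is that, under Assumption~\ref{assumption:subspace}, every $x_i = A z_i$ lies in the $d$-dimensional column span of $A$, so $X^\top X$ has rank at most $d$ and the $D-d$ directions orthogonal to $\operatorname{range}(A)$ contribute only $\lambda$ to each eigenvalue of $V_\lambda$. Hence $\log\det(V_\lambda/\lambda I)$ collapses from an $O(D\log n_2)$ quantity to $O(d\log n_2)$, giving $\|\hat\theta-\theta^{*}\|_{V_\lambda} = \cO(\sqrt{d\log n_2} + \sqrt{\lambda})$ with $\lambda = 1$.

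Combining the two factors yields the advertised bound. The main delicacy in the argument is precisely this \emph{effective-dimension} reduction: one must be careful to apply the determinant bound on the $d$-dimensional active subspace rather than on the ambient space $\RR^D$, so that the log-covering quantity scales with the intrinsic dimension $d$. Everything else (Cauchy--Schwarz, the closed-form ridge decomposition, and the sub-Gaussian self-normalized bound) is standard; translating between $V_\lambda$ and the normalized $\hat\Sigma_\lambda$ is purely algebraic.
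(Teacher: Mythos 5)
Your proposal is correct and follows essentially the same route as the paper's proof: Cauchy--Schwarz in the $V_\lambda$-norm, the closed-form ridge decomposition into a bias term bounded by $\sqrt{\lambda}$ and a stochastic term controlled by a self-normalized martingale bound, and the effective-dimension reduction from $D$ to $d$ coming from the rank-$d$ structure $x_i = Az_i$. The only cosmetic difference is how that reduction is formalized: you keep the Abbasi--Yadkori determinant bound in the ambient $\RR^D$ and argue that $\log\det(V_\lambda/\lambda I)$ collapses to $O(d\log n_2)$ because $D-d$ eigenvalues of $V_\lambda$ are exactly $\lambda$, whereas the paper uses the push-through identity $\eta^\top X(X^\top X + \lambda I_D)^{-1}X^\top\eta = \eta^\top Z(Z^\top Z + \lambda I_d)^{-1}Z^\top\eta$ (with $X = ZA^\top$, $XX^\top = ZZ^\top$) to rewrite the stochastic term explicitly in $d$ dimensions before invoking the self-normalized bound. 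These are equivalent; both are correct.
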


\begin{proof}
    Proof is in $\S$\ref{pf:reg_err}.
\end{proof}

\begin{lemma}
\label{lmm:distribution_sft}
    Under Assumption \ref{assumption:subspace}, \ref{assumption:linear_reward} and \ref{assumption:gaussian_design}, when $\lambda = 1$, $P_a$ has a shift from the empirical marginal of $x$ in dataset by
    \begin{equation}
        \operatorname{Tr}( \hat{\Sigma}_{\lambda}^{-1} \Sigma_{P_a}) \leq \cO \left( \frac{ a^2 }{ \| \beta^* \|_{\Sigma}} + d \right).
    \end{equation}
    when $n_2 = \Omega ( \max \{\frac{1}{\lambda_{\min}}, \frac{d}{\|\beta^*\|^2_{\Sigma}} \})$.
\end{lemma}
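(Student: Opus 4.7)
}

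The plan is to exploit the joint Gaussianity of $(x,\hat{y})$ (conditional on the ridge estimator $\hat{\theta}$) in order to obtain a closed-form expression for $\Sigma_{P_a}$, and then reduce the trace computation to a $d$-dimensional problem on the latent subspace. Since $x = Az$ with $z \sim {\sf N}(0,\Sigma)$ and $\hat{y} = \hat{\theta}^{\top} x + \xi$ with $\xi \sim {\sf N}(0,\nu^2)$, the pair $(x,\hat{y})$ is jointly Gaussian. Setting $u = A^{\top}\hat{\theta} \in \RR^d$, the standard Gaussian conditioning formulas give
\begin{equation*}
\Sigma_{P_a} \;=\; \Var(x\mid \hat{y}=a) + \EE[x\mid\hat{y}=a]\EE[x\mid\hat{y}=a]^{\top} \;=\; A\,M\,A^{\top},
\end{equation*}
where $M = \Sigma - \frac{\Sigma u u^{\top}\Sigma}{u^{\top}\Sigma u + \nu^2} + \frac{a^2\,\Sigma u u^{\top}\Sigma}{(u^{\top}\Sigma u + \nu^2)^2}$. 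In particular $\Sigma_{P_a}$ has rank at most $d$ and is supported on $\mathrm{col}(A)$.

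Next I would exploit the low-rank structure of the design. Because $X = ZA^{\top}$ with $Z$ the stack of latent samples $z_i$, one has
\begin{equation*}
\hat{\Sigma}_\lambda \;=\; A\!\left(\hat{\Sigma}_z + \tfrac{\lambda}{n_2}I_d\right)\!A^{\top} + \tfrac{\lambda}{n_2}(I_D - AA^{\top}), \qquad \hat{\Sigma}_z := \tfrac{1}{n_2}Z^{\top}Z,
\end{equation*}
so that $\hat{\Sigma}_\lambda^{-1} = A(\hat{\Sigma}_z+\tfrac{\lambda}{n_2}I_d)^{-1}A^{\top} + \tfrac{n_2}{\lambda}(I_D - AA^{\top})$. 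Since $\Sigma_{P_a}$ is supported on $\mathrm{col}(A)$, the perpendicular piece contributes nothing, and cyclicity of the trace gives
\begin{equation*}
\operatorname{Tr}\!\left(\hat{\Sigma}_\lambda^{-1}\Sigma_{P_a}\right) \;=\; \operatorname{Tr}\!\left((\hat{\Sigma}_z + \tfrac{\lambda}{n_2}I_d)^{-1} M\right).
\end{equation*}

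The third step is concentration. By standard Gaussian covariance concentration (e.g.\ Wainwright Ch.~6), when $n_2 = \Omega(d/\lambda_{\min})$ one has $\hat{\Sigma}_z \succeq \tfrac{1}{2}\Sigma$ with high probability, hence $(\hat{\Sigma}_z+\tfrac{\lambda}{n_2}I_d)^{-1} \preceq 2\Sigma^{-1}$ (the $\lambda/n_2$ term is negligible for $\lambda = 1$ and $n_2$ large). Plugging in and using $\operatorname{Tr}(\Sigma^{-1}\Sigma) = d$ and $\operatorname{Tr}(\Sigma^{-1}\Sigma u u^{\top}\Sigma) = u^{\top}\Sigma u$:
\begin{equation*}
\operatorname{Tr}(\Sigma^{-1} M) \;=\; d \;-\; \frac{u^{\top}\Sigma u}{u^{\top}\Sigma u + \nu^2} \;+\; \frac{a^2\, u^{\top}\Sigma u}{(u^{\top}\Sigma u + \nu^2)^2} \;\leq\; d \;+\; \frac{a^2}{u^{\top}\Sigma u}.
\end{equation*}

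The final and most delicate step is replacing $u^{\top}\Sigma u$ with $\|\beta^*\|_{\Sigma}^2$. Writing $u = \beta^* + A^{\top}(\hat{\theta}-\theta^*)$, a triangle inequality gives $\|u-\beta^*\|_{\Sigma} \leq \|\hat{\theta}-\theta^*\|_{A\Sigma A^{\top}} \leq \|\hat{\theta}-\theta^*\|_{\hat{\Sigma}_\lambda}\cdot O(1)$ using the $n_2 = \Omega(1/\lambda_{\min})$ concentration again to relate $A\Sigma A^{\top}$ to $\hat{\Sigma}_\lambda$ restricted to $\mathrm{col}(A)$. The ridge error bound that underlies Lemma~\ref{lmm:reg_err} yields $\|\hat\theta-\theta^*\|_{\hat{\Sigma}_\lambda} = \tilde{O}(\sqrt{d/n_2})$, so under the stated sample size requirement $n_2 = \Omega(d/\|\beta^*\|_{\Sigma}^2)$ the perturbation is a constant fraction of $\|\beta^*\|_{\Sigma}$, which gives $u^{\top}\Sigma u \geq c\|\beta^*\|_{\Sigma}^2$. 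Combining all inequalities yields $\operatorname{Tr}(\hat{\Sigma}_\lambda^{-1}\Sigma_{P_a}) = O(a^2/\|\beta^*\|_{\Sigma}^2 + d)$, matching the claim (modulo what appears to be a typo in the lemma statement: the squared $\Sigma$-norm in the denominator). The main obstacle is this last step, i.e.\ propagating the ridge regression error from the ambient norm $\hat{\Sigma}_\lambda$ down to the $\Sigma$-norm on the latent subspace while ensuring the signal $u^{\top}\Sigma u$ does not shrink below a constant multiple of $\|\beta^*\|_{\Sigma}^2$.
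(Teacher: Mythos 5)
Your proposal is correct and follows essentially the same route as the paper: reduce the $D$-dimensional trace to a $d$-dimensional one on the latent subspace, compute the conditional second moment of $z\mid\hat y=a$ in closed form via Gaussian conditioning, control the whitened sample covariance $(\hat\Sigma_z+\tfrac{\lambda}{n_2}I_d)^{-1}$ by concentration (the paper invokes a specific lemma of Chen et al.\ 2020 on $\Vert\Sigma^{1/2}\hat\Sigma_\lambda^{-1}\Sigma^{1/2}-I_d\Vert_2$, you use standard sample-covariance concentration — same effect), and finally lower-bound $\hat\beta^\top\Sigma\hat\beta$ in terms of $\Vert\beta^*\Vert_\Sigma^2$ via triangle inequality and the ridge error from Lemma~\ref{lmm:reg_err}. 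You also correctly flag the typo: the paper's own chain of inequalities produces $a^2/\Vert\beta^*\Vert_\Sigma^2$ in the denominator rather than $a^2/\Vert\beta^*\Vert_\Sigma$ as written in the lemma statement.
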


\begin{proof}
Proof is in $\S$\ref{pf:distribution_sft}.
\end{proof}

\subsubsection{Bounding $\cE_2$ and the Distribution Shift in Diffusion}

\begin{lemma}
\label{lmm:E2}
Under Assumption \ref{assumption:subspace}, \ref{assumption:linear_reward} and 
\ref{assumption:gaussian_design}, when $t_0 = \left((Dd^2 + D^2d) / n_1\right)^{1/6}$
\begin{align*}
    \cE_2 = \tilde \cO\left(\sqrt{\frac{\cT(P(x, \hat{y} = a), P_{x\hat{y}}; \bar{\cS})}{\lambda_{\min}}} \cdot  \left(\frac{Dd^2 + D^2d} {n_1} \right)^{\frac{1}{6}} \cdot a \right).
\end{align*}
\end{lemma}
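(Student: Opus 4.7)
The plan is to bound $\cE_2 = |\EE_{P_a}[g^*(\px)] - \EE_{\hat{P}_a}[g^*(\px)]|$ by exploiting that $g^*(\px) = (\theta^*)^\top x$ is a linear functional with $\theta^* = A\beta^*$ and by using the learned subspace $V$ as a bridge. I would first split $(\theta^*)^\top x = (\theta^*)^\top VV^\top x + (\theta^*)^\top(I-VV^\top)x$ and treat the two pieces separately, then invoke the score-matching error of Lemma~\ref{lmm:scr_mtc_err} together with the TV bound of Lemma~\ref{lmm:diff_results}.

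For the off-$V$ residual, under $P_a$ the support lives in $\mathrm{span}(A)$, so $(I-VV^\top)x = (I-VV^\top)AA^\top x$; this contributes at most $\|(VV^\top - AA^\top)A\|_{\mathrm{op}}\cdot \EE_{P_a}[\|x\|_2] \leq \sqrt{\subangle{V}{A}}\cdot \EE_{P_a}[\|x\|_2]$. Under $\hat{P}_a$, equation \eqref{equ:orth_dtb} gives $(I-VV^\top)x \sim \mathcal{N}(0,\Lambda)$ with $\Lambda \prec c t_0 I_D$, so the corresponding piece is at most $\|\theta^*\|_2\cdot \cO(\sqrt{t_0 D})$. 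Both are lower order under the prescribed $t_0$.

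For the on-$V$ part I would push both $P_a$ and $\hat{P}_a$ to $\RR^d$ by $V^\top$ and rotate by the orthonormal $U$ of Definition~\ref{def:tv}, reducing the problem to comparing $P^{LD}_{t_0}(z\mid \hat f(Az)=a)$ with $(U^\top V^\top)_\# \hat{P}_a$, whose TV distance is controlled by \eqref{equ:tv_Pq}. A triangle step accounts for the early-stopping gap between $P^{LD}(z\mid \hat f(Az)=a)$ (the pushforward of $P_a$) and its noised version $P^{LD}_{t_0}(\cdot)$, which corresponds to running the OU flow for time $t_0$ on the $d$-dimensional latent and costs only $\cO(\sqrt{t_0 d})$ against a linear functional. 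The resulting functional $z\mapsto (\theta^*)^\top VU z$ on $\RR^d$ is unbounded, so the direct bound $|\EE_P f - \EE_Q f|\leq 2\|f\|_\infty\,\dtv(P,Q)$ is unavailable; I would instead use the truncation inequality
\[
|\EE_P f - \EE_Q f| \;\leq\; R\cdot \dtv(P,Q) + \EE_P[|f|\mathbf{1}_{|f|>R}] + \EE_Q[|f|\mathbf{1}_{|f|>R}],
\]
with $R = \tilde\Theta(a + \sqrt{d})$. Under $P^{LD}_{t_0}(\cdot \mid a)$, $(\beta^*)^\top z$ is (sub-)Gaussian with mean near $a$ (a direct consequence of conditioning on $\hat f(Az)=a$ with $\hat\theta$ close to $\theta^*$) and variance at most $\lambda_{\max} \leq 1$ by Assumption~\ref{assumption:gaussian_design}; the pushforward $(U^\top V^\top)_\# \hat{P}_a$ inherits sub-Gaussian tails from the Gaussian bulk quantified in the proof of Lemma~\ref{lmm:diff_results}, so the truncation error is polynomially small in $n_1^{-1}$ and absorbed into the $\tilde{\cO}$.

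Assembling, $\cE_2 \lesssim R\cdot TV(\hat{P}_a) + \sqrt{t_0 D} + \sqrt{\subangle{V}{A}}\cdot \EE_{P_a}[\|x\|_2] + \sqrt{t_0 d}$. Plugging in Lemma~\ref{lmm:scr_mtc_err} with $\cN(\cS, 1/n_1) = \cO((d^2+Dd)\log(Ddn_1))$ yields $\epsilon_{\mathrm{diff}} = \tilde\cO\bigl(t_0^{-1/2}(Dd^2+D^2d)^{1/4} n_1^{-1/4}\bigr)$, so Lemma~\ref{lmm:diff_results} gives $TV(\hat{P}_a) = \tilde\cO\bigl(\sqrt{\cT(P(x,\hat y=a), P_{x\hat y};\bar\cS)/\lambda_{\min}}\cdot t_0^{-1/2}(Dd^2+D^2d)^{1/4} n_1^{-1/4}\bigr)$. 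With $R = \tilde\Theta(a)$ (the $\sqrt{d}$ piece is subsumed by $a$ after collecting $\tilde{\cO}$ factors) and the subspace-angle bound from Theorem~\ref{thm:fidelity}, the choice $t_0 = ((Dd^2+D^2d)/n_1)^{1/6}$ makes the TV-based contribution dominate the residual terms and produces the target rate $\tilde\cO\bigl(\sqrt{\cT/\lambda_{\min}}\cdot ((Dd^2+D^2d)/n_1)^{1/6}\cdot a\bigr)$. The main obstacle is the truncation step: the TV bound is on $\RR^d$ while the functional $(\theta^*)^\top x$ is unbounded, so one must carefully trade the truncation radius against the TV rate and verify tail decay for both distributions simultaneously using the conditioning structure and the Gaussian design.
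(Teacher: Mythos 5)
Your overall strategy matches the paper's: split off the learned subspace $V$, push the on-$V$ part to $\RR^d$ via $V^\top$ and a rotation $U$, control the resulting expectation gap via the TV bound from Lemma~\ref{lmm:diff_results} combined with a truncation inequality (the paper packages that truncation step as Lemma~\ref{lmm:exp_by_tv}), and balance with $t_0$. So the architecture is right. However, two of your auxiliary bounds are too loose, and with the stated choice of $t_0$ they are \emph{not} lower order --- they would in fact dominate and spoil the claimed $n_1^{-1/6}$ rate.

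Concretely, you bound the off-$V$ residual under $\hat P_a$ by $\|\theta^*\|_2\,\cO(\sqrt{t_0 D})$, and the early-stopping gap by $\cO(\sqrt{t_0 d})$, then assert both are negligible under $t_0 = \big((Dd^2+D^2d)/n_1\big)^{1/6}$. But $\sqrt{t_0 D} = \sqrt{D}\big((Dd^2+D^2d)/n_1\big)^{1/12}$, which decays like $n_1^{-1/12}$, \emph{slower} than the target $n_1^{-1/6}$ and slower than $R\cdot TV(\hat P_a) = \tilde\cO\big(a\sqrt{\cT/\lambda_{\min}}\,\big((Dd^2+D^2d)/n_1\big)^{1/6}\big)$; the same is true of $\sqrt{t_0 d}$. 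So your final assembly does not produce the stated bound. The fixes are available in what you already set up: (i) for the off-$V$ piece under $\hat P_a$, equation~\eqref{equ:orth_dtb} gives $(I-VV^\top)x\sim {\sf N}(0,\Lambda)$ \emph{centered}, so $\EE_{\hat P_a}\big[(\theta^*)^\top(I-VV^\top)x\big]=0$ exactly --- there is no $\sqrt{t_0 D}$ term to pay (alternatively, follow the paper and bound $|\EE_{\hat P_a}[(\theta^*)^\top(VV^\top-AA^\top)x]|\le \|VV^\top-AA^\top\|_F\sqrt{\EE_{\hat P_a}\|x\|_2^2} = \tilde\cO(a\,n_1^{-1/4})$, which is genuinely lower order); (ii) for the early-stopping gap, note that for a linear functional under the OU flow $z\mapsto\alpha(t_0)z+\sqrt{h(t_0)}\,u$ the added noise is mean-zero, so the bias is $(1-\alpha(t_0))\,|\EE_{P^{LD}(a)}[\langle\theta,z\rangle]| = \cO(t_0\,a)$, not $\cO(\sqrt{t_0 d})$. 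This $\cO(t_0 a)$ term is \emph{not} negligible: under the prescribed $t_0$ it is of the same order $a\,n_1^{-1/6}$ as the TV contribution, and the paper's conclusion $\cE_2 = \cO\big((TV(\hat P_a)+t_0)\sqrt{M(a)}\big)$ explicitly carries both. Once you replace your two residual bounds with these sharper forms, your argument closes.
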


\begin{proof}
Proof is in $\S$\ref{pf:E2}.
\end{proof}

Note that $\cT(P(x, \hat{y} = a), P_{x\hat{y}}; \bar{\cS})$ depends on $a$ and measures the distribution shift between the desired distribution $P(x, \hat{y} = a)$ and the data distribution $P_{x \hat{y}}$. To understand this distribution's dependency on $a$, it what follows we give $\cT(P(x, \hat{y} = a), P_{x\hat{y}}; \bar{\cS})$ a  shorthand as $\dshift^2(a)$ and give it an upper bound in one special case of the problem.

\paragraph{Distribution Shift} In the special case of covariance $\Sigma$ of $z$ is known and $\norm{A - V}_2^2 = \cO\left(\norm{AA^\top - VV^\top}_{\rm F}^2\right)$, we showcase a bound on the distribution shift in $\cE_2$, as promised in the discussion following Theorem~\ref{thm:parametric}. We have
\begin{align*}
\dshift^2(a) = \frac{\EE_{P_{x , \hat{y} = a}} [\ell(x, y; \hat{s})]}{\EE_{P_{x\hat{y}}} [\ell(x, y; \hat{s})]},
\end{align*}
where $\ell(x, y; \hat{s}) = \frac{1}{T-t_0} \int_{t_0}^T \EE_{x' | x} \norm{\nabla_{x'} \log \phi_t(x' | x) - \hat{s}(x', y, t)}_2^2 \diff t$. By Proposition~\ref{prop:equivalent_score_matching}, it suffices to bound
\begin{align*}
\dshift^2(a) = \frac{\EE_{P_{x , \hat{y} = a}} [\int_{t_0}^T \norm{\nabla \log p_t(x, y) - \hat{s}(x, y, t)}_2^2 \diff t]}{\EE_{P_{x\hat{y}}} [\int_{t_0}^T \norm{\nabla \log p_t(x, y) - \hat{s}(x, y, t)}_2^2 \diff t]}.
\end{align*}
We expand the difference $\norm{\nabla \log p_t(x, y) - \hat{s}(x, y, t)}_2^2$ by
\begin{align*}
\norm{\nabla \log p_t(x, y) - \hat{s}(x, y, t)}_2^2 & \leq \frac{2}{h^2(t)} \Big[\norm{(A - V)B_t (A^\top x + \nu^{-2} y\theta)}_2^2 + \norm{VB_t (A - V)^\top x}_2^2\Big] \\
& \leq \frac{2}{h^2(t)} \Big[\norm{A - V}_2^2 \norm{B_t (A^\top x + \nu^{-2} y\theta)}_2^2 + \norm{A - V}_2^2 \norm{x}_2^2\Big] \\
& \leq \frac{2}{h^2(t)} \norm{A - V}_2^2 (3\norm{x}_2^2 + y^2),
\end{align*}
where we recall $B_t$ is defined in \eqref{eq:gaussian_score} and in the last inequality, we use $(a + b)^2 \leq 2a^2 + 2b^2$. In the case of covariance matrix $\Sigma$ is known, i.e., $B_t$ is known, we also consider matrix $V$ directly matches $A$ without rotation. Then by \cite[Lemma 3 and 17]{chen2023score}, we have $\norm{A - V}_2^2 = \cO\left(\norm{AA^\top - VV^\top}_{\rm F}^2\right) = \cO\left(t_0/c_0\EE_{P_{x\hat{y}}}[\ell(x, y; \hat{s})]\right)$. To this end, we only need to find $\EE_{P_{x | \hat{y} = a}}[\norm{x}_2^2]$. Since we consider on-support $x$, which can be represented as $x = Az$, we have $\norm{x}_2 = \norm{z}_2$. Thus, we only need to find the conditional distribution of $z | \hat{y} = a$. Fortunately, we know $(z, \hat{y})$ is jointly Gaussian, with mean $0$ and covariance
\begin{align*}
\begin{bmatrix}
\Sigma & \Sigma \hat{\beta} \\
\hat{\beta}^\top \Sigma & \hat{\beta}^\top \Sigma \hat{\beta} + \nu^2
\end{bmatrix}.
\end{align*}
Consequently, the conditional distribution of $z | \hat{y} = a$ is still Gaussian, with mean $\Sigma\hat{\beta} a / (\hat{\beta}^\top \Sigma \hat{\beta} + \nu^2)$ and covariance $\Sigma - \Sigma \hat{\beta} \hat{\beta}^\top \Sigma / (\hat{\beta}^\top \Sigma \hat{\beta} + \nu^2)$. Hence, we have
\begin{align*}
\EE_{P_{z | \hat{y} = a}}[\norm{z}_2^2] = \frac{1}{(\hat{\beta}^\top \Sigma \hat{\beta} + \nu^2)^2} \left((a^2-\hat{\beta}^\top \Sigma \hat{\beta} - \nu^2) \hat{\beta}^\top \Sigma^2 \hat{\beta}\right) + {\rm Tr}(\Sigma) = \cO\left(a^2 \vee d \right).
\end{align*}
We integrate over $t$ for the numerator in $\dshift(a)$ to obtain $\EE_{P_{x | \hat{y} = a}} [\int_{t_0}^T \norm{\nabla \log p_t(x, y) - \hat{s}(x, y, t)}_2^2 \diff t = \cO\left((a^2 \vee d) \frac{1}{c_0} \EE_{P_{x\hat{y}}}[\ell(x, y; \hat{s})]\right)$. Note the cancellation between the numerator and denominator, we conclude
\begin{align*}
\dshift(a) = \cO\left(\frac{1}{c_0} (a \vee \sqrt{d}) \right).
\end{align*}
As $d$ is a natural upper bound of $\sqrt{d}$ and viewing $c_0$ as a constant, we have $\dshift(a) = \cO\left(a \vee d \right)$ as desired.

\section{Supporting Lemmas and Proofs}
\label{spt_lemmas}

\subsection{Parametric Conditional Score Estimation: Proof of Lemma \ref{lmm:scr_mtc_err}}
\label{sec:score_matching_err}
\begin{proof}
We first derive a decomposition of the conditional score function similar to \cite{chen2023score}. We have
\begin{align*}
p_t(x, y) & = \int p_t(x, y | z) p_z(z) \diff z \\
& = \int p_t(x | z) p(y | z) p_z(z) \diff z \\
& = C \int\exp\left(-\frac{1}{2h(t)}\norm{x - \alpha(t)Az}_2^2\right) \exp\left(-\frac{1}{\sigma^2_y} \left(\theta^\top z - y\right)^2\right)p_z(z) \diff z \\
& \overset{(i)}{=} C \exp\left(-\frac{1}{2h(t)}\norm{(I_D - AA^\top)x}_2^2 \right) \\
& \quad \cdot \int \exp\left(-\frac{1}{2h(t)}\norm{A^\top x - \alpha(t)z}_2^2\right) \exp\left(-\frac{1}{\sigma^2_y} \left(\theta^\top z - y\right)^2\right) p_z(z) \diff z,
\end{align*}
where equality $(i)$ follows from the fact $AA^\top x \perp (I_D - AA^\top) x$ and $C$ is the normalizing constant of Gaussian densities. Taking logarithm and then derivative with respect to $x$ on $p_t(x, y)$, we obtain
\begin{align*}
& \quad \nabla_x \log p_t(x, y) \\
& = \frac{\alpha(t)}{h(t)} \frac{A \int z \exp\left(-\frac{1}{2h(t)}\norm{A^\top x - \alpha(t)z}_2^2\right) \exp\left(-\frac{1}{\sigma^2_y} \left(\theta^\top z - y\right)^2\right) p_z(z) \diff z}{\int \exp\left(-\frac{1}{2h(t)}\norm{A^\top x - \alpha(t)z}_2^2\right) \exp\left(-\frac{1}{\sigma^2_y} \left(\theta^\top z - y\right)^2\right) p_z(z) \diff z} - \frac{1}{h(t)} x.
\end{align*}
Note that the first term in the right-hand side above only depends on $A^\top x$ and $y$. Therefore, we can compactly write $\nabla_x \log p_t(x, y)$ as
\begin{align}\label{eq:conditional_score_decomp}
\nabla_x \log p_t(x, y) = \frac{1}{h(t)} A u(A^\top x, y, t) - \frac{1}{h(t)} x,
\end{align}
where mapping $u$ represents $$\frac{\alpha(t) \int z \exp\left(-\frac{1}{2h(t)}\norm{A^\top x - \alpha(t)z}_2^2\right) \exp\left(-\frac{1}{\sigma^2_y} \left(\theta^\top z - y\right)^2\right) p_z(z) \diff z}{\int \exp\left(-\frac{1}{2h(t)}\norm{A^\top x - \alpha(t)z}_2^2\right) \exp\left(-\frac{1}{\sigma^2_y} \left(\theta^\top z - y\right)^2\right) p_z(z) \diff z}.$$
We observe that \eqref{eq:conditional_score_decomp} motivates our choice of the neural network architecture $\cS$ in \eqref{equ:function_class}. In particular, $\psi$ attempts to estimate $u$ and matrix $V$ attempts to estimate $A$.

In the Gaussian design case (Assumption~\ref{assumption:gaussian_design}), we instantiate $p_z(z)$ to the Gaussian density $(2\pi |\Sigma|)^{-d/2} \exp\left(-\frac{1}{2} z^\top \Sigma^{-1} z\right)$. Some algebra on the Gaussian integral gives rise to
\begin{align}\label{eq:gaussian_score}
\nabla_x \log p_t(x, y) & = \frac{\alpha(t)}{h(t)} A B_t \mu_t(x, y) - \frac{1}{h(t)} (I_D - AA^\top) x - \frac{1}{h(t)} AA^\top x \nonumber \\
& = \frac{\alpha(t)}{h(t)} A B_t \left(\alpha(t) A^\top x + \frac{h(t)}{\nu^2} y \theta \right) - \frac{1}{h(t)} x,
\end{align}
where we have denoted
\begin{align*}
\mu_t(x, y) = \alpha(t) A^\top x + \frac{h(t)}{\nu^2} y \theta \quad \text{and} \quad B_t = \left(\alpha^2(t)I_d + \frac{h(t)}{\nu^2} \theta \theta^\top + h(t) \Sigma^{-1}\right)^{-1}.
\end{align*}

\paragraph{Score Estimation Error} Recall that we estimate the conditional score function via minimizing the denoising score matching loss in Proposition~\ref{prop:equivalent_score_matching}. To ease the presentation, we denote
\begin{align*}
\ell(x, y; s) = \frac{1}{T-t_0} \int_{t_0}^T \EE_{x' | x} \norm{\nabla_{x'} \log \phi_t(x' | x) - s(x', y, t)}_2^2 \diff t
\end{align*}
as the loss function for a pair of clean data $(x, y)$ and a conditional score function $s$. Further, we denote the population loss as
\begin{align*}
\cL(s) = \EE_{x, y} [\ell(x, y; s)],
\end{align*}
whose empirical counterpart is denoted as $\hat{\cL}(s) = \frac{1}{n_1} \sum_{i=1}^{n_1} \ell(x_i, y_i; s)$.

To bound the score estimation error, we begin with an oracle inequality. Denote $\cL^{\rm trunc}(s)$ as a truncated loss function defined as
\begin{align*}
\cL^{\rm trunc}(s) = \EE [\ell(x, y; s) \mathds{1}\{\norm{x}_2 \leq R, |y| \leq R\}],
\end{align*}
where $R > 0$ is a truncation radius chosen as $\cO(\sqrt{d \log d + \log K + \log \frac{n_1}{\delta}})$. Here $K$ is a uniform upper bound of $s(x, y, t) \mathds{1}\{\norm{x}_2 \leq R, |y| \leq R\}$ for $s \in \cS$, i.e., $\sup_{s \in \cS} \norm{s(x, y, t) \mathds{1}\{\norm{x}_2 \leq R, |y| \leq R\}}_2 \leq K$. To this end, we have
\begin{align*}
\cL(\hat{s}) & = \cL(\hat{s}) - \hat{\cL}(\hat{s}) + \hat{\cL}(\hat{s}) \\
& = \cL(\hat{s}) - \hat{\cL}(\hat{s}) + \inf_{s \in \cS} \hat{\cL}(s) \\
& \overset{(i)}{=} \cL(\hat{s}) - \hat{\cL}(\hat{s}) \\
& \leq \cL(\hat{s}) - \cL^{\rm trunc}(\hat{s}) + \cL^{\rm trunc}(\hat{s}) - \hat{\cL}^{\rm trunc}(\hat{s}) \\
& \leq \underbrace{\sup_{s} \cL^{\rm trunc}(s) - \hat{\cL}^{\rm trunc}(s)}_{(A)} + \underbrace{\sup_{s} \cL(s) - \cL^{\rm trunc}(s)}_{(B)},
\end{align*}
where equality $(i)$ holds since $\cS$ contains the ground truth score function. We bound term $(A)$ by a PAC-learning concentration argument. Using the same argument in \cite[Theorem 2, term $(A)$]{chen2023score}, we have
\begin{align*}
\sup_{s \in \cS} \ell^{\rm trunc}(x, y; s) = \cO\left(\frac{1}{t_0(T-t_0)} (K^2 + R^2) \right).
\end{align*}
Applying the standard metric entropy and symmetrization technique, we can show
\begin{align*}
(A) = \cO\left( \hat{\mathfrak{R}}(\cS) + \left(\frac{K^2 + R^2}{t_0(T-t_0)} \right)\sqrt{\frac{\log \frac{2}{\delta}}{2n_1}}\right),
\end{align*}
where $\hat{\mathfrak{R}}$ is the empirical Rademacher complexity of $\cS$. Unfamiliar readers can refer to Theorem 3.3 in ``Foundations of Machine Learning'', second edition for details. The remaining step is to bound the Rademacher complexity by Dudley's entropy integral. Indeed, we have
\begin{align*}
\hat{\mathfrak{R}}(\cS) \leq \inf_{\epsilon} \frac{4\epsilon}{\sqrt{n_1}} + \frac{12}{n_1} \int_{\epsilon}^{K^2\sqrt{n_1}} \sqrt{\cN(\cS, \epsilon, \norm{\cdot}_2)} \diff \epsilon.
\end{align*}
We emphasize that the log covering number considers $x, y$ in the truncated region. Taking $\epsilon = \frac{1}{n_1}$ gives rise to
\begin{align*}
(A) = \cO\left(\left(\frac{K^2 + R^2}{t_0(T-t_0)}\right) \sqrt{\frac{\cN(\cS, 1/n_1) \log \frac{1}{\delta}}{n_1}}\right).
\end{align*}
Here $K$ is instance dependent and majorly depends on $d$. In the Gaussian design case, we can verify that $K$ is $\cO(\sqrt{d})$. To this end, we deduce $(A) = \tilde{\cO}\left(\frac{1}{t_0} \sqrt{d^2\frac{\cN(\cS, 1/n_1) \log \frac{1}{\delta}}{n_1}}\right)$. In practice, $d$ is often much smaller than $D$ (see for example \cite{pope2021intrinsic}, where ImageNet has intrinsic dimension no more than $43$ in contrast to image resolution of $224 \times 224 \times 3$). In this way, we can upper bound $d^2$ by $D$, yet $d^2$ is often a tighter upper bound.

For term $(B)$, we invoke the same upper bound in \cite[Theorem 2, term $(B)$]{chen2023score} to obtain
\begin{align*}
(B) = \cO\left(\frac{1}{n_1 t_0(T-t_0)}\right),
\end{align*}
which is negligible compared to $(A)$. Therefore, summing up $(A)$ and $(B)$, we deduce
\begin{align*}
\epsilon_{diff}^2 = \cO\left(\frac{1}{t_0} \sqrt{\frac{\cN(\cS, 1/n_1) (d^2 \vee D) \log \frac{1}{\delta}}{n_1}}\right).
\end{align*}

\paragraph{Gaussian Design} We only need to find the covering number under the Gaussian design case. Using \eqref{eq:gaussian_score}, we can construct a covering from coverings on matrices $V$ and $\Sigma^{-1}$. Suppose $V_1, V_2$ are two matrices with $\norm{V_1 - V_2}_2 \leq \eta_V$ for some $\eta > 0$. Meanwhile, let $\Sigma^{-1}_1, \Sigma^{-1}_2$ be two covariance matrices with $\norm{\Sigma^{-1}_1 
- \Sigma^{-1}_2}_2 \leq \eta_{\Sigma}$. Then we bound
\begin{align*}
& \quad \sup_{\norm{x}_2 \leq R, |y|\leq R} \norm{s_{V_1, \Sigma^{-1}_1}(s, y, t) - s_{V_2, \Sigma^{-1}_2}(x, y, t)}_2 \\
& \leq \frac{1}{h(t)} \sup_{\norm{x}_2 \leq R, |y|\leq R} \Big[\big\lVert V_1 \psi_{\Sigma^{-1}_1}(V_1^\top x, y, t) - V_1 \psi_{\Sigma^{-1}_1}(V_2^\top x, y, t)\big\rVert_2 \\
& \quad + \underbrace{\big\lVert V_1 \psi_{\Sigma^{-1}_1}(V_2^\top x, y, t) -  V_1 \psi_{\Sigma^{-1}_2}(V_2^\top x, y, t) \big\rVert_2}_{(\spadesuit)} + \big\lVert V_1 \psi_{\Sigma^{-1}_2}(V_2^\top x, y, t) - V_2 \psi_{\Sigma^{-1}_2}(V_2^\top x, y, t) \big\rVert_2\Big] \\
& \leq \frac{1}{h(t)} \left(2R \eta_V + 2 \nu^{-2} R \eta_{\Sigma} \right),
\end{align*}
where for bounding $(\spadesuit)$, we invoke the identity $\norm{(I+A)^{-1} - (I+B)^{-1}}_2 \leq \norm{B - A}_2$. Further taking supremum over $t \in [t_0, T]$ leads to
\begin{align*}
\sup_{\norm{x}_2 \leq R, |y|\leq R} \norm{s_{V_1, \Sigma^{-1}_1}(s, y, t) - s_{V_2, \Sigma^{-1}_2}(x, y, t)}_2 \leq \frac{1}{t_0} \left(2R \eta_V + 2 \nu^{-2} R \eta_{\Sigma} \right)
\end{align*}
for any $t \in [t_0, T]$. Therefore, the inequality above suggests that coverings on $V$ and $\Sigma^{-1}$ form a covering on $\cS$. The covering numbers of $V$ and $\Sigma^{-1}$ can be directly obtained by a volume ratio argument; we have
\begin{align*}
\cN(V, \eta_V, \norm{\cdot}_2) \leq Dd \log \left(1 + \frac{2\sqrt{d}}{\eta_V}\right) \quad \text{and} \quad \cN(\Sigma^{-1}, \eta_{\Sigma}, \norm{\cdot}_2) \leq d^2 \log \left(1 + \frac{2\sqrt{d}}{\lambda_{\min} \eta_{\Sigma}}\right).
\end{align*}
Thus, the log covering number of $\cS$ is
\begin{align*}
\cN(\cS, \eta, \norm{\cdot}_2) & = \cN(V, t_0 \eta_V/2R, \norm{\cdot}_2) + \cN(\Sigma^{-1}, t_0\nu^2\eta_{\Sigma}/2R, \norm{\cdot}_2) \\
& \leq (Dd + d^2) \log \left(1 + \frac{d D}{t_0 \lambda_{\min} \eta} \right),
\end{align*}
where we have plugged $\nu^2 = 1/D$ into the last inequality. Setting $\eta = 1/n_1$ and substituting into $\epsilon_{diff}^2$ yield the desired result.

We remark that the analysis here does not try to optimize the error bounds, but aims to provide a provable guarantee for conditional score estimation using finite samples. We foresee that sharper analysis via Bernstein-type concentration may result in a better dependence on $n_1$. Nonetheless, the optimal dependence should not beat a $1/n_1$-rate.
\end{proof}

\subsection{Other Supporting lemmas}
\begin{lemma}
\label{lmm:VU_A}
The estimated subspace $V$ satisfies
\begin{equation}
    \|VU - A\|_{F} = \cO \left(d^{\frac{3}{2}} \sqrt{\subangle{V}{A}}\right)
\end{equation}
for some orthogonal matrix $U \in \RR^{d \times d}$.
\end{lemma}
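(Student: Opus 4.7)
The plan is to construct the alignment matrix $U$ via the singular value decomposition of $V^\top A$, a standard Procrustes-style argument. Specifically, write $V^\top A = \tilde U \Sigma \tilde W^\top$ with $\tilde U, \tilde W \in \RR^{d \times d}$ orthogonal and $\Sigma = \operatorname{diag}(\sigma_1, \ldots, \sigma_d)$. Because $V$ and $A$ have orthonormal columns, the $\sigma_i$ are cosines of the principal angles between the two column spans and so lie in $[0,1]$. I then take $U = \tilde U \tilde W^\top$, the orthogonal polar factor of $V^\top A$, which is itself orthogonal in $\RR^{d \times d}$.

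Next, a direct trace computation gives
\begin{align*}
\|VU - A\|_F^2 &= \operatorname{tr}(U^\top V^\top V U) - 2\operatorname{tr}(U^\top V^\top A) + \operatorname{tr}(A^\top A) \\
&= 2d - 2\operatorname{tr}(\tilde W \tilde U^\top \tilde U \Sigma \tilde W^\top) = 2\sum_{i=1}^d (1 - \sigma_i).
\end{align*}
In parallel, expanding the Frobenius norm of the difference of projections yields
\[
\subangle{V}{A} = \|VV^\top - AA^\top\|_F^2 = 2d - 2\|V^\top A\|_F^2 = 2\sum_{i=1}^d (1-\sigma_i)(1+\sigma_i).
\]
Since $1+\sigma_i \ge 1$, this gives $\|VU - A\|_F^2 \le \subangle{V}{A}$, hence $\|VU - A\|_F \le \sqrt{\subangle{V}{A}}$. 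This is in fact sharper than the claimed bound, which carries an extra factor of $d^{3/2}$, so the lemma follows a fortiori.

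There is no substantive obstacle; the only care required is in verifying $\sigma_i \in [0,1]$ so that $(1 - \sigma_i) \le (1-\sigma_i)(1+\sigma_i)$, which is immediate from the orthonormality of the columns of $V$ and $A$. As an alternative, one can split $A - VU = V(V^\top A - U) + (I_D - VV^\top) A$ and bound each piece: $\|V^\top A - U\|_F^2 = \sum(1-\sigma_i)^2 \le \tfrac{1}{2}\subangle{V}{A}$ since $(1-\sigma_i)^2 \le (1-\sigma_i)(1+\sigma_i)$, and $\|(I_D - VV^\top) A\|_F^2 = \sum(1-\sigma_i^2) = \tfrac{1}{2}\subangle{V}{A}$ using idempotency of the projection; a triangle inequality then recovers $\|VU - A\|_F \le \sqrt{2\,\subangle{V}{A}}$, which again matches the claim up to constants.
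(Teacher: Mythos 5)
Your proof is correct, and it takes a genuinely different and cleaner route than the paper's. The paper constructs $U$ indirectly by invoking Lemma 17 of \citet{chen2023score} as a black box to get $\|U - V^\top A\|_{\rm F} = \cO(\|VV^\top - AA^\top\|_{\rm F})$, and then relates $\|VU-A\|_{\rm F}^2$ to $\|U - V^\top A\|_{\rm F}^2$ by a trace computation, bounding the gap $\bigl|d - \operatorname{tr}(A^\top V V^\top A)\bigr|$ via $|\operatorname{tr}(MN)| \le \operatorname{tr}(M)\operatorname{tr}(|N|)$ and the rank-$2d$ nuclear-to-Frobenius bound; this is where the $d^{3/2}$ factor comes from. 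You instead construct $U$ explicitly as the orthogonal polar factor of $V^\top A$ via SVD and compute both $\|VU-A\|_{\rm F}^2 = 2\sum(1-\sigma_i)$ and $\subangle{V}{A} = 2\sum(1-\sigma_i^2)$ exactly in terms of the principal-angle cosines $\sigma_i \in [0,1]$, from which $\|VU-A\|_{\rm F}^2 \le \subangle{V}{A}$ drops out immediately. This is both self-contained (no appeal to an external lemma about the existence of a good $U$) and strictly sharper: you remove the $d^{3/2}$ factor entirely. One further benefit worth noting: the paper's own proof establishes $\|VU-A\|_{\rm F}^2 \le \cO\bigl(\subangle{V}{A}\bigr) + \cO\bigl(d^{3/2}\sqrt{\subangle{V}{A}}\bigr)$, which in the small-$\subangle{V}{A}$ regime yields $\|VU-A\|_{\rm F} = \cO\bigl(d^{3/4}\subangle{V}{A}^{1/4}\bigr)$ rather than the stated $\cO\bigl(d^{3/2}\sqrt{\subangle{V}{A}}\bigr)$, so their chain of inequalities does not literally deliver the lemma as written; your direct computation sidesteps this issue and gives the stronger clean bound $\|VU-A\|_{\rm F} \le \sqrt{\subangle{V}{A}}$.
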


\begin{proof}
Proof is in $\S$\ref{sec:proof_VU_A}.
\end{proof}

\begin{lemma}
\label{lmm:exp_by_tv}
Suppose $P_1$ and $P_2$ are two distributions over $\mathbb{R}^d$ and $m$ is a function defined on $\mathbb{R}^d$, then $\left| \EE_{x \sim P_1}[m(z)] - \EE_{z \sim P_2}[m(z)] \right|$can be bounded in terms of  $\dtv(P_1, P_2)$, specifically when $P_1$ and $P_2$ are Gaussians and $m(z) = \|z\|^2_2$:
\begin{equation}
    \EE_{x \sim P_1}[\|z\|^2_2] = \cO \left(\EE_{z \sim P_2}[\|z\|^2_2] (1+ \dtv(P_1, P_2))\right).
\end{equation}

When $P_1$ and $P_2$ are Gaussians and $m(z) = \|z\|_2$:
\begin{equation}
    \left| \EE_{z \sim P_1}[\|z\|_2] - \EE_{z \sim P_2}[\|z\|_2] \right| = \cO \left( \left(\sqrt{ \EE_{z \sim P_1} [\|z\|_2^2]} +  \sqrt{ \EE_{z \sim P_2} [\|z\|_2^2]} \right) \cdot \dtv(P_1, P_2) \right).
\end{equation}
\end{lemma}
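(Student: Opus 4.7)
Both bounds will follow from the variational representation of TV combined with a truncation argument tailored to the Gaussian tails. The starting point is the identity
\[
\EE_{P_1}[m(z)] - \EE_{P_2}[m(z)] = \int m(z)\bigl(p_1(z) - p_2(z)\bigr)\,\mathrm dz,
\]
to which I would apply, for a radius $L > 0$ to be chosen, the split
\[
\int |m(z)|\,|p_1 - p_2|\,\mathrm dz \le L \cdot 2\,\dtv(P_1,P_2) + \EE_{P_1}\!\bigl[|m(z)|\mathds{1}\{|m|>L\}\bigr] + \EE_{P_2}\!\bigl[|m(z)|\mathds{1}\{|m|>L\}\bigr].
\]
Each tail term will then be bounded via Cauchy--Schwarz by $\sqrt{\EE_{P_i}[m(z)^2]}\cdot\sqrt{\Pr_{P_i}(|m(z)|>L)}$.

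Next I would exploit that each $P_i$ is Gaussian with some covariance of spectral norm $\sigma_i^2$: standard concentration gives $\Pr_{P_i}(\|z\|>L) \lesssim \exp(-cL^2/\sigma_i^2)$ once $L$ exceeds a constant multiple of $\sqrt{\EE_{P_i}[\|z\|^2]}$. Choosing $L = \Theta\!\bigl(\sqrt{\EE_{P_i}[\|z\|^2]}\bigr) + \Theta\!\bigl(\sigma_i \sqrt{\log(1/\dtv(P_1,P_2))}\bigr)$ forces $\sqrt{\Pr_{P_i}(\|z\|>L)} \le \dtv(P_1,P_2)$, so the two tail terms contribute $\cO\bigl(\sqrt{\EE_{P_i}[\|z\|^2]}\cdot\dtv\bigr)$ when $m(z)=\|z\|$, and $\cO\bigl(\EE_{P_i}[\|z\|^2]\cdot\dtv\bigr)$ when $m(z)=\|z\|^2$ using the Gaussian identity $\EE[\|z\|^4] = \cO(\EE[\|z\|^2]^2)$. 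With this choice of $L$, the boundary term $2L\,\dtv$ (resp.\ $2L^2\,\dtv$) is of the same order up to the logarithmic slack absorbed into $\cO$.

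Assembling the pieces, the second statement ($m(z)=\|z\|$) follows immediately. For the first statement ($m(z)=\|z\|^2$), the same splitting delivers
\[
\bigl|\EE_{P_1}[\|z\|^2] - \EE_{P_2}[\|z\|^2]\bigr| \le \cO\!\bigl(\EE_{P_1}[\|z\|^2] + \EE_{P_2}[\|z\|^2]\bigr)\cdot\dtv(P_1,P_2),
\]
and rearranging by absorbing the $\EE_{P_1}[\|z\|^2]\cdot\dtv$ piece into the left-hand side (valid for $\dtv$ small; the case $\dtv = \Omega(1)$ is trivial since the right-hand side of the claim then dominates) yields the claimed multiplicative bound $\EE_{P_1}[\|z\|^2] = \cO\bigl(\EE_{P_2}[\|z\|^2](1+\dtv)\bigr)$.

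The main obstacle is calibrating $L$ so that the final rate in $\dtv$ is linear rather than $\sqrt{\dtv}$. A naive Cauchy--Schwarz split of $\int m(p_1-p_2)\,\mathrm dz$, or a direct optimal-coupling argument, gives only $\sqrt{\dtv}$; upgrading this to a linear dependence requires the exponential Gaussian tail to decay fast enough to beat the polynomial factor $L$ (or $L^2$) produced by the truncated piece. This is precisely where the Gaussianity hypothesis is indispensable, and the unavoidable $\sqrt{\log(1/\dtv)}$ factor from calibrating $L$ is absorbed by the $\cO$ notation.
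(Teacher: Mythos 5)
Your proof takes essentially the same truncate-at-radius approach as the paper's: split $\int m\,(p_1 - p_2)$ at a radius, control the bulk by $2R\,\dtv$ (resp. $2R^2\,\dtv$), and control the tail by Gaussian decay plus Cauchy--Schwarz, then rearrange the $\EE_{P_1}\|z\|^2\cdot\dtv$ term to the left for the first claim. The one notable deviation is the calibration of the radius: the paper fixes $R^2 = C_2\max_i\EE_{P_i}\|z\|_2^2$ and bounds the tail integrals by an absolute constant $C_1$ (citing a tail lemma of \cite{chen2023score}) which is then absorbed into the $\cO$, whereas you enlarge $L$ by $\Theta(\sigma\sqrt{\log(1/\dtv)})$ so the tail probability is forced below $\dtv$. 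Your calibration makes the $\dtv$-dependence more transparent but leaves a $\sqrt{\log(1/\dtv)}$ slack in the boundary term that does not strictly sit inside the stated $\cO(\dtv)$ if $\cO$ hides only absolute constants; the paper's own write-up has a mirror-image looseness when it absorbs the additive constant $C_1$ into the $\cO(\cdots\dtv)$ right-hand side of the second claim, so neither version is fully airtight as written, but both have the same correct skeleton and both suffice in the downstream $\tilde\cO$ bounds.
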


\begin{proof}
Proof is in $\S$\ref{sec:exp_by_tv}.
\end{proof}

\begin{lemma}
\label{lmm:exp_x_norm}
We compute $\EE_{z \sim P^{LD}(a) } [\left\| z\right\|_2^2], \EE_{z \sim P^{LD}_{t_0}(a)} [ \|z\|_2^2] , \EE_{x \sim \hat{P}_a} [\|x\|_2^2], \EE_{z \sim (U^{\top}V^{\top})_{\#}\hat{P}_a} [\|z\|_2^2]$ in this Lemma.
\begin{equation}
    \EE_{z \sim P^{LD}(a) } [\left\| z\right\|_2^2] = \frac{\hat{\beta}^{\top} \Sigma^2 \hat{\beta}}{\left( \|\hat{\beta}\|_{\Sigma}^2 + \nu^2 \right)^2} a^{2}  + \operatorname{trace}(\Sigma - \Sigma \hat{\beta} \left( \hat{\beta}^{\top} \Sigma \hat{\beta} + \nu^2 \right)^{-1} \hat{\beta}^{\top} \Sigma).
\end{equation}
Let $M(a):= \EE_{z \sim P^{LD}(a) } [\left\| z\right\|^2_2]$, which has an upper bound $M(a) = O \left(  \frac{a^2}{\| {\beta}^* \|_{\Sigma}} + d \right)$.
\begin{align}
    &\EE_{z \sim P^{LD}_{t_0}(a)} [ \|z\|^2_2] \leq  M(a) + t_0 d.\\
    \label{equ:x_norm}&\EE_{x \sim \hat{P}_a} \left[ \|x\|^2_2 \right] \leq \cO \left( ct_0D + M(a) \cdot  (1+ TV(\hat{P}_a) \right).
\end{align}
\end{lemma}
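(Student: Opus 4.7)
\textbf{Proof proposal for Lemma \ref{lmm:exp_x_norm}.}

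The plan is to handle the three claims separately, moving from the clean low-dimensional conditional, to its noised version at time $t_0$, to the actual generated distribution $\hat{P}_a$ on $\mathbb{R}^D$.

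\emph{Part 1: closed form for $\mathbb{E}_{z\sim P^{LD}(a)}[\|z\|_2^2]$.} Under the Gaussian design (Assumption~\ref{assumption:gaussian_design}) and the pseudo-labeling model $\hat y = \hat\beta^\top z + \xi$ with $\xi\sim\sf N(0,\nu^2)$, the pair $(z,\hat y)$ is jointly Gaussian with mean zero and cross-covariance $\Sigma\hat\beta$, so the conditional $z\mid \hat y=a$ is Gaussian with mean $\Sigma\hat\beta\, a/(\hat\beta^\top\Sigma\hat\beta+\nu^2)$ and covariance $\Sigma-\Sigma\hat\beta(\hat\beta^\top\Sigma\hat\beta+\nu^2)^{-1}\hat\beta^\top\Sigma$. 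Applying $\mathbb{E}\|z\|_2^2=\|\mu\|_2^2+\operatorname{trace}(\mathrm{Cov})$ yields the claimed formula for $M(a)$. The subsequent bound $M(a)=\cO(a^2/\|\beta^\ast\|_\Sigma+d)$ will follow from $\operatorname{trace}(\Sigma)\le d$, $\hat\beta^\top\Sigma^2\hat\beta\le \lambda_{\max}(\Sigma)\|\hat\beta\|_\Sigma^2\le\|\hat\beta\|_\Sigma^2$, and a concentration argument transferring $\|\hat\beta\|_\Sigma$ to $\|\beta^\ast\|_\Sigma$ via Lemma~\ref{lmm:reg_err}.

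\emph{Part 2: bound on the noised conditional $P^{LD}_{t_0}(a)$.} The low-dimensional forward process noises $z\mapsto z' = \alpha(t_0)z + \sqrt{h(t_0)}\,\varepsilon$ with $\varepsilon\sim\sf N(0,I_d)$, and hence $\mathbb{E}\|z'\|_2^2=\alpha(t_0)^2\mathbb{E}\|z\|_2^2+h(t_0)\,d$. Since $\alpha(t_0)^2=e^{-t_0}\le 1$ and $h(t_0)=1-e^{-t_0}\le t_0$, this gives $\mathbb{E}_{z\sim P^{LD}_{t_0}(a)}\|z\|_2^2 \le M(a)+t_0 d$ directly.

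\emph{Part 3: lifting to $\hat{P}_a$ on $\mathbb{R}^D$.} The plan is to orthogonally split $x=VV^\top x+(I_D-VV^\top)x$, so that $\|x\|_2^2=\|V^\top x\|_2^2+\|(I_D-VV^\top)x\|_2^2$. For the perpendicular piece, identity~\eqref{equ:orth_dtb} from Lemma~\ref{lmm:diff_results} states that $(I_D-VV^\top)x\sim\sf N(0,\Lambda)$ with $\Lambda\prec ct_0 I_D$, so its contribution is at most $\operatorname{trace}(\Lambda)\le ct_0 D$. For the parallel piece, note that $\|V^\top x\|_2=\|U^\top V^\top x\|_2$ for the orthogonal $U$ from Definition~\ref{def:tv}, so I would compare the distribution of $U^\top V^\top x$ (with $x\sim\hat{P}_a$) against $P^{LD}_{t_0}(a)$; these two are exactly the pair whose TV distance is $TV(\hat P_a)$. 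Applying the Gaussian transfer inequality in Lemma~\ref{lmm:exp_by_tv} gives $\mathbb{E}_{z\sim (U^\top V^\top)_\#\hat{P}_a}\|z\|_2^2 = \cO\!\left(\mathbb{E}_{z\sim P^{LD}_{t_0}(a)}\|z\|_2^2\cdot(1+TV(\hat P_a))\right)$, which combined with Part~2 produces the $M(a)(1+TV(\hat P_a))$ term and completes~\eqref{equ:x_norm}.

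The main obstacle is the Part 3 transfer step: Lemma~\ref{lmm:exp_by_tv} is stated for Gaussian $P_1,P_2$, whereas $(U^\top V^\top)_\#\hat{P}_a$ is produced by the trained backward SDE and is only approximately Gaussian. I would handle this either by invoking the near-Gaussianity guarantee implicit in the score-matching error of Lemma~\ref{lmm:scr_mtc_err} (so that the relevant moments of $(U^\top V^\top)_\#\hat{P}_a$ differ from those of the Gaussian reference only by a factor controlled by $TV(\hat{P}_a)$), or by directly upper bounding $\mathbb{E}\|z\|_2^2$ by truncation: split on $\{\|z\|_2\le R\}$ and its complement, bound the tail via $TV(\hat P_a)$, and let $R$ scale with $\sqrt{M(a)}$. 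Either route yields the multiplicative $(1+TV(\hat{P}_a))$ factor without relying on exact Gaussianity.
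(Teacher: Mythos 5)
Your proposal matches the paper's proof essentially step for step: Part 1 is the same Gaussian-conditional computation yielding $\mu(a)^\top\mu(a)+\operatorname{trace}(\Gamma)$; Part 2 uses the same identity $\EE\|z'\|_2^2 = \alpha^2(t_0)\EE\|z\|_2^2 + h(t_0)d$ with $\alpha^2(t_0)\le 1$, $h(t_0)\le t_0$; and Part 3 uses the same orthogonal split, with the perpendicular part controlled via \eqref{equ:orth_dtb} and the parallel part transferred via Lemma~\ref{lmm:exp_by_tv} against $P^{LD}_{t_0}(a)$. One fair observation you raise that the paper does \emph{not} address: Lemma~\ref{lmm:exp_by_tv} is stated only for Gaussian pairs, yet the paper applies it with $P_1 = (U^\top V^\top)_\#\hat{P}_a$, which is only approximately Gaussian — your suggested truncation route (tail bounded by $TV(\hat P_a)$ plus a light-tail argument for the generated law) is the natural patch, and is in fact consistent with how Lemma~\ref{lmm:exp_by_tv} is itself proved, so this is a gap you correctly identified in the paper rather than in your own argument.
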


\begin{proof}
    Proof is in $\S$\ref{sec:proof_exp_x_norm}.
\end{proof}

\subsection{Proof of Lemma \ref{lmm:diff_results}}
\label{pf:diff_results}
\begin{proof}
The first two assertions \eqref{equ:orth_dtb} and \eqref{equ:subspace_rcv} are consequences of \cite[Theorem 3, item 1 and 3]{chen2023score}. To show \eqref{equ:tv_Pq}, we first have the conditional score matching error under distribution shift being
\begin{align*}
\cT(P(x, \hat{y} = a), P_{x\hat{y}}; \bar{\cS}) \cdot \epsilon_{diff}^2,
\end{align*}
where $\cT(P(x, \hat{y} = a), P_{x\hat{y}}; \bar{\cS})$ accounts for the distribution shift as in the parametric case (Lemma~\ref{lmm:E2}). Then we apply \cite[Theorem 3, item 2]{chen2023score} to conclude
\begin{align*}
TV(\hat{P}_a) = \tilde{\cO}\left(\sqrt{\frac{\cT(P(x, \hat{y} = a), P_{x\hat{y}}; \bar{\cS})}{c_0}} \cdot \epsilon_{diff} \right).
\end{align*}
The proof is complete.
\end{proof}

\subsection{Proof of Lemma \ref{lmm:reg_err}}
\label{pf:reg_err}
\begin{proof}
Given
\begin{align*}
    \cE_1 &= \EE_{\hat{P}_a} \left| x^{\top} (\theta^* - \hat{\theta})\right| \leq \EE_{\hat{P}_a} \|x\|_{V_{\lambda}^{-1}} \cdot \|\theta^* - \hat{\theta}\|_{V_{\lambda}},
\end{align*}
then things to prove are 
\begin{align}
    \label{reg:d_shift}
    \EE_{\hat{P}_a} \|x\|_{V_{\lambda}^{-1}} &= \sqrt{\operatorname{trace}( V_{\lambda}^{-1} \Sigma_{\hat{P}_a})};\\
    \label{reg:e_loss}\|\theta^* - \hat{\theta}\|_{V_{\lambda}} &\leq \cO \left( \sqrt{d \log n_2 } \right),
\end{align}
where the second inequality is to be proven with high probability w.r.t the randomness in $\cD_{label}$.
For \eqref{reg:d_shift}, $\EE_{\hat{P}_a} \|x\|_{V_{\lambda}^{-1}} \leq \sqrt{\EE_{\hat{P}_a} x^{\top} V_{\lambda}^{-1} x} = \sqrt{\EE_{\hat{P}_a} \operatorname{trace}( V_{\lambda}^{-1} x x^{\top}) } = \sqrt{ \operatorname{trace}( V_{\lambda}^{-1}\EE_{\hat{P}_a} x x^{\top}) }.$

For \eqref{reg:e_loss}, what's new to prove compared to a classic bandit derivation is its $d$ dependency instead of $D$, due to the linear subspace structure in $x$. From the closed form solution of $\hat \theta$, we have
\begin{equation}
    \hat{\theta} - \theta^* = V_{\lambda}^{-1} X^{\top} \eta - \lambda V_{\lambda}^{-1} \theta^*.
\end{equation}
Therefore,
\begin{equation}
    \|\theta^* - \hat{\theta}\|_{V_{\lambda}} \leq \|X^{\top} \eta \|_{V_{\lambda}^{-1}}  + \lambda\| \theta^*\|_{V_{\lambda}^{-1}},  
\end{equation}
where $\lambda\| \theta^*\|_{V_{\lambda}^{-1}} \leq \sqrt{\lambda} \|\theta^*\|_2 \leq \sqrt{\lambda}$ and 
\begin{align*}
    \|X^{\top} \eta \|_{V_{\lambda}^{-1}}^2 &=\eta^{\top} X \left( X^{\top} X + \lambda I_D \right)^{-1} X^{\top} \eta \\
    &= \eta^{\top} X X^{\top} \left( X X^{\top}  + \lambda I_{n_2} \right)^{-1}  \eta.
\end{align*}
Let $Z^{\top} = (z_1, \cdots, z_i, \cdots, z_{n_2})$ s.t. $A z_i = x_i$, then it holds that $X = Z A^{\top}$, and $X X^{\top} = Z A^{\top} A Z^{\top} = Z Z^{\top} $ , thus
\begin{align*}
    \|X^{\top} \eta \|_{V_{\lambda}^{-1}}^2
    &= \eta^{\top} X X^{\top} \left( X X^{\top}  + \lambda I_{n_2} \right)^{-1}  \eta\\
    &= \eta^{\top} Z Z^{\top} \left( Z Z^{\top}  + \lambda I_{n_2} \right)^{-1}  \eta\\
    &= \eta^{\top} Z  \left(  Z^{\top} Z  + \lambda I_{d} \right)^{-1} Z^{\top} \eta\\
    &= \|Z^{\top} \eta \|_{\left(  Z^{\top} Z  + \lambda I_{d} \right)^{-1}}.
\end{align*}
With probability $1-\delta$, $\|z_i\|^2 \leq d+ \sqrt{d \log\left( \frac{2 n_2}{\delta}\right)}:= L^2, \forall i \in [n_2]$. Then Theorem 1 in ``Improved algorithms for linear stochastic bandits'' (by Yasin Abbasi-Yadkori, David Pal, and Csaba Szepesvari) gives rise to
\begin{equation*} 
    \|Z^{\top} \eta \|_{\left(  Z^{\top} Z  + \lambda I_{d} \right)^{-1}} \leq \sqrt{2 \log (2 / \delta)+d \log (1+n_2 L^2 /(\lambda d))}
\end{equation*}
with probability $1-\delta/2$. Combine things together and plugging in $\lambda = 1$, $L^2 = d+ \sqrt{d \log\left( \frac{2 n_2}{\delta}\right)}$, we have with high probability 
$$
\|\theta^* - \hat{\theta}\|_{V_{\lambda}}  = \cO \left( \sqrt{d \log \left(n_2 \sqrt{\log(n_2)} \right)}\right) = \cO \left( \sqrt{d \log n_2 + \frac{1}{2} d \log( \log n_2)} \right) = \cO \left( \sqrt{d \log n_2 } \right).
$$
\end{proof}

\subsection{Proof of Lemma \ref{lmm:distribution_sft}}
\label{pf:distribution_sft}
\begin{proof}
Recall the definition of $\hat{\Sigma}_{\lambda}$ and $\Sigma_{P_a}$ that
\begin{align*}
    \hat{\Sigma}_{\lambda} &= \frac{1}{n_2} X^{\top}X + \frac{\lambda}{n_2} I_D,\\
    \Sigma_{P_a} &= \mathbb{E}_{x \sim P_a} \left[ x x^{\top}\right],
\end{align*}
where $X$ are stack matrix of data supported on $\mathcal{A}$ and $P_a$ is also supported on $\cA$, $\cA$ is the subspace encoded by matrix $A$. The following lemma shows it is equivalent to measure $\operatorname{trace}( \hat{\Sigma}_{\lambda}^{-1} \Sigma_{P_a})$ on $\cA$ subspace. 

\begin{lemma}
\label{lmm:XtoZ}
    For any P.S.D. matrices $\Sigma_1,\Sigma_2\in\mathbb{R}^{d\times d}$ and $A\in\mathbb{R}^{D\times d}$ such that $A^\top A=I_d$, we have
    \begin{align*}
        \mathrm{Tr}\left((\lambda I_D+A\Sigma_1 A^\top)^{-1}A\Sigma_2A^\top\right)=\mathrm{Tr}\left(\left(\lambda I_d+\Sigma_1\right)^{-1}\Sigma_2\right).
    \end{align*}
\end{lemma}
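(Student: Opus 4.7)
The plan is to exploit the key algebraic identity $(\lambda I_D + A\Sigma_1 A^\top)^{-1} A = A(\lambda I_d + \Sigma_1)^{-1}$, which converts the $D$-dimensional inverse on the left side into a $d$-dimensional inverse on the right, after which the cyclic property of the trace finishes the argument. This identity encodes the fact that the only ``active'' directions in $A\Sigma_1 A^\top$ are those in the column span of $A$, so the resolvent acts like a scalar $1/\lambda$ on the orthogonal complement and reduces to $(\lambda I_d + \Sigma_1)^{-1}$ on the subspace itself.

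Concretely, I would verify the identity directly by right-multiplying both sides of the proposed equality by $(\lambda I_d + \Sigma_1)$ and checking the equality holds after left-multiplication by $(\lambda I_D + A\Sigma_1 A^\top)$. Using $A^\top A = I_d$, one computes
\begin{align*}
(\lambda I_D + A\Sigma_1 A^\top)\, A(\lambda I_d + \Sigma_1)^{-1}
&= \lambda A(\lambda I_d + \Sigma_1)^{-1} + A\Sigma_1 (\lambda I_d + \Sigma_1)^{-1} \\
&= A(\lambda I_d + \Sigma_1)(\lambda I_d + \Sigma_1)^{-1} = A,
\end{align*}
which establishes $(\lambda I_D + A\Sigma_1 A^\top)^{-1} A = A(\lambda I_d + \Sigma_1)^{-1}$. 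Substituting this into the left-hand side of the lemma gives
$(\lambda I_D + A\Sigma_1 A^\top)^{-1} A\Sigma_2 A^\top = A(\lambda I_d + \Sigma_1)^{-1} \Sigma_2 A^\top$,
and then cyclicity of the trace together with $A^\top A = I_d$ collapses the outer $A$ and $A^\top$ to yield $\mathrm{Tr}((\lambda I_d + \Sigma_1)^{-1}\Sigma_2)$, as required.

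I do not expect any real obstacle: the only care needed is that the identity holds without assuming $\Sigma_1$ is invertible (so one should avoid Woodbury and instead verify the identity directly, as above), and that $\lambda > 0$ so that both resolvents exist. As an alternative presentation I could extend $A$ to an orthogonal matrix $[A, A_\perp] \in \mathbb{R}^{D \times D}$, block-diagonalize $\lambda I_D + A\Sigma_1 A^\top$ as $[A, A_\perp]\,\mathrm{diag}(\lambda I_d + \Sigma_1,\, \lambda I_{D-d})\,[A, A_\perp]^\top$, and read off both sides by direct block computation; this variant also makes the geometric content transparent but requires introducing $A_\perp$, so the direct verification above is the cleaner route.
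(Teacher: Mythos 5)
Your proof is correct, and it takes a genuinely different (and cleaner) route than the paper. The paper expands both resolvents via Woodbury-type identities
\begin{align*}
(\lambda I_D+A\Sigma_1 A^\top)^{-1}=\tfrac{1}{\lambda}\bigl(I_D-A(\lambda I_d+\Sigma_1)^{-1}\Sigma_1A^\top\bigr),
\qquad
(\lambda I_d+\Sigma_1)^{-1}=\tfrac{1}{\lambda}\bigl(I_d-(\lambda I_d+\Sigma_1)^{-1}\Sigma_1\bigr),
\end{align*}
substitutes the first into the left-hand trace, simplifies with $A^\top A = I_d$, and then recognizes the second identity on the right-hand side. Your proof instead isolates a single intertwining relation $(\lambda I_D + A\Sigma_1 A^\top)^{-1} A = A(\lambda I_d + \Sigma_1)^{-1}$, verifies it by one line of algebra, substitutes, and closes with cyclicity of the trace. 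The two are logically equivalent --- your intertwining identity can be derived from the paper's first Woodbury expansion and vice versa --- but yours requires only one auxiliary fact rather than two, makes the ``resolvent restricted to the column span of $A$'' interpretation immediate, and is easier to check by hand. One small remark: you frame avoiding Woodbury as a way to sidestep invertibility of $\Sigma_1$, but the paper's Woodbury form never inverts $\Sigma_1$ either, so both arguments are valid for arbitrary P.S.D.\ $\Sigma_1$ with $\lambda > 0$. Your alternative block-diagonalization via $[A, A_\perp]$ is also sound and gives the same conclusion, at the cost of introducing an orthogonal complement.
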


The lemma above allows us to abuse notations $\hat{\Sigma}_{\lambda}$ and $\Sigma_{P_a}$ in the following way while keeping the same $\operatorname{trace}( \hat{\Sigma}_{\lambda}^{-1} \Sigma_{P_a})$ value:
\begin{align*}
    \hat{\Sigma}_{\lambda} &= \frac{1}{n_2} Z^{\top}Z + \frac{\lambda}{n_2} I_d,\\
    \Sigma_{P_a} &= \mathbb{E}_{z \sim P^{LD}(a)} \left[ z z^{\top}\right],
\end{align*}
where $Z^{\top} = (z_1, \cdots, z_i, \cdots, z_{n_2})$ s.t. $A z_i = x_i$ and recall notataion $P^{LD}(a) = P_z\left(z \mid \hat{f}(Az) \right)$. 

Given $z \sim {\sf N} (\mu, \Sigma)$, as a proof artifact, let $\hat f (x) = \hat{\theta}^{\top} x + \xi, \xi \sim {\sf N} (0, \nu^2)$ where we will let $\nu \to 0$ in the end, then let $\hat{\beta} = A^{\top} \hat{\theta} \in \mathbb{R}^{d}$, $(z, \hat f(Az))$  has a joint distribution
\begin{equation}
    (z, \hat f) \sim {\sf N} \left(
    \begin{bmatrix}
        \mu \\
        \hat{\beta}^{\top} \mu
    \end{bmatrix},  \begin{bmatrix}
        \Sigma & \Sigma \hat{\beta}  \\
        \hat{\beta}^{\top} \Sigma & \hat{\beta}^{\top} \Sigma \hat{\beta} + \nu^2
    \end{bmatrix}\right).
\end{equation}
Then we have the conditional distribution $z \mid \hat{f}(Az) = a$ following
\begin{equation}
\label{equ:z|hat_f}
    P_z \left( z \mid \hat{f}(Az) = a \right) = {\sf N} \left( \mu+ \Sigma \hat{\beta} \left( \hat{\beta}^{\top} \Sigma \hat{\beta} + \nu^2 \right)^{-1} (a - \hat{\beta}^{\top} \mu), \Gamma \right)
\end{equation}
with $\Gamma:= \Sigma - \Sigma \hat{\beta} \left( \hat{\beta}^{\top} \Sigma \hat{\beta} + \nu^2 \right)^{-1} \hat{\beta}^{\top} \Sigma$.

When $\mu = 0$, we compute $\operatorname{trace}( \hat{\Sigma}_{\lambda}^{-1} \Sigma_{P_a})$ as


\begin{align*}
    \operatorname{trace}( \hat{\Sigma}_{\lambda}^{-1} \Sigma_{P_a}) &= \operatorname{trace} \left( \hat{\Sigma}_{\lambda}^{-1}  \frac{\Sigma \hat{\beta}   \hat{\beta}^{\top} \Sigma}{\left( \|\hat{\beta}\|_{\Sigma}^2 + \nu^2 \right)^2} a^{2} \right) +  \operatorname{trace} \left(\hat{\Sigma}_{\lambda}^{-1} \Gamma \right)\\
    &= \operatorname{trace} \left(   \frac{\hat{\beta}^{\top} \Sigma \hat{\Sigma}_{\lambda}^{-1} \Sigma \hat{\beta}    }{\left( \|\hat{\beta}\|_{\Sigma}^2 + \nu^2 \right)^2} a^{2} \right) +  \operatorname{trace} \left(\hat{\Sigma}_{\lambda}^{-1} \Sigma \right) - \operatorname{trace} \left(\hat{\Sigma}_{\lambda}^{-1}  \frac{\Sigma \hat{\beta} \hat{\beta}^{\top} \Sigma}{\|\hat{\beta}\|_{\Sigma}^2 + \nu^2}\right)\\
    & = \operatorname{trace} \left(   \frac{\Sigma^{1/2}\hat{\beta}\hat{\beta}^{\top} \Sigma^{1/2} \Sigma^{1/2} \hat{\Sigma}_{\lambda}^{-1} \Sigma^{1/2}  }{\left( \|\hat{\beta}\|_{\Sigma}^2 + \nu^2 \right)^2} a^{2} \right) \\
    &\leq    \frac{\|\Sigma \hat{\Sigma}_{\lambda}^{-1} \Sigma\|_{op} \cdot \|\hat{\beta}\|_{\Sigma}^2 }{\left( \|\hat{\beta}\|_{\Sigma}^2 + \nu^2 \right)^2} \cdot a^{2} +  \operatorname{trace} \left(\Sigma^{\frac{1}{2}} \hat{\Sigma}_{\lambda}^{-1} \Sigma^{\frac{1}{2}} \right)
\end{align*}

By the Lemma 3 in \cite{chen2020towards}, it holds that
\begin{equation}
    \|\Sigma^{\frac{1}{2}} \hat{\Sigma}_{\lambda}^{-1} \Sigma^{\frac{1}{2}} - I_{d}\|_{2} \leq O \left( \frac{1}{ \sqrt{\lambda_{\min}n_2 } }\right).
\end{equation}

Therefore,
\begin{align*}
    \operatorname{trace}( \hat{\Sigma}_{\lambda}^{-1} \Sigma_{P_a}) 
    &\leq  \frac{ 1 + \frac{1}{ \sqrt{\lambda_{\min}n_2 }} }{ \|\hat{\beta}\|_{\Sigma}^2} \cdot a^{2} + O \left( d \left( 1 + \frac{1}{ \sqrt{\lambda_{\min}n_2 } }\right) \right).
\end{align*}

Then, what left is to bound $\| \hat{\beta} \|_{\Sigma} = \| \hat{\theta} \|_{A \Sigma A^{\top}} \geq \| \theta^* \|_{A \Sigma A^{\top}} - \| \hat{\theta} - \theta^* \|_{A \Sigma A^{\top}}$ by triangle inequality. On one hand,
\begin{equation}
    \| \theta^* \|_{A \Sigma A^{\top}} = \| \beta^* \|_{\Sigma}.
\end{equation}
On the other hand,
\begin{align*}
    \| \hat{\theta} - \theta^*\|_{A \Sigma A^{\top}} &= \cO \left( \| \hat{\theta} - \theta^*\|_{\hat{\Sigma}_{\lambda}} \right)\\
    &= \cO \left( \frac{\| \hat{\theta} - \theta^*\|_{V_{\lambda}} }{\sqrt{n_2}}\right) \\
    &=  \cO \left( \sqrt{\frac{d \log(n_2)}{n_2}}\right).
\end{align*}
with high probability. Thus when $n_2 = \Omega (\frac{d}{\|\beta^*\|^2_{\Sigma}})$ 
\begin{equation*}
    \|\hat{\beta}\|_{\Sigma} \geq \frac{1}{2}\| \beta^* \|_{\Sigma}.
\end{equation*}

Therefore
\begin{equation}
    \operatorname{trace}( \hat{\Sigma}_{\lambda}^{-1} \Sigma_{P_a}) 
    \leq  \cO \left( \frac{ 1 + \frac{1}{ \sqrt{\lambda_{\min} n_2 }} }{ \| \beta^* \|_{\Sigma}} \cdot a^{2} + d \left( 1 + \frac{1}{ \sqrt{\lambda_{\min}n_2 } }\right) \right)\\
    = \cO \left( \frac{ a^2 }{ \| \beta^* \|_{\Sigma}} + d \right).
\end{equation}
when $n_2 = \Omega ( \max \{\frac{1}{\lambda_{\min}}, \frac{d}{\|\beta^*\|^2_{\Sigma}} \})$.
\end{proof}

\subsection{Proof of lemma \ref{lmm:E2}}
\label{pf:E2}
\begin{proof}
Recall the definition of $g(x)$ that 
\begin{equation*}
    g(x) = {\theta^{*}}^{\top} AA^{\top} x,
\end{equation*}
note that $g(x) = {\theta^{*}}^{\top} x$ when $x$ is supported on $\cA$. Thus,
\begin{align*}
    &\left| \EE_{x \sim P_a} [g(x)] - \EE_{x \sim \hat{P}_a} [g(x)]\right|\\ = &\left| \EE_{x \sim P_a} [{\theta^{*}}^{\top} x] - \EE_{x \sim \hat{P}_a} [{\theta^{*}}^{\top} AA^{\top} x]\right|\\
    \leq & \left| \EE_{x \sim P_a} [{\theta^{*}}^{\top} x] - \EE_{x \sim \hat{P}_a} [{\theta^{*}}^{\top} VV^{\top} x]\right| + \underbrace{\left| \EE_{x \sim \hat{P}_a} [{\theta^{*}}^{\top} VV^{\top} x] - \EE_{x \sim \hat{P}_a} [{\theta^{*}}^{\top} AA^{\top} x]\right|}_{e_1},
\end{align*}
where 
\begin{align}
    \nonumber e_1 &= \left| \EE_{x \sim \hat{P}_a} [{\theta^{*}}^{\top} \left( VV^{\top} - AA^{\top} \right) x] \right|\\
    \nonumber & \leq  \EE_{x \sim \hat{P}_a} \left[ \left( \left\|(VV^{\top} - AA^{\top} \right) x\right\| \right] \\
    \nonumber &\leq  \| VV^{\top} - AA^{\top} \|_{F} \cdot \sqrt{\EE_{x \sim \hat{P}_a} \left[ \left\| x\right\|^2_2 \right]}.
\end{align}

Use notation $ P^{LD}(a) = P (z \mid \hat{f} (Az) = a)$, $ P^{LD}_{t_0}(a) = P_{t_0} (z \mid \hat{f} (Az) = a)$
\begin{align*}
    &\left| \EE_{x \sim P_a} [{\theta^{*}}^{\top} x] - \EE_{x \sim \hat{P}_a} [{\theta^{*}}^{\top} VV^{\top} x]\right|\\
    =& \left| \EE_{z \sim P(z \mid \hat f(Az) = a)} [{\theta^{*}}^{\top} A z] - \EE_{z \sim (U^{\top}V^{\top})_{\#}\hat{P}_a} [{\theta^{*}}^{\top} VU z]\right| \\
    \leq & \left| \EE_{z \sim P^{LD}_{t_0}(a)} [{\theta^{*}}^{\top} A z] - \EE_{z \sim (U^{\top}V^{\top})_{\#}\hat{P}_a} [{\theta^{*}}^{\top} VU z]\right| + \underbrace{\left| \EE_{z \sim P^{LD}_{t_0}(a)} [{\theta^{*}}^{\top} A z] - \EE_{z \sim P^{LD}(a)} [{\theta^{*}}^{\top} A z] \right|}_{e_2},
\end{align*}
here
\begin{align}
     \nonumber e_2 &=  \left|\alpha(t_0) \EE_{z \sim P^{LD}(a) } [{\theta^{*}}^{\top} A z] + h(t_0)\EE_{u \sim \sf N(0, I_d) } [{\theta^{*}}^{\top} A u] - \EE_{z \sim P^{LD}(a)} [{\theta^{*}}^{\top} A z] \right|\\
     \nonumber &\leq (1-\alpha(t_0)) \left| \EE_{z \sim P^{LD}(a) } [{\theta^{*}}^{\top} A z]\right|\\
    \nonumber&\leq t_0 \cdot \EE_{z \sim P^{LD}(a) } [\left\| z\right\|_2].
\end{align}
Then what is left to bound is 
\begin{align*}
   &\left| \EE_{z \sim P^{LD}_{t_0}(a)} [{\theta^{*}}^{\top} A z] - \EE_{z \sim (U^{\top}V^{\top})_{\#}\hat{P}_a} [{\theta^{*}}^{\top} VU z]\right|\\
   \leq & \underbrace{\left| \EE_{z \sim P^{LD}_{t_0}(a)} [{\theta^{*}}^{\top} VU z] - \EE_{z \sim (U^{\top}V^{\top})_{\#}\hat{P}_a} [{\theta^{*}}^{\top} VU z]\right|}_{e_3} +\underbrace{\left| \EE_{z \sim P^{LD}_{t_0}(a)} [{\theta^{*}}^{\top} VU z] - \EE_{z \sim P^{LD}_{t_0}(a)} [{\theta^{*}}^{\top} A z]\right|}_{e_4}.
\end{align*}
Then for term $e_3$, by Lemma \ref{lmm:exp_by_tv}, we get
\begin{align*}
    e_3 &\leq \left| \EE_{z \sim P^{LD}_{t_0}(a)} [\|z\|_2] - \EE_{z \sim (U^{\top}V^{\top})_{\#}\hat{P}_a} [\|z\|_2]\right|\\
    &= \cO \left(TV(\hat{P}_a) \cdot \left(\sqrt{ \EE_{z \sim P^{LD}_{t_0}(a)}[\|z\|^2_2]} +  \sqrt{ \EE_{x \sim \hat{P}_a} [ \| x\|^2_2 ]} \right)  \right),
\end{align*}
where we use $\EE_{z \sim (U^{\top}V^{\top})_{\#}\hat{P}_a} [\|z\|_2] \leq \EE_{x \sim \hat{P}_a} [ \| x\|_2 ]$

For $e_4$, we have
\begin{align}
    \nonumber e_4 &= \left| \EE_{z \sim P^{LD}_{t_0}(a)} [{\theta^{*}}^{\top} (VU-A) z] \right|\\
    \nonumber &= \alpha(t_0) \left| \EE_{z \sim P (a)} [{\theta^{*}}^{\top} (VU-A) z] \right|\\
    \nonumber &\leq \|VU-A\|_{F} \cdot \EE_{z \sim P^{LD}(a) } [\left\| z\right\|_2].
\end{align}
Therefore, by combining things together, we have
\begin{align*}
    \cE_2 \leq& e_1 + e_2 + e_3 + e_4\\
    \leq& \| VV^{\top} - AA^{\top} \|_{F} \cdot \sqrt{\EE_{x \sim \hat{P}_a} [ \| x\|^2_2 ]} + (\|VU-A\|_{F}+t_0) \cdot \sqrt{M(a)} \\
    &+\cO \left(TV(\hat{P}_a) \cdot \left(\sqrt{M(a) + t_0 d} + \sqrt{\EE_{x \sim \hat{P}_a} [ \| x\|^2_2 ]} \right)\right).
\end{align*}

By Lemma~\ref{lmm:diff_results} and Lemma~\ref{lmm:VU_A}, we have
\begin{align*}
    &TV(\hat{P}_a) = \tilde{\cO}\left(\sqrt{\frac{\cT(P(x, \hat{y} = a), P_{x\hat{y}}; \bar{\cS})}{\lambda_{\min}}} \cdot \epsilon_{diff} \right),\\ &\norm{VV^\top - AA^\top}_{\rm F} = \tilde{\cO}\left(\frac{\sqrt{t_0}}{\sqrt{\lambda_{\min}}} \cdot \epsilon_{diff} \right),\\
    & \norm{VU - A}_{\rm F} = \cO(d^{\frac{3}{2}}\norm{VV^\top - AA^\top}_{\rm F}).
\end{align*}

And by Lemma \ref{lmm:exp_x_norm}
\begin{equation*}
    \EE_{x \sim \hat{P}_a} \left[ \|x\|^2_2 \right] = \cO \left( ct_0D + M(a) \cdot  (1+ TV(\hat{P}_a) \right).
\end{equation*}

Therefore. leading term in $\cE_2$ is
\begin{equation*}
    \cE_2 = \cO\left((TV(\hat{P}_a) + t_0)\sqrt{M(a)} \right).
\end{equation*}
By plugging in score matching error $\epsilon^2_{diff} = \tilde \cO \left(\frac{1}{t_0}\sqrt{ \frac{Dd^2 + D^2d} {n_1} } \right)$, we have
\begin{equation*}
    TV(\hat{P}_a) = \tilde{\cO}\left(\sqrt{\frac{\cT(P(x, \hat{y} = a), P_{x\hat{y}}; \bar{\cS})}{\lambda_{\min}}} \cdot  \left(\frac{Dd^2 + D^2d} {n_1} \right)^{\frac{1}{4}} \cdot \frac{1}{\sqrt{t_0}} \right).
\end{equation*}
When $t_0 = \left((Dd^2 + D^2d) / n_1\right)^{1/6}$, it admits the best trade off in $\cE_2$ and $\cE_2$ is bounded by
\begin{equation*}
    \cE_2 = \tilde \cO\left(\sqrt{\frac{\cT(P(x, \hat{y} = a), P_{x\hat{y}}; \bar{\cS})}{\lambda_{\min}}} \cdot  \left(\frac{Dd^2 + D^2d} {n_1} \right)^{\frac{1}{6}} \cdot a \right).
\end{equation*}

\end{proof}

\subsection{Proof of Lemma \ref{lmm:VU_A}}
\label{sec:proof_VU_A}
\begin{proof}
    From Lemma 17 in \cite{chen2023score}, we have
    $$\|U - V^{\top}A\|_{F} = \cO(\|V V^{\top} - A A^{\top}\|_{F}).$$
    Then it suffices to bound 
    $$\left| \|VU - A\|^2_{F} - \|U - V^{\top}A\|^2_{F} \right|,$$ where
    \begin{align*}
        &\|VU - A\|^2_{F} = 2d - \operatorname{trace} \left( U^{\top} V^{\top} A + A^{\top} VU  \right)\\
        &\|U - V^{\top}A\|^2_{F}  = d + \operatorname{trace} \left( A^{\top} V V^{\top}A \right) - \operatorname{trace} \left( U^{\top} V^{\top} A + A^{\top} VU  \right).
    \end{align*}
    Thus $$ \left| \|VU - A\|^2_{F} - \|U - V^{\top}A\|^2_{F} \right| = \left| d - \operatorname{trace} \left( A^{\top} V V^{\top}A \right) \right| = \left| \operatorname{trace} \left( A A^{\top} (V V^{\top} - A A^{\top})\right) \right|,$$ which is because $\operatorname{trace} \left( A^{\top} V V^{\top}A \right)$ is calcualted as
    \begin{align*}
        \operatorname{trace} \left( A^{\top} V V^{\top}A \right) &= \operatorname{trace} \left( A A^{\top} V V^{\top}\right)\\
        &= \operatorname{trace} \left( A A^{\top} A A^{\top}\right) +  \operatorname{trace} \left( A A^{\top} (V V^{\top} - A A^{\top})\right)\\
        &= d +  \operatorname{trace} \left( A A^{\top} (V V^{\top} - A A^{\top})\right).
    \end{align*}
    Then we will bound $\left| \operatorname{trace} \left( A A^{\top} (V V^{\top} - A A^{\top})\right) \right|$ by $\|V V^{\top} - A A^{\top}\|_{F}$, 
    \begin{align*}
        \left| \operatorname{trace} \left( A A^{\top} (V V^{\top} - A A^{\top})\right) \right| 
        &\leq  \operatorname{trace} \left( A A^{\top}\right) \operatorname{trace} \left( \left|V V^{\top} - A A^{\top} \right| \right) \\
        &\leq d \cdot \operatorname{trace} \left( \left|V V^{\top} - A A^{\top} \right| \right)\\
        &\leq d \cdot \sqrt{2d \left\|V V^{\top} - A A^{\top} \right\|^2_{F}}.
    \end{align*}
    Thus, $\|VU - A\|_{F} = \cO \left( d^{\frac{3}{2}} \sqrt{\subangle{V}{A}}\right).$
\end{proof}

\subsection{Proof of Lemma \ref{lmm:exp_by_tv}}
\label{sec:exp_by_tv}
\begin{proof}
When $P_1$ and $P_2$ are Gaussian, $m(z) = \|z\|^2_2$
\begin{align*}
     & \quad \left| \EE_{z \sim P_1}[m(z)] - \EE_{z \sim P_2}[m(z)] \right| \\
     &= \left|\int m(z) \left( p_1(z) - p_2(z) \right) \diff z \right|\\
     &\leq \left|\int_{\|z\|_2 \leq R} \|z\|^2_2 \left( p_1(z) - p_2(z) \right) \diff z  \right| + \int_{\|z\|_2 > R} \|z\|^2_2 p_1(z)\diff z +  \int_{\|z\|_2 > R}  \|z\|^2_2 p_2(z) \diff z \\
     &\leq R^2 \dtv(P_1, P_2) + \int_{\|z\|_2 > R} \|z\|^2_2 p_1(z) \diff z +  \int_{\|z\|_2 > R}  \|z\|^2_2 p_2(z) \diff z. 
\end{align*}
Since $P_1$ and $P_2$ are Gaussains, $\int_{\|z\|_2 > R} \|z\|^2_2 p_1(z)\diff z$  and $\int_{\|z\|_2 > R}  \|z\|^2_2 p_2(z) \diff z$ are bounded by some constant $C_1$ when $R^2 \geq C_2 \max\{\EE_{z \sim P_1} [\|z\|_2^2], \EE_{z \sim P_2} [\|z\|_2^2]\}$ as suggested by Lemma 16 in \cite{chen2023score}.

Therefore,
\begin{align*}
    \EE_{z \sim P_1}[\|z\|^2_2] &\leq \EE_{z \sim P_2}[\|z\|^2_2]  + C_2 \max\{\EE_{P_1} [\|z\|_2^2], \EE_{P_2} [\|z\|_2^2]\} \cdot \dtv(P_1, P_2) + 2C_1\\
    &\leq \EE_{z \sim P_2}[\|z\|^2_2]  + C_2 (\EE_{z \sim P_1} [\|z\|_2^2] +  \EE_{z \sim P_2} [\|z\|_2^2]) \cdot \dtv(P_1, P_2) + 2C_1.
\end{align*}
Then
\begin{equation*}
    \EE_{z \sim P_1}[\|z\|^2_2] = \cO \left(\EE_{z \sim P_2}[\|z\|^2_2]  + \EE_{z \sim P_2} [\|z\|_2^2] \cdot \dtv(P_1, P_2) \right)
\end{equation*}
since $\dtv(P_1, P_2)$ decays with $n_1$.

Similarly, when $m(z) = \|z\|_2$
\begin{align*}
     & \quad \left| \EE_{z \sim P_1}[m(z)] - \EE_{z \sim P_2}[m(z)] \right| \\
     &= \left|\int m(z) \left( p_1(z) - p_2(z) \right) \diff z \right|\\
     & \leq \left|\int_{\|z\|_2 \leq R} \|z\|_2 \left( p_1(z) - p_2(z) \right) \diff z  \right| + \int_{\|z\|_2 > R} \|z\|_2 p_1(z)\diff z +  \int_{\|z\|_2 > R}  \|z\|_2 p_2(z) \diff z \\
     &\leq R \dtv(P_1, P_2) + \sqrt{\int_{\|z\|_2 > R} \|z\|^2_2 p_1(z) \diff z} +  \sqrt{\int_{\|z\|_2 > R}  \|z\|^2_2 p_2(z) \diff z},
\end{align*}
where $\int_{\|z\|_2 > R} \|z\|^2_2 p_1(z)\diff z$  and $\int_{\|z\|_2 > R}  \|z\|^2_2 p_2(z) \diff z$ are bounded by some constant $C_1$ when $R^2 \geq C_2 \max\{\EE_{z \sim P_1} [\|z\|_2^2], \EE_{z \sim P_2} [\|z\|_2^2]\}$ as suggested by Lemma 16 in \cite{chen2023score}.

Therefore,
\begin{align*}
    \left| \EE_{z \sim P_1}[\|z\|_2] - \EE_{z \sim P_2}[\|z\|_2] \right| &\leq \sqrt{C_2 \max\{\EE_{P_1} [\|z\|_2^2], \EE_{P_2} [\|z\|_2^2]\}} \cdot \dtv(P_1, P_2) + 2C_1\\
    &\leq \left(\sqrt{C_2 \EE_{z \sim P_1} [\|z\|_2^2]} +  \sqrt{ C_2 \EE_{z \sim P_2} [\|z\|_2^2]} \right) \cdot \dtv(P_1, P_2) + 2C_1\\
    & = \cO \left( \left(\sqrt{ \EE_{z \sim P_1} [\|z\|_2^2]} +  \sqrt{ \EE_{z \sim P_2} [\|z\|_2^2]} \right) \cdot \dtv(P_1, P_2) \right).
\end{align*}
\end{proof}

\subsection{Proof of Lemma \ref{lmm:exp_x_norm}}
\label{sec:proof_exp_x_norm}
\begin{proof}
Recall from \eqref{equ:z|hat_f} that
\begin{equation*}
    P^{LD}(a) = P_z(z \mid \hat{f} (Az) = a) = \sf N \left( \mu(a), \Gamma \right)
\end{equation*}
with $\mu(a):= \Sigma \hat{\beta} \left( \hat{\beta}^{\top} \Sigma \hat{\beta} + \nu^2 \right)^{-1} a$, $\Gamma:= \Sigma - \Sigma \hat{\beta} \left( \hat{\beta}^{\top} \Sigma \hat{\beta} + \nu^2 \right)^{-1} \hat{\beta}^{\top} \Sigma$.

\begin{align*}
    \EE_{z \sim P^{LD}(a)} \left[  \|z\|^2_2 \right] &= \mu(a)^{\top} \mu(a) + \operatorname{trace}(\Gamma)\\
    &= \frac{\hat{\beta}^{\top} \Sigma^2 \hat{\beta}}{\left( \|\hat{\beta}\|_{\Sigma}^2 + \nu^2 \right)^2} a^{2}  + \operatorname{trace}(\Sigma - \Sigma \hat{\beta} \left( \hat{\beta}^{\top} \Sigma \hat{\beta} + \nu^2 \right)^{-1} \hat{\beta}^{\top} \Sigma)\\
    &=: M(a).
\end{align*}
\begin{align*}
    M(a) &= \cO \left( \frac{\hat{\beta}^{\top} \Sigma^2 \hat{\beta}}{\left( \|\hat{\beta}\|_{\Sigma}^2 \right)^2} a^{2}  + \operatorname{trace} (\Sigma) \right)\\
    &= \cO \left( \frac{a^2}{\|\hat{\beta}\|_{\Sigma}} + d \right),
\end{align*}
and by Lemma \ref{lmm:distribution_sft}
\begin{equation*}
        \| \hat{\beta} \|_{\Sigma} \leq  \frac{1}{2} \| {\beta}^* \|_{\Sigma}.
    \end{equation*}
Thus $\EE_{z \sim P^{LD}(a)} \left[  \|z\|^2_2 \right] = M(a), M(a) = \cO\left( \frac{a^2}{\|{\beta}^*\|_{\Sigma}} + d \right)$.

Thus after adding diffusion noise at $t_0$, we have
for $\alpha(t)=e^{-t / 2}$ and $h(t)=$ $1-e^{-t}$:
\begin{align*}
    \EE_{z \sim P^{LD}_{t_0}(a)} \left[ \|z\|^2_2 \right] &= \EE_{z_0 \sim P^{LD}(a)} \EE_{z \sim \sf N \left( \alpha(t_0) \cdot z_0, h(t_0) \cdot I_d \right)} \left[ \|z\|^2_2 \right] \\
    &= \EE_{z_0 \sim P^{LD}(a)} \left[ \alpha^2(t_0) \|z_0\|^2_2 + d \cdot h(t_0)\right]\\\
    &= \alpha^2(t_0) \cdot \EE_{z_0 \sim P^{LD}(a)} \left[  \|z_0\|^2_2 \right] + d \cdot h(t_0)\\
    &= e^{-t_0} \cdot \EE_{z_0 \sim P^{LD}(a)} \left[  \|z_0\|^2_2 \right] + (1 - e^{-t_0}) \cdot d.
\end{align*}
Thus $ \EE_{z \sim P^{LD}_{t_0}(a)} \left[ \|z\|^2_2 \right] \leq M(a) + t_0 d$.
    
    By orthogonal decomposition we have
\begin{align*}
    \EE_{x \sim \hat{P}_a} \left[ \|x\|^2_2 \right] &\leq \EE_{x \sim \hat{P}_a} \left[ \|(I_D - VV^{\top}) x\|^2_2 \right] + \EE_{x \sim \hat{P}_a} \left[ \| VV^{\top} x\|^2_2 \right]\\
    &= \EE_{x \sim \hat{P}_a} \left[ \|(I_D - VV^{\top}) x\|^2_2 \right] + \EE_{x \sim \hat{P}_a} \left[ \| U^{\top} V^{\top} x\|^2_2 \right],
\end{align*}
where $\EE_{x \sim \hat{P}_a} \left[ \|(I_D - VV^{\top}) x\|^2_2 \right]$ is bounded by \eqref{equ:bkw_Gaussian_l2} and the distribution of $U^{\top}V^{\top} x$, which is $(U^{\top}V^{\top})_{\#}\hat{P}_a$, is close to $\mathbb{P}^{LD}_{t_0} (a)$ up to $TV(\hat{P}_a)$, which is defined in Definition~\ref{def:tv}. Then by Lemma~\ref{lmm:exp_by_tv}, we have
\begin{align*}
    \EE_{x \sim \hat{P}_a} \left[ \| U^{\top} V^{\top} x\|^2_2 \right]
    &= \cO \left(\EE_{z \sim P^{LD}_{t_0}(a)} \left[ \|z\|^2_2 \right] (1+ TV(\hat{P}_a) \right).
\end{align*}
Thus $\EE_{x \sim \hat{P}_a} \left[ \|x\|^2_2 \right] = \cO \left( ct_0D + (M(a) + t_0 d) \cdot  (1+ TV(\hat{P}_a) \right)$.

\end{proof}

\subsection{Proof of Lemma \ref{lmm:XtoZ}}
\begin{proof}
    Firstly, one can verify the following two equations by direct calculation: 
    \begin{align*}
        (\lambda I_D+A\Sigma_1 A^\top)^{-1}&=\frac{1}{\lambda}\left(I_D-A(\lambda I_d+\Sigma_1)^{-1}\Sigma_1A^\top\right),\\
        (\lambda I_d+\Sigma_1)^{-1}&=\frac{1}{\lambda}\left(I_d-(\lambda I_d+\Sigma_1)^{-1}\Sigma_1\right).
    \end{align*}
    Then we have
    \begin{align*}
        (\lambda I_D+A\Sigma_1 A^\top)^{-1}A\Sigma_2 A^\top=&\frac{1}{\lambda}\left(I_D-A(\lambda I_d+\Sigma_1)^{-1}\Sigma_1A^\top\right)A\Sigma_2A^\top\\
        =&\frac{1}{\lambda}\left(A\Sigma_2A^\top-A(\lambda I_d+\Sigma_1)^{-1}\Sigma_1\Sigma_2A^\top\right).
    \end{align*}
    Therefore, 
    \begin{align*}
        \mathrm{Tr}\left((\lambda I_D+A\Sigma_1A^\top)^{-1}A\Sigma_2A^\top\right)=&\mathrm{Tr}\left(\frac{1}{\lambda}\left(A\Sigma_2A^\top-A(\lambda I_d+\Sigma_1)^{-1}\Sigma_1\Sigma_2A^\top\right)\right)\\
        =&\mathrm{Tr}\left(\frac{1}{\lambda}\left(\Sigma_2-(\lambda I_d+\Sigma_1)^{-1}\Sigma_1\Sigma_2\right)\right)\\
        =&\mathrm{Tr}\left(\frac{1}{\lambda}\left(I_d-(\lambda I_d+\Sigma_1)^{-1}\Sigma_1\right)\Sigma_2\right)\\
        =&\mathrm{Tr}\left(\left(\lambda I_d+\Sigma_1\right)^{-1}\Sigma_2\right),
    \end{align*}
    which has finished the proof. 
\end{proof}

\section{Theory in Nonparametric Setting}
\label{sec:nonparametric}
Built upon the insights from Section~\ref{sec:linear}, we provide analysis to the nonparametric reward and general data sampling setting. We generalize Assumption~\ref{assumption:linear_reward} to the following.
\begin{assumption}\label{assumption:nonparametric}
The ground truth reward $f^*$ is decomposed as
\begin{align*}
f^*(x) = g^*(\px) - h^*(\ox),
\end{align*}
where $g^*(\px)$ is $\alpha$-H\"{o}lder continuous for $\alpha \geq 1$ and $h^*(\ox)$ is nondecreasing in terms of $\norm{\ox}_2$ with $h^*(0) = 0$. Moreover, $g^*$ has a bounded H\"{o}lder norm, i.e., $\norm{g^*}_{\cH^{\alpha}} \leq 1$. 
\end{assumption}
H\"{o}lder continuity is widely studied in nonparametric statistics literature \citep{gyorfi2002distribution, tsybakov2008intro}. $h^*$ here penalizes off-support extrapolation.

Under Assumption~\ref{assumption:nonparametric}, we use nonparametric regression for estimating $f^*$. Specifically, we specialize \eqref{eq:reward_regression} in Algorithm~\ref{alg:cdm} to
\begin{align*}
\hat{f}_\theta \in \argmin_{f_\theta \in \cF} \frac{1}{2n} \sum_{i=1}^{n_1} (f_\theta(x_i) - y_i)^2,
\end{align*}
where $\cF = {\rm NN}(L, M, J, K, \kappa)$ is chosen to be a class of neural networks. Hyperparameters in $\cF$ will be chosen properly in Theorem~\ref{thm:nonparametric}.

Our theory also considers generic sampling distributions on $x$. Since $x$ lies in a low-dimensional subspace, this translates to a sampling distribution assumption on latent variable $z$.
\begin{assumption}
\label{assumption:tail_non}
The latent variable $z$ follows distribution $P_{z}$ with density $p_z$, such that there exists constants $B, C_1, C_2$ verifying $p_{z}(z) \leq (2\pi)^{-(d+1)/2} C_1 \exp\left(-C_2 \norm{z}_2^2 / 2\right)$ whenever $\norm{z}_2 > B$. And $c_0 I_d \preceq \EE_{z \sim P_z} \left[ z z^{\top}\right]$.
\end{assumption}
Assumption~\ref{assumption:tail_non} says $P_z$ has a light tail, which is standard in high-dimensional statistics \citep{vershynin2018high, wainwright2019high}. Assumption~\ref{assumption:tail_non} also encodes distributions with a compact support. Furthermore, we assume that the curated data $(x, \hat{y})$ induces Lipschitz conditional scores. Motivated by \citet{chen2023score}, we show that the linear subspace structure in $x$ leads to a similar conditional score decomposition $\nabla \log p_t(x | \hat{y}) = s_{\parallel}(x, \hat{y}, t) + s_{\perp}(x, \hat{y}, t)$, where $s_{\parallel}$ is the on-support score and $s_{\perp}$ is the orthogonal score. The decomposition for conditional score is as \eqref{eq:conditional_score_decomp}, which applies to both parametric and non-parametric cases. The following assumption is imposed on $s_{\parallel}$.
\begin{assumption}
\label{asmp:lipschitz}
The on-support conditional score function $s_{\parallel}(x, \hat{y}, t)$ is Lipschitz with respect to $x, \hat{y}$ for any $t \in (0, T]$, i.e., there exists a constant $\Clip$, such that for any $x, \hat{y}$ and $x^{\prime}, \hat{y}^{\prime}$, it holds
\begin{align*}
\norm{s_{\parallel}(x, \hat{y}, t) - s_{\parallel}(x', \hat{y}', t)}_2 \leq \Clip \norm{x - x^{\prime}}_2 + \Clip |\hat{y} - \hat{y}^{\prime}|_2.
\end{align*}
\end{assumption}
Lipschitz score is commonly adopted in existing works \citep{chen2022sampling, lee2023convergence}. Yet Assumption~\ref{asmp:lipschitz} only requires the Lipschitz continuity of
the on-support score, which matches the weak regularity conditions in \citet{lee2023convergence, chen2023score}. We then choose the score network architecture similar to that in the linear reward setting, except we replace $m$ by a nonlinear network. Recall the linear encoder and decoder estimate the representation matrix $A$.

We consider feedforward networks with ReLU activation functions as concept classes $\cF$ and $\cS$ for nonparametric regression and conditional score matching. Generalization to different network architectures poses no real difficulty. Given an input $x$, neural networks compute
\begin{align}\label{eq:fnn}
f_{\rm NN}(x) = W_L \sigma( \dots \sigma(W_1 x + b_1) \dots ) + b_{L},
\end{align}
where $W_i$ and $b_i$ are weight matrices and intercepts, respectively.
We then define a class of neural networks as
\begin{align*}
& {\rm NN}(L, M, J, K, \kappa) = \Big \{f: f~\text{in the form of \eqref{eq:fnn} with $L$ layers and width bounded by~}M, \\
& \sup_{x} \norm{f(x)}_2 \leq K, \max\{\norm{b_i}_\infty, \norm{W_i}_\infty\} \leq \kappa ~\text{for}~i = 1, \dots, L, ~\text{and}~ \sum_{i=1}^L \big( \norm{W_i}_0 + \norm{b_i}_0 \big) \leq J \Big\}.
\end{align*}
For the conditional score network, we will additionally impose some Lipschitz continuity requirement, i.e., $\norm{f(x) -f(y)}_2 \leq \clip\norm{x - y}_2$ for some Lipschitz coefficient $\clip$.

Recall the distribution shift defined in Definition~\ref{def:dst_sft} that
\begin{equation*}
\textstyle \cT(P_1, P_2; \cL) = \sup_{l \in \cL} \EE_{x \sim P_1}[l(x)] / \EE_{x \sim P_2}[l(x)]
\end{equation*}
for arbitrary two distributions $P_1, P_2$ and function class $\cL$. Similar to the parametric case, use notation $\hat{P}_a:= \hat{P}(\cdot | \hat{y} = a)$ and $P_a:= P(\cdot | \hat{y} = a)$. Then we can bound $\subopt(\hat{P}_a ; y^* = a)$ in Theorem~\ref{thm:nonparametric} in terms of non-parametric regression error, score matching error and distribution shifts in both regression and score matching.

\begin{theorem}\label{thm:nonparametric}
Suppose Assumption~\ref{assumption:subspace}, \ref{assumption:nonparametric}, \ref{assumption:tail_non} and \ref{asmp:lipschitz} hold. Let $\delta(n) = \frac{d\log\log n}{\log n}$. Properly chosen $\cF$ and $\cS$, with high probability, running Algorithm~\ref{alg:cdm} with a target reward value $a$ and stopping at $ t_0 = \left(n_1^{-\frac{2 - 2\delta(n_1)}{d+6}} + Dn_1^{-\frac{d+4}{d+6}}\right)^{\frac{1}{3}}$ gives rise to $\subangle{V}{A} \leq \tilde{\cO}\left( \frac{1}{c_0} \left(n_1^{-\frac{2 - 2\delta(n_1)}{d+6}} + Dn_1^{-\frac{d+4}{d+6}} \right) \right)$ and
\begin{align*}
    & \quad \subopt(\hat{P}_a ; y^* = a) \\
    &\leq \underbrace{\sqrt{\cT(P(x | \hat{y} = a), P_{x}; \bar{\cF})}  \cdot \tilde{\cO}\left(n_2^{-\frac{\alpha - \delta(n_2)}{2\alpha + d}} + D / n_2\right)}_{\cE_1}\\
    &+ \underbrace{ \left(\sqrt{\frac{\cT(P(x, \hat{y} = a), P_{x\hat{y}}; \bar{\cS})}{c_0}} \cdot \|g^*\|_{\infty} +\sqrt{M(a)}\right) \cdot \tilde \cO \left( \left(n_1^{-\frac{2 - 2\delta(n_1)}{d+6}} + Dn_1^{-\frac{d+4}{d+6}}\right)^{\frac{1}{3}}\right)}_{\cE_2}\\
    &+ \underbrace{\EE_{x \sim \hat{P}_a} [h^*(\ox)]}_{\cE_3},
\end{align*}
where $M(a): = \EE_{z \sim \mathbb{P} ( a)} [\|z\|^2_2]$ and
\begin{align*}
   \bar{\cF}:= \{|f^*(x) - f(x)|^2: f \in \cF\}, \quad 
   \bar{\cS} = \left\{\frac{1}{T - t_0} \int_{t_0}^T \EE_{x_t \mid x} \norm{\nabla \log p_t(x_t \mid y) - s(x_t,  y, t)}_2^2 \diff t : s \in \cS\right\},
\end{align*}
$\cE_3$ penalizes the component in $\hat{P}_a$ that is off the truth subspace. The function classes $\cF$ and $\cS$ are chosen as $\cF = {\rm NN}(L_f, M_f, J_f, K_f, \kappa_f)$ with
\begin{align*}
& L_f = \cO(\log n_2), \ M_f = \cO\left(n_2^{-\frac{d}{d+2\alpha}} (\log n_2)^{d/2} \right), \ J_f = \cO\left(n_2^{-\frac{d}{d+2\alpha}} (\log n_2)^{d/2+1} \right) \\
& \hspace{1.8in} K_f = 1, \ \kappa_f = \cO\left(\sqrt{\log n_2}\right)
\end{align*}
and $\cS = {\rm NN}(L_s, M_s, J_s, K_s, \kappa_s)$ with
\begin{align*}
& L_s = \cO(\log n_1 + d), \ M_s = \cO\left(d^{d/2} n_1^{-\frac{d+2}{d+6}} (\log n_1)^{d/2} \right), \ J_s = \cO\left(d^{d/2} n_1^{-\frac{d+2}{d+6}} (\log n_1)^{d/2+1} \right) \\
& \hspace{1.5in} K_s = \cO\left(d\log (dn_1) \right), \ \kappa_s = \cO\left(\sqrt{d \log (n_1 d)}\right).
\end{align*}
Moreover, $\cS$ is also Lipschitz with respect to $(x, y)$ and the Lipschitz coefficient is $\clip = \cO\left(10d \Clip \right)$.
\end{theorem}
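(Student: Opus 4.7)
} My plan is to mirror the three-term decomposition used for Theorem~\ref{thm:parametric} (see \S\ref{sec:dcp}), and then re-bound each piece with the corresponding nonparametric ingredients. Writing $f^*(x) = g^*(\px) + h^*(\ox)$ and inserting $\pm \hat f(x)$ and $\pm \EE_{P_a}[g^*(\px)]$ into $a - \EE_{\hat P_a}[f^*(x)]$ gives
\begin{align*}
\subopt(\hat{P}_a;y^* = a) \le \underbrace{\EE_{x \sim P_a}\lvert \hat f(x) - g^*(x)\rvert}_{\cE_1} + \underbrace{\bigl\lvert\EE_{P_a}[g^*(\px)] - \EE_{\hat P_a}[g^*(\px)]\bigr\rvert}_{\cE_2} + \underbrace{\EE_{\hat P_a}[h^*(\ox)]}_{\cE_3},
\end{align*}
exactly as in \S\ref{sec:dcp}; here $P_a$ is supported on the subspace so $f^*=g^*$ on it. Then $\cE_3$ is left as the off-support penalty.

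For $\cE_1$, the idea is to first use Cauchy--Schwarz with a change-of-measure factor that switches the expectation from $P_a$ to the data marginal $P_x$, which produces the factor $\sqrt{\cT(P(x\mid\hat y=a),P_x;\bar\cF)}$. The remaining square loss under $P_x$ is then controlled by standard nonparametric least-squares theory for ReLU networks: with the choice of $\cF = {\rm NN}(L_f,M_f,J_f,K_f,\kappa_f)$ stated, the approximation theory of Yarotsky together with the usual empirical process / covering bound yields the H\"older minimax rate $n_2^{-\alpha/(2\alpha+d)}$, up to the $\delta(n_2)$ term coming from the logarithmic factors. A separate $D/n_2$ contribution accounts for the ambient-dimensional noise in ridge/least-squares, exactly as in the parametric bound.

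For $\cE_2$, I would invoke the nonparametric analogue of Lemma~\ref{lmm:scr_mtc_err}, whose proof parallels \S\ref{sec:score_matching_err} but replaces the parametric covering number of $\cS$ by the ReLU-network covering number; Assumption~\ref{asmp:lipschitz} together with Yarotsky-style approximation of the Lipschitz on-support score $s_\parallel$ (with $\clip=\cO(d\Clip)$) yields score-matching error $\epsilon_{\rm diff}^2 = \tilde\cO\bigl(t_0^{-1}(n_1^{-(2-2\delta(n_1))/(d+6)} + Dn_1^{-(d+4)/(d+6)})\bigr)$. Plugging this into Lemma~\ref{lmm:diff_results} produces both the subspace-recovery bound $\subangle{V}{A}$ (via \eqref{equ:subspace_rcv}) and the TV bound $TV(\hat P_a)$ with a distribution-shift prefactor $\sqrt{\cT(P(x,\hat y=a),P_{x\hat y};\bar\cS)/c_0}$ (via \eqref{equ:tv_Pq}). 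Following the same chain as \S\ref{pf:E2}, I would split $\cE_2$ by inserting $V$-rotations and early-stopping terms; the leading term becomes $(TV(\hat P_a) + t_0)\cdot \|g^*\|_\infty\cdot(\text{factor involving } \sqrt{M(a)})$, bounded via Lemma~\ref{lmm:exp_by_tv} and \ref{lmm:exp_x_norm}. The exponent $1/3$ in the theorem statement then comes from balancing $\epsilon_{\rm diff}/\sqrt{t_0}$ (the contribution of TV) against $t_0$ (the early-stopping bias), giving the stated $t_0 = (n_1^{-(2-2\delta(n_1))/(d+6)} + Dn_1^{-(d+4)/(d+6)})^{1/3}$.

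The main obstacle I anticipate is the nonparametric score matching step: unlike the parametric Gaussian case, the true conditional score $s_\parallel$ is only assumed Lipschitz, so the covering number and approximation rates of ${\rm NN}(L_s,M_s,\dots)$ must be chosen so that the approximation bias and the stochastic error match at the desired rate $n_1^{-(2-2\delta(n_1))/(d+6)}$, while simultaneously respecting the truncation radius and the Lipschitz ceiling $\clip=\cO(d\Clip)$ demanded by Assumption~\ref{asmp:lipschitz}. Carefully tracking these constants --- particularly the interaction between the $D$-dependent ambient score component and the $d$-dependent intrinsic component --- is the delicate part; once the score-matching bound is in place, the rest of the proof is a direct transcription of \S\ref{pf:parametric} with the new error rates substituted into the corresponding inequalities.
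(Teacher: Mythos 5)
Your proposal is correct and mirrors the paper's proof almost step for step: the same three-term decomposition from \S\ref{sec:dcp}, the same treatment of $\cE_1$ by introducing a class-restricted distribution-shift factor and then applying a nonparametric least-squares rate (the paper cites a result equivalent to your Yarotsky-style bound), the same route for $\cE_2$ via a nonparametric version of Lemma~\ref{lmm:scr_mtc_err} feeding into Lemma~\ref{lmm:diff_results}, and the same balancing of $\epsilon_{\rm diff}\cdot t_0^{-1/2}$ against $t_0$ to produce the $1/3$ exponent. The only small inaccuracy in your sketch is the factoring of the leading term in $\cE_2$: in the paper the $\|g^*\|_\infty$ multiplies only the $TV(\hat P_a)$ contribution while $\sqrt{M(a)}$ multiplies only the early-stopping bias $t_0$, rather than both appearing together as a common factor; also the paper passes from $\EE_{P_a}|f^*-\hat f|$ to the squared loss via Jensen plus the restricted-divergence supremum rather than literally Cauchy--Schwarz, though the resulting bound is the same.
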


\paragraph{Remark.} The proof is provided in Appendix~\ref{pf:nonparametric}. Quantities $\cT(P(x | \hat{y} = a), P_{x}; \bar{\cF})$ and $\cT(P(x, \hat{y} = a), P_{x\hat{y}}; \bar{\cS})$ depend on $a$ characterizing the distribution shift. The $\delta(n)$ terms account for the unbounded domain of $x$, which is negligible when $n$ is large. In the main paper, we omit $\delta(n)$ in the regret bound.

\section{Omitted Proofs in Section~\ref{sec:nonparametric}}
\label{pf:nonparametric_all}

\subsection{Conditional Score Decomposition and Score Matching Error}\label{pf:nonparametric_score}
\begin{lemma}\label{lemma:score_error}
Under Assumption \ref{assumption:subspace}, \ref{assumption:tail_non} and \ref{asmp:lipschitz}, with high probability
\begin{equation*}
    \frac{1}{T-t_0}\int_{t_0}^T \norm{\hat{s}(\cdot, t) - \nabla \log p_t(\cdot)}_{L^2(P_t)}^2 \diff t \leq \epsilon_{diff}^2(n_1)
\end{equation*}
with $\epsilon_{diff}^2(n_1) = \tilde{\cO}\left(\frac{1}{t_0} \left(n_1^{-\frac{2 - 2\delta(n_1)}{d+6}} + Dn_1^{-\frac{d+4}{d+6}}\right)\right)$ for $\delta(n_1) = \frac{d\log\log n_1}{\log n_1}$.
\end{lemma}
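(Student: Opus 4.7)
The plan is to adapt the argument of Lemma \ref{lmm:scr_mtc_err} from the parametric Gaussian-design case to the nonparametric setting, swapping covering-number and approximation-theoretic estimates for ReLU networks in place of the explicit linear-score analysis. First I would reuse the conditional score decomposition
\begin{align*}
\nabla_x \log p_t(x, y) \;=\; \frac{1}{h(t)} A\, u(A^\top x, y, t) \;-\; \frac{1}{h(t)} x,
\end{align*}
from \eqref{eq:conditional_score_decomp}, which only exploits the subspace structure of Assumption \ref{assumption:subspace} and therefore applies verbatim. This decomposition justifies that the network class $\cS = \{s_{V,\psi}\}$ has the right encoder-decoder skeleton $\tfrac{1}{h(t)}(V\psi(V^\top x,y,t) - x)$, with $V$ targeting $A$ and $\psi \in {\rm NN}(L_s, M_s, J_s, K_s, \kappa_s)$ targeting the unknown nonlinear map $u$. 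The Lipschitz hypothesis of Assumption \ref{asmp:lipschitz} transfers to Lipschitz continuity of $u$ in $(A^\top x, y)$ for each $t$, which is what enables nonparametric approximation at an intrinsic $d$-dependent rate.

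Second, I would run the standard oracle inequality, truncating to $\{\|x\|_2 \le R,\ |y|\le R\}$ with $R = \tilde\cO(\sqrt{d})$ using the light-tail Assumption \ref{assumption:tail_non}, so that
\begin{align*}
\cL(\hat s) \;\le\; \underbrace{\sup_{s\in\cS}\bigl[\cL^{\rm trunc}(s)-\hat\cL^{\rm trunc}(s)\bigr]}_{\text{stochastic}} \;+\; \underbrace{\sup_{s\in\cS}\bigl[\cL(s)-\cL^{\rm trunc}(s)\bigr]}_{\text{truncation}} \;+\; \underbrace{\inf_{s\in\cS}\cL(s)-\cL(s^\star)}_{\text{approximation}}.
\end{align*}
The stochastic term is controlled by symmetrization and Dudley chaining, yielding a bound of order $\tfrac{K_s^2+R^2}{t_0(T-t_0)}\sqrt{\cN(\cS,1/n_1)/n_1}$, where the $\log$ covering number of ${\rm NN}(L_s,M_s,J_s,K_s,\kappa_s)$ satisfies the textbook estimate $\cN(\cS,\epsilon) = \tilde\cO(J_s L_s \log(\kappa_s/\epsilon))$. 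The light-tail truncation error contributes the $\delta(n_1) = d\log\log n_1/\log n_1$ correction in the exponent.

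Third, I would handle the approximation term by reducing to approximating the on-support map $u$ on a compact domain in $\mathbb{R}^{d+2}$ (the extra two coordinates are $y$ and $t$). Off-support, the shortcut $-x/h(t)$ in $\cS$ captures the exact form $-\tfrac{1}{h(t)}(I_D-AA^\top)x$ up to choosing $V$ close to $A$, so the only residual off-support cost scales with $\|VV^\top - AA^\top\|$, which gives the $D n_1^{-(d+4)/(d+6)}$ summand after feeding back the subspace-recovery guarantee (the coupling with Lemma \ref{lmm:diff_results}/\cite{chen2023score}). For the on-support piece, standard ReLU approximation for Lipschitz functions produces an $L^\infty$-error of order $J_s^{-1/(d+2)}$ (modulo log factors), multiplied by the $1/h(t)$ prefactor; integrating against $t\in[t_0,T]$ gives the $1/t_0$ factor. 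Balancing approximation vs.\ stochastic error by the prescribed $J_s = \tilde\cO(d^{d/2} n_1^{(d+2)/(d+6)})$ then produces the advertised $n_1^{-2(1-\delta(n_1))/(d+6)}$ rate.

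The main obstacles I anticipate are: (i) book-keeping the blow-up $1/h(t) \asymp 1/t$ of the score near zero so the $1/t_0$ prefactor surfaces cleanly after integrating the pointwise error bounds over $t\in[t_0,T]$; (ii) constructing the approximating network inside $\cS$ while respecting the imposed Lipschitz coefficient $\clip = \cO(10 d\, \Clip)$, which constrains the choice of $\kappa_s$ and $J_s$ (one typically needs to clip activations and use a sparsified Yarotsky-style construction); and (iii) ensuring the truncated region $\{\|x\|_2\le R,|y|\le R\}$ is wide enough that the tail mass contribution matches the logarithmic $\delta(n_1)$ correction rather than degrading the polynomial rate, which uses Assumption \ref{assumption:tail_non} in essential fashion.
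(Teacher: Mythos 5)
Your proposal is correct and follows essentially the same route as the paper, which simply cites \cite[Theorem 1]{chen2023score} applied with the conditioning variable $\hat y$ appended (input dimension $d+1$) and the on-support Lipschitz hypothesis of Assumption~\ref{asmp:lipschitz}; you have unpacked the internal structure of that citation (score decomposition, truncation-plus-oracle inequality, Dudley chaining on the ReLU class, and intrinsic-dimension approximation) rather than invoking it as a black box. The only small mismatch is bookkeeping: you treat the approximation domain as $(A^\top x, y, t) \in \RR^{d+2}$, whereas the paper's citation counts the effective input dimension as $d+1$ with the time dependence handled inside \cite{chen2023score}'s construction; both accountings balance to the same $n_1^{-2/(d+6)}$ rate once paired with the prescribed $J_s$, so this does not affect the conclusion.
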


\begin{proof}
\cite[Theorem 1]{chen2023score} is easily adapted here to prove Lemma~\ref{lemma:score_error} with the input dimension $d+1$ and the Lipschitzness in Assumption~\ref{asmp:lipschitz}. Network size of $\cS$ is implied by \cite[Theorem 1]{chen2023score} with $\epsilon = n_1^{-\frac{1}{d+6}}$ accounting for the additional dimension of reward $\hat{y}$ and then the score matching error follows.
\end{proof}

\subsection{Proof of Theorem~\ref{thm:nonparametric}}\label{pf:nonparametric}
\paragraph{Additional Notations:} Similar as before, use $P_t^{LD}(z)$ to denote the low-dimensional distribution on $z$ corrupted by diffusion noise. Formally, $p_t^{LD}(z) =  \int  \phi_t(z'|z)p_z(z) \diff z$ with $\phi_t( \cdot | z)$ being the density of ${\sf N}(\alpha(t)z, h(t)I_d)$. $P^{LD}_{t_0} (z \mid \hat{f} (Az) = a)$ the corresponding conditional distribution on $\hat{f}(Az)= a$ at $t_0$, with shorthand as $P^{LD}_{t_0}(a)$. Also give $P_{z}(z \mid \hat{f} (Az) = a)$ a shorthand as $P^{LD}(a)$.
\subsubsection{$\subopt(\hat{P}_a ; y^* = a)$ Decomposition}
By the same argument as in $\S$\ref{sec:dcp}, we have
\begin{align*}
    \subopt(\hat{P}_a ; y^* = a) 
    &\leq \underbrace{\EE_{x \sim P_a} \left[ \left|f^*(x) - \hat{f}(x)\right| \right]}_{\cE_1} + \underbrace{\left| \EE_{x \sim P_a} [g^*(\px)] - \EE_{x \sim \hat{P}_a} [g^*(\px)]\right|}_{\cE_2} \\
    & \quad +  \underbrace{\EE_{x \sim \hat{P}_a} [h^*(\ox)]}_{\cE_3}.
\end{align*}

\subsubsection{$\cE_1$: Nonparamtric Regression Induced Error}
\paragraph{Nonparametric Regression Error of $\hat f$} Since $P_z$ has a light tail due to Assumption~\ref{assumption:tail_non}, by union bound and \cite[Lemma 16]{chen2023score}, we have
\begin{align*}
\PP(\exists~x_i~\text{with}~\norm{x_i}_2 > R~\text{for}~i = 1, \dots, n_2) \leq n_2 \frac{C_1d2^{-d/2+1}}{C_2\Gamma(d/2 + 1)} R^{d-2} \exp(-C_2 R^2 / 2),
\end{align*}
where $C_1, C_2$ are constants and $\Gamma(\cdot)$ is the Gamma function. Choosing $R = \cO(\sqrt{d\log d + \log \frac{n}{\delta}})$ ensures $\PP(\exists~x_i~\text{with}~\norm{x_i}_2 > R~\text{for}~i = 1, \dots, n_2) < \delta$. On the event $\cE = \{\norm{x_i}_2 \leq R ~\text{for all}~ i = 1, \dots, n_2\}$, denoting $\delta(n_2) = \frac{d \log \log n_2}{\log n_2}$, we have
\begin{align*}
\norm{f^* - \hat{f}}^2_{L^2} = \tilde{\cO}\left(n_2^{-\frac{2(\alpha - \delta(n_2))}{d + 2\alpha}} \right)
\end{align*}
by \cite[Theorem 7]{nakada2020adaptive} with a new covering number of $\cS$, when $n_2$ is sufficiently large. The corresponding network architecture follows from Theorem 2 in ``Nonparametric Regression on Low-Dimensional Manifolds using Deep ReLU Networks : Function Approximation and Statistical Recovery''.

We remark that linear subspace is a special case of low Minkowski dimension. Moreover, $\delta(n_2)$ is asymptotically negligible and accounts for the truncation radius $R$ of $x_i$'s (see also \cite[Theorem 2 and 3]{chen2023score}). The covering number of $\cS$ is $\tilde{\cO}\left(d^{d/2} n^{-\frac{d}{\alpha}} (\log n_2)^{d/2} + Dd\right)$ as appear in \cite[Proof of Theorem 2]{chen2023score}.
Therefore
\begin{align*}
    \EE_{x \sim P_a} \left[ \left|f^*(x) - \hat{f}(x)\right| \right] &\leq \sqrt{\EE_{x \sim P_a} \left[ \left|f^*(x) - \hat{f}(x)\right|^2 \right]}\\
    &\leq \sqrt{\cT(P(x | \hat{y} = a), P_{x}; \bar{\cF}) \cdot \norm{f^* - \hat{f}}^2_{L^2}}\\
    &= \sqrt{\cT(P(x | \hat{y} = a), P_{x}; \bar{\cF})}  \cdot \tilde{\cO}\left( n_2^{-\frac{\alpha - \delta(n_2)}{2\alpha + d}} + D / n_2 \right).
\end{align*}

\subsubsection{$\cE_2$: Diffusion Induced On-support Error}

Suppose $L_2$ score matching error is $\epsilon^2_{diff} (n_1)$, i.e.
\begin{align*}
\frac{1}{T - t_0} \int_{t_0}^T \EE_{x, \hat f} \norm{\nabla_x \log p_t(x, \hat f) - s_{\hat w}(x, \hat f, t)}_2^2 \diff t \leq \epsilon^2_{diff} (n_1),
\end{align*}

We revoke Definition~\ref{def:tv} measuring the distance between $\hat{P}_a$ to $P_a$ that
$$TV(\hat{P}_a):= \dtv \left(P^{LD}_{t_0} (z \mid \hat{f} (Az) = a), (U^{\top} V^{\top})_{\#}\hat{P}_a \right).$$ Lemma~\ref{lmm:diff_results} applies to nonparametric setting, so we have
 \begin{align}
        &(I_D - VV^{\top}) x \sim {\sf N}(0, \Lambda), \quad \Lambda \prec c t_0 I_D,\\
       &\subangle{V}{A} = \tilde{\cO}\left(\frac{t_0 }{c_0} \cdot \epsilon^2_{diff}(n_1) \right).
    \end{align}
In addition,
    \begin{equation}
        TV(\hat{P}_a) = \tilde{\cO}\left(\sqrt{\frac{\cT(P(x, \hat{y} = a), P_{x\hat{y}}; \bar{\cS})}{c_0}} \cdot \epsilon_{diff}(n_1) \right).
    \end{equation}

$\cE_2$ will be bounded by
\begin{align*}
    \cE_2 &= \left| \EE_{x \sim P_a} [g^*(x)] - \EE_{x \sim \hat{P}_a} [g^*(x)]\right|\\
    &\leq \left| \EE_{x \sim P_a} [g^*(A A^{\top} x)] - \EE_{x \sim \hat{P}_a} [g^*(VV^{\top} x)]\right| + \left| \EE_{x \sim \hat{P}_a} [g^*(VV^{\top} x) - g^*(AA^{\top} x)] \right|, 
\end{align*}
where for $\left| \EE_{x \sim \hat{P}_a} [g^*(VV^{\top} x) - g^*(AA^{\top} x)] \right|$, we have
\begin{equation}
    \left| \EE_{x \sim \hat{P}_a} [g^*(VV^{\top} x) - g^*(AA^{\top} x)] \right| \leq \EE_{x \sim \hat{P}_a} [\|VV^{\top} x - AA^{\top} x\|_2] \leq \|VV^{\top} - AA^{\top}\|_{F} \cdot \EE_{x \sim \hat{P}_a} [\|x\|_2].
\end{equation}
For the other term $\left| \EE_{x \sim P_a} [g^*(A A^{\top} x)] - \EE_{x \sim \hat{P}_a} [g^*(VV^{\top} x)]\right|$, we will bound it with $TV(\hat{P}_a)$.
\begin{align*}
    &\left| \EE_{x \sim P_a} [g^*(A A^{\top} x)] - \EE_{x \sim \hat{P}_a} [g^*(VV^{\top} x)]\right| \\
     \leq& \left| \EE_{z \sim \mathbb{P}_{t_0} (a)} [g^*(Az)] - \EE_{z \sim (V^{\top})_{\#}\hat{P}_a}  [g^*(V z)]\right| + \left| \EE_{z \sim \mathbb{P} (a)} [g^*(A z)] - \EE_{z \sim \mathbb{P}_{t_0} (a)} [g^*(A z)]\right|\\
\end{align*}
Since any $z \sim \mathbb{P}_{t_0} (a)$ can be represented by $\alpha(t_0) z+ \sqrt{h(t_0)} u$, where $z \sim \mathbb{P} (a), u \sim {\sf N}(0, I_d)$, then
\begin{align*}
    &\EE_{z \sim \mathbb{P}_{t_0} (a)} [g^*(A z)]\\
    &= \EE_{z \sim \mathbb{P} (a), u \sim {\sf N}(0, I_d)} [g^*(\alpha(t)Az + \sqrt{h(t)}A u))]\\
    &\leq \EE_{z \sim \mathbb{P} (a)} [g^*(\alpha(t_0)Az))] + \sqrt{h(t_0)} \EE_{ u \sim {\sf N}(0, I_d)} [\|A u\|_2]\\
    &\leq \EE_{z \sim \mathbb{P} (a)} [g^*(Az))] + (1-\alpha(t_0)) \EE_{z \sim \mathbb{P} (a)} [\|Az\|_2] + \sqrt{h(t_0)} \EE_{ u \sim \sf N(0, I_d)} [\|A u\|_2],
\end{align*}
thus 
\begin{equation*}
    \left| \EE_{z \sim \mathbb{P} (a)} [g^*(A z)] - \EE_{z \sim \mathbb{P}_{t_0} (a)} [g^*(A z)]\right| \leq t_0 \cdot \EE_{z \sim \mathbb{P} ( a)} [\|z\|_2] + d ,
\end{equation*}
where we further use $1 - \alpha(t_0) = 1-e^{- t_0/2} \leq t_0/2$, $h(t_0) \leq 1$.

As for $\left| \EE_{z \sim \mathbb{P}_{t_0} (a)} [g^*(A z)] - \EE_{z \sim (V^{\top})_{\#}\hat{P}_a}  [g^*(V z)]\right|$, we have
\begin{align*}
    &\left| \EE_{z \sim \mathbb{P}_{t_0} (a)} [g^*(A z)] - \EE_{z \sim (U^{\top}V^{\top})_{\#}\hat{P}_a}  [g^*(V U z)]\right|\\
    = & \left| \EE_{z \sim \mathbb{P}_{t_0} (a)} [g^*(VU z)] - \EE_{z \sim (U^{\top} V^{\top})_{\#}\hat{P}_a}  [g^*(V U z)]\right| + \left| \EE_{z \sim \mathbb{P}_{t_0} (a)} [g^*(A z)] - \EE_{z \sim \mathbb{P}_{t_0} (a)}  [g^*(V U z)]\right|,
\end{align*}
where 
\begin{equation*}
    \left| \EE_{z \sim \mathbb{P}_{t_0} (a)} [g^*(A z)] - \EE_{z \sim \mathbb{P}_{t_0} (a)}  [g^*(V U z)]\right| \leq \|A - VU\|_{F} \cdot \EE_{z \sim \mathbb{P}_{t_0}(a)} [\|z\|_2],
\end{equation*}
and 
\begin{equation*}
    \left| \EE_{z \sim \mathbb{P}_{t_0} (a)} [g^*(VU z)] - \EE_{z \sim (U^{\top} V^{\top})_{\#}\hat{P}_a}  [g^*(V U z)]\right| \leq TV(\hat{P}_a) \cdot \|g^*\|_{\infty}.
\end{equation*}

Combining things up, we have
\begin{align*}
    \cE_2 \leq& \|VV^{\top} - AA^{\top}\|_{F} \cdot \EE_{x \sim \hat{P}_a} [\|x\|_2] + \|A - VU\|_{F} \cdot \EE_{z \sim \mathbb{P}_{t_0}(a)} [\|z\|_2]\\
    &+ t_0 \cdot \EE_{z \sim \mathbb{P} ( a)} [\|z\|_2] + d  +TV(\hat{P}_a) \cdot \|g^*\|_{\infty}.
\end{align*}
Similar to parametric case, Let $M(a): = \EE_{z \sim \mathbb{P} ( a)} [\|z\|^2_2]$, then
$$\EE_{z \sim \mathbb{P}_{t_0}(a)} [\|z\|^2_2] \leq M(a) + t_0 d,$$ expect for in nonparametric case, we can not compute $M(a)$ out as it is not Gaussian.
But still, with higher-order terms in $n_1^{-1}$ hided, we have
\begin{align*}
    \cE_2 &= \cO \left( TV(\hat{P}_a) \cdot \|g^*\|_{\infty} + t_0 M(a)\right)\\
    &= \tilde \cO \left( \sqrt{\frac{\cT(P(x, \hat{y} = a), P_{x\hat{y}}; \bar{\cS})}{c_0}} \cdot \epsilon_{diff}(n_1) \cdot \|g^*\|_{\infty} + t_0 M(a)\right).
\end{align*}

\section{Additional Experimental Results}\label{appendix:exp}
\subsection{Simulation}
We generate the latent sample $z$ from standard normal distribution $z\sim\sf{N}(0, I_d)$ and set $x=Az$ for a randomly generated orthonormal matrix $A\in\mathbb{R}^{D\times d}$. The dimensions are set to be $d=16, D=64$. The reward function is set to be $f(x)=(\theta^\star)^\top x_\parallel + 5\Vert x_\perp\Vert^2_2$, where $\theta^\star$ is defined by $A\beta^\star$. We generate $\beta^\star$ by uniformly sampling from the unit sphere. 

When estimating $\hat{\theta}$, we set $\lambda=1.0$. The score matching network is based on the UNet implementation from \url{https://github.com/lucidrains/denoising-diffusion-pytorch}, where we modified the class embedding so it accepts continuous input. The predictor is trained using $8192$ samples and the score function is trained using $65536$ samples. When training the score function, we choose Adam as the optimizer with learning rate $8\times 10^{-5}$. We train the score function for $10$ epochs, each epoch doing a full iteration over the whole training dataset with batch size $32$. 

For evaluation, the statistics is computed using $2048$ samples generated from the diffusion model. The curve in the figures is computed by averaging over $5$ runs. 

\subsection{Directed Text-to-Image Generation}
\textbf{Samples of high rewards and low rewards from the ground-truth reward model.} In Section~\ref{sec:experiment:2}, the ground-truth reward model is built by replacing the final prediction layer of the ImageNet pre-trained ResNet-18 model with a randomly initialized linear layer of scalar outputs. To investigate the meaning of this randomly-generated reward model, we generate images using Stable Diffusion and filter out images with rewards $\geq 0.4$ (positive samples) and rewards $\leq -0.4$ (negative samples) and pick two typical images for each; see Figure~\ref{fig:s}.  We note that in real-world use cases, the ground-truth rewards are often measured and annotated by human labors according to the demands.

\begin{figure}[htb!]
    \centering
    \begin{subfigure}[h]{0.24\textwidth}
        \includegraphics[width = \textwidth]{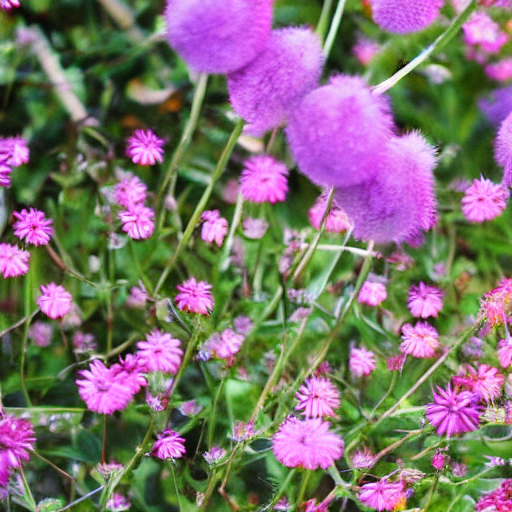}
        \caption{A positive sample}
        \label{fig:a}
    \end{subfigure} 
    \begin{subfigure}[h]{0.24\textwidth}
        \includegraphics[width = \textwidth]{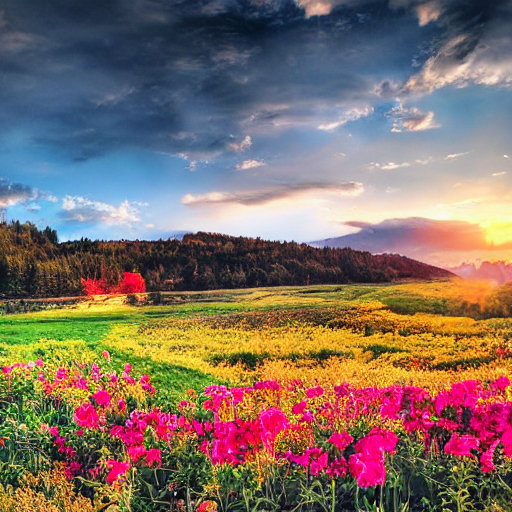}
        \caption{A positive sample}
        \label{fig:b}
    \end{subfigure}
    \hspace{1pt}
    \begin{subfigure}[h]{0.24\textwidth}
        \includegraphics[width = \textwidth]{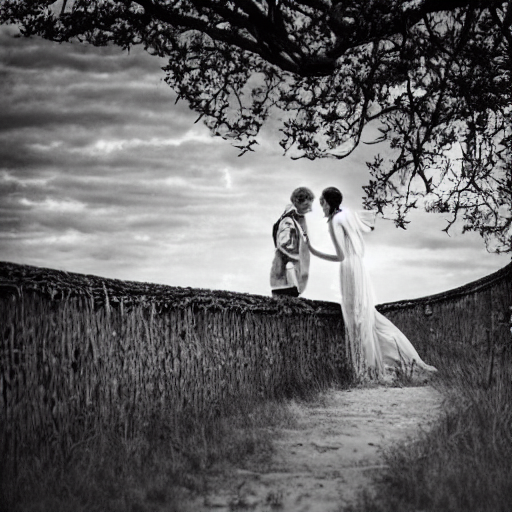}
        \caption{A negative sample}
        \label{fig:c}
    \end{subfigure} 
    \begin{subfigure}[h]{0.24\textwidth}
        \includegraphics[width = \textwidth]{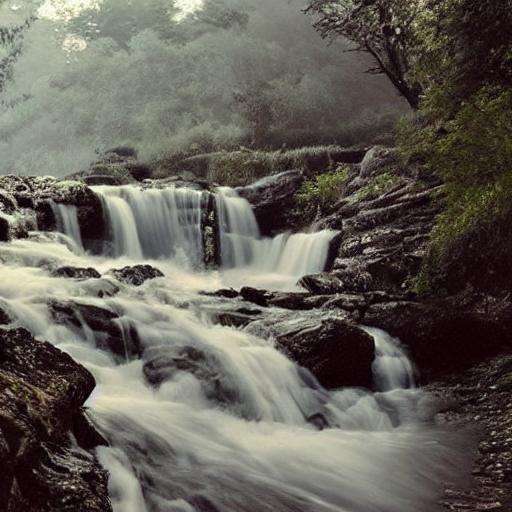}
        \caption{A negative sample}
        \label{fig:d}
    \end{subfigure}
    \caption{Random samples with high rewards and low rewards. }
\label{fig:s}
\end{figure}

\textbf{Training Details.} In our implementation, as the Stable Diffusion model operates on the latent space of its VAE, we build a 3-layer ConvNet with residual connections and batch normalizations on top of the VAE latent space. We train the network using Adam optimizer with learning rate $0.001$ for 100 epochs.

\newpage

\end{document}